\definecolor{bgcolor}{rgb}{0.76,0.88,0.50}
\definecolor{bgcolor0}{rgb}{0.93,0.99,1}
\definecolor{bgcolor1}{rgb}{0.8,1,1}
\definecolor{bgcolor2}{rgb}{0.8,1,0.8}
\definecolor{bgcolor3}{rgb}{0.50,0.90,0.50}
\definecolor{mydarkgreen}{rgb}{0.15,0.5,0.26}
\newcommand{\algname}[1]{{\sf #1}}
\newcommand{\norm}[1]{\left\| #1 \right\|}
\newcommand{\inp}[2]{\left\langle#1,#2\right\rangle} 
\newcommand{\flr}[1]{\left\lfloor #1\right\rfloor} 
\newcommand{\R}{\mathbb{R}} 
\newcommand{\N}{\mathbb{N}} 
\newcommand{\Exp}[1]{{\mathbb{E}}\left[#1\right]}
\newcommand{\ExpSub}[2]{{\mathbb{E}}_{#1}\left[#2\right]}
\newcommand{\cO}{\mathcal{O}}
\theoremstyle{plain}
\newtheorem{theorem}{Theorem}[section]
\newtheorem*{theorem*}{Theorem}
\newtheorem{proposition}[theorem]{Proposition}
\newtheorem{lemma}[theorem]{Lemma}
\theoremstyle{definition}
\newtheorem{assumption}[theorem]{Assumption}
\theoremstyle{remark}
\newenvironment{hproof}{%
  \proof}{\endproof}
\newcommand{\eqdef}{:=}
\newcommand{\vast}{\bBigg@{4}}
\newcommand{\green}{\color{mydarkgreen}}
\newtcolorbox{theorembox}{
  breakable,
  colback=gray!20,
  colframe=gray!20,
  boxrule=0.8pt,
  before skip=7pt,
  after skip=7pt,
  boxsep=-1mm,
}
\title{Asynchronous Policy Gradient Aggregation for Efficient Distributed Reinforcement Learning}
\author{
Alexander Tyurin \\
\phantom{,}AXXX, Moscow, Russia \\
\phantom{,}Applied AI Institute, Moscow, Russia \\
\And
Andrei Spiridonov \\
AXXX, Moscow, Russia
\And
Varvara Rudenko \\
AXXX, Moscow, Russia
}
\newcommand{\parens}[1]{\left(#1\right)}
\begin{document}

\maketitle

\begin{abstract}
We study distributed reinforcement learning (RL) with policy gradient methods under \emph{asynchronous and parallel computations and communications}. While non-distributed methods are well understood theoretically and have achieved remarkable empirical success, their distributed counterparts remain less explored, particularly in the presence of heterogeneous asynchronous computations and communication bottlenecks. We introduce two new algorithms, \algname{Rennala NIGT} and \algname{Malenia NIGT}, which implement asynchronous policy gradient aggregation and achieve state-of-the-art efficiency. In the homogeneous setting, \algname{Rennala NIGT} provably improves the total computational and communication complexity while supporting the \algname{AllReduce} operation. In the heterogeneous setting, \algname{Malenia NIGT} simultaneously handles asynchronous computations and heterogeneous environments with strictly better theoretical guarantees. Our results are further corroborated by experiments, showing that our methods significantly outperform prior approaches.
\end{abstract}

\section{Introduction}
Reinforcement Learning (RL) is one of the most important tools in modern deep learning and large language model training. There are many applications, including robotics and control \citep{kober2013reinforcement,levine2016end}, recommender systems \citep{chen2019top}, and game playing such as Go, chess, and StarCraft II \citep{silver2017mastering,vinyals2019grandmaster}. Moreover, RL has used in the fine-tuning of large language models through reinforcement learning from human feedback (RLHF) \citep{christiano2017deep}.

Modern machine learning and RL models are typically trained in a distributed fashion, where many agents, workers, or GPUs perform asynchronous computations in parallel and communicate periodically \citep{dean2012large,recht2011hogwild,goyal2017accurate}. This distributed paradigm enables training at scale by efficiently utilizing massive computational resources. However, distributed training poses many challenges, including communication bottlenecks \citep{alistarh2017qsgd,lin2017deep}, system heterogeneity across agents \citep{lian2015asynchronous}, and stragglers and fault tolerance \citep{chen2016revisiting}. These challenges have motivated a large body of work on communication-efficient methods, asynchronous and decentralized optimization algorithms, and federated learning frameworks \citep{konevcny2016federated,mcmahan2017communication,kairouz2021advances}.

Policy Gradient (PG) methods are among the most popular and effective classes of RL algorithms \citep{williams1992simple,sutton1998reinforcement}. They have demonstrated remarkable empirical performance across a wide range of challenging tasks. In non-distributed settings, significant recent progress has been made, and PG-based methods are now well studied and better understood \citep{Yuan2022b,Masiha2022,Ding2022,lan2023policy,fatkhullin2023stochastic}. However, distributed RL scenarios are less studied, and many critical challenges and open questions remain.

In this work, we study state-of-the-art policy-based methods in \emph{parallel and asynchronous} setups, where a large number of agents collaborate to maximize the expected return of a discrete-time discounted Markov decision process. Our focus is on addressing \emph{computational and communication challenges} that arise in distributed RL.
\subsection{Related Work}
\textbf{Non-distributed PG methods.} In tabular settings, \citet{lan2023policy} study policy mirror descent and establish global guarantees for discrete state–action spaces. \citet{alfano2022linear,yuan2022linear} analyze PG methods with log-linear parameterizations. Our focus, however, is on general continuous state–action spaces with trajectory sampling and stochastic policy gradients. In this setting, \citet{Yuan2022b} analyze \algname{Vanilla PG} and prove a sample complexity of $\cO(\varepsilon^{-4})$ for finding an $\varepsilon$-stationary point. Several PG variants have since improved this rate. In particular, \citet{huang2020momentum,Ding2022,xu2019sample,xu2020improved,fan2021fault} introduce momentum-based PG methods, but their analyses rely on importance sampling (IS) and therefore require strong additional assumptions (e.g., variance of IS weights is bounded, as in Assumption~4.4 of \citep{xu2019sample}), which we ideally want to avoid. More recently, \citet{fatkhullin2023stochastic}, based on \citep{cutkosky2020momentum}, propose a normalized PG method that improves the rate for finding an $\varepsilon$-stationary point to $\cO(\varepsilon^{-7/2})$ under much weaker assumptions, requiring only second-order smoothness (without explicit use of Hessians). Hessian-aided PG methods have also been investigated \citep{fatkhullin2023stochastic,ganesh2024global}; however, in practice, the stochastic Hessian–vector products sampled in these methods have much higher variance than stochastic gradients \citep{fatkhullin2023stochastic}.

\textbf{Parallel and asynchronous optimization.} Distributed optimization is typically considered in two settings: homogeneous and heterogeneous. Both settings are equally important. The former, in the context of RL, implies that all agents access the same environment and data, while the heterogeneous setting, which is more prevalent in federated learning (FL) \citep{konevcny2016federated}, arises when the data and environments differ due to privacy constraints or the infeasibility of sharing environments.

In the \emph{homogeneous} case, many studies have analyzed asynchronous variants of stochastic gradient descent, such as \citep{lian2015asynchronous,feyzmahdavian2016asynchronous,stich2020error,sra2016adadelay}. A common limitation of these works is the requirement that the delays in stochastic gradient indices remain bounded. Consequently, their results provide weaker guarantees on computational time complexity compared to more recent analyses \citep{cohen2021asynchronous,koloskova2022sharper,mishchenko2022asynchronous,tyurin2023optimal,maranjyan2025ringmaster}, which avoid this restriction.
In the \emph{heterogeneous} case, many works have also been proposed, including \citep{mishchenko2022asynchronous,koloskova2022sharper,wu2022delay,tyurin2023optimal,islamov2024asgrad}.

In order to compare parallel methods, \citet{mishchenko2022asynchronous} proposed the \emph{$h_i$-fixed computation model} in the context of stochastic optimization (see Assumption~\ref{ass:time} in the context of RL), assuming that it takes at most $h_i$ seconds to calculate one stochastic gradient on agent $i.$ \citet{mishchenko2022asynchronous,koloskova2022sharper} provided new analyses and proofs of \algname{Asynchronous SGD}, showing that their versions of \algname{Asynchronous SGD} have the time complexity $\cO((\nicefrac{1}{n} \sum_{i=1}^n \nicefrac{1}{h_i})^{-1}(\nicefrac{1}{\varepsilon} + \nicefrac{1}{n \varepsilon^2})),$ where the Big-$\cO$ notation is up to an error tolerance $\varepsilon$ and ${h_i}.$ Surprisingly, this complexity can be improved\footnote{Notice that $\min\limits_{m \in [n]} g(m) \leq g(n)$ for any $g \,:\, \N \to \R$.} to $\Theta(\min_{m \in [n]} [(\nicefrac{1}{m} \sum_{i=1}^{m} \nicefrac{1}{h_i})^{-1} (\nicefrac{1}{\varepsilon} + \nicefrac{1}{m \varepsilon^2})]),$ achieved by the \algname{Rennala SGD} method from \citep{tyurin2023optimal}. For the heterogeneous setting, they also developed the \algname{Malenia SGD} method with the time complexity $\Theta(\nicefrac{\max_{i \in [n]} h_i}{\varepsilon} + (\nicefrac{1}{n} \sum_{i=1}^{n} \nicefrac{1}{h_i})\nicefrac{1}{n \varepsilon^2}).$ Moreover, \citet{tyurin2023optimal} proved that these complexities are optimal under smoothness and the bounded stochastic gradient assumption. 

In the distributed RL domain, numerous works have been proposed. For instance, \citet{fan2021fault, ganesh2024global} analyzed federated reinforcement learning with fault-tolerance approaches in settings with adversarial attacks, \citet{jin2022federated, ganesh2024global, labbi2025global} studied federated policy gradient in synchronous homogeneous and heterogeneous settings, while \citet{lu2021decentralized, chen2021communication} investigated decentralized policy optimization with a focus on communication efficiency.

\textbf{The current state-of-the-art method in asynchronous and parallel RL.} The most relevant and central work for our study is the recent result of \citet{lan2024asynchronous}, which considers an asynchronous PG method called \algname{AFedPG} and provides the current state-of-the-art time complexity under our assumptions. Their method builds on the idea of applying the \algname{NIGT} method \citep{cutkosky2020momentum,fatkhullin2023stochastic} to the asynchronous RL domain, aided by recent insights from \citet{koloskova2022sharper,mishchenko2022asynchronous}. It is worth noting that this combination is not straightforward and requires several technical steps to adapt the previous analysis to RL problems.

However, \algname{AFedPG} has several limitations:
i) although motivated by and designed for federated learning, this method does not support the heterogeneous setting;
ii) due to its greedy update strategy, the method has \emph{suboptimal communication time complexity} (as we illustrate in Section~\ref{sec:time_comm} and Table~\ref{table:complexities}) and does not support the \algname{AllReduce} operation, which is essential in most distributed environments;
iii) finally, its \emph{computational time complexity is also suboptimal} and can be improved (Table~\ref{table:complexities}).

\subsection{Contributions}
We develop two new methods, \algname{Rennala NIGT} and \algname{Malenia NIGT}, which achieve new state-of-the-art computational and communication time complexities in the homogeneous and heterogeneous settings, respectively. Our theory strictly improves upon the result of \citet{lan2024asynchronous} in all aspects \emph{within the same setting, without requiring additional assumptions}:
i) we develop and analyze \algname{Malenia NIGT}, which supports asynchronous computations in the heterogeneous setup;
ii) our \emph{communication time complexity} is provably better in the homogeneous setup. For example, in the small--$\varepsilon$ regime, \algname{Rennala NIGT} improves \algname{AFedPG}'s bound of $\cO(\kappa \varepsilon^{-3})$ to $\cO(\kappa \varepsilon^{-2}).$ And in the worst case, the communication complexity of \algname{AFedPG} can provably be as large as $\cO(\kappa \varepsilon^{-7/2})$ (see Section~\ref{sec:time_comm}). Furthermore, both  \algname{Rennala NIGT} and \algname{Malenia NIGT} support \algname{AllReduce}; iii) our \emph{computational time complexity} is also strictly better in the homogeneous setup. In the small--$\varepsilon$ regime, we improve their complexity $\tilde{\mathcal{O}}((\nicefrac{1}{n}\sum^n_{i=1}\nicefrac{1}{\dot{h}_i})^{-1}(\nicefrac{n^{4/3}}{\varepsilon^{7/3}} + \nicefrac{1}{n \varepsilon^{7/2}}))$ to $\tilde{\mathcal{O}}(\min_{m\in [n]}[(\nicefrac{1}{m}\sum^m_{i=1}\nicefrac{1}{\dot{h}_i})^{-1} (\nicefrac{1}{\varepsilon^{2}} + \nicefrac{1}{m \varepsilon^{7/2}})]),$ which can be arbitrarily smaller (see the discussion in Section~\ref{sec:main}). Even in the classical optimization setting, we significantly improve upon the current state-of-the-art results of \citet{tyurin2023optimal,maranjyan2025ringmaster} by leveraging momentum and normalization techniques, together with a mild second-order smoothness assumption. As a final contribution, we establish a new lower bound, Theorem~\ref{thm:lower-p} from Section~\ref{sec:proof_sketch}, which enables us to quantify the remaining optimality gap.
\begin{table*}
    \caption{\textbf{Homogeneous Setup.} The time complexities of distributed methods to find an $\varepsilon$-stationary point in problem \eqref{eq:main} up to an error tolerance $\varepsilon,$ number of agents $n,$ computation times $\dot{h}_i,$ communication time $\kappa$ (see Section~\ref{sec:time_comm}), and ignoring logarithmic factors.}
    \label{table:complexities}
    \centering 
    \scriptsize  
    \begin{threeparttable}
      \begin{tabular}[t]{cccccc}
\toprule
     \multirow{2}{*}{\bf \makecell{\phantom{a} \\ Method}} 
  & \multicolumn{3}{c}{\bf \phantom{aaaa} Total Time Complexity} 
  & \multirow{2}{*}{\bf \makecell{\phantom{a} \\ Support \\ \algname{AllReduce}}} \\
\cmidrule(lr){2-4}
& \bf \makecell{Computational \\ Complexity} 
& 
& \bf \makecell{Communication \\ Complexity} 
& \\
       \midrule
       \makecell{\algname{(Synchronous) Vanilla PG} \\ \citep{Yuan2022b}} & ${\color{red} \max\limits_{i \in [n]} \dot{h}_i} \left(\frac{1}{\varepsilon^2} + \frac{1}{n {\color{red} \varepsilon^{4}}}\right)$ & $+$ & $ \kappa \times \left(\frac{1}{\varepsilon^2} + \frac{1}{n {\color{red} \varepsilon^{4}}}\right) $ & \textbf{Yes} \\
       \midrule
       \makecell{\algname{(Synchronous) NIGT} \\ \citep{fatkhullin2023stochastic}} & $\Omega\left({\color{red} \max\limits_{i \in [n]} \dot{h}_i} \times \frac{1}{{n} \varepsilon^{7/2}}\right)$ & $+$ & $ \kappa \times \frac{1}{{\color{red} \varepsilon^{7/2}}} $ & \textbf{Yes} \\
       \midrule
       \makecell{\algname{Rennala PG/SGD} \\ \citep{tyurin2023optimal}} & $\min\limits_{m\in [n]}\left[\left(\frac{1}{m}\sum\limits^m_{i=1}\frac{1}{\dot{h}_i}\right)^{-1}\left(\frac{1}{\varepsilon^2} + \frac{1}{m {\color{red} \varepsilon^{4}}}\right)\right]$ & $+$ &$ \kappa \times \frac{1}{\varepsilon^2} $ & \textbf{Yes} \\
       \midrule
       \makecell{\algname{AFedPG} \\ \citep{lan2024asynchronous}} & $\phantom{aaaaa}{\color{red} \left(\frac{1}{n}\sum\limits^n_{i=1}\frac{1}{\dot{h}_i}\right)^{-1}}\left({\color{red}\frac{n^{4/3}}{\varepsilon^{7/3}}} + \frac{1}{{n} \varepsilon^{7/2}}\right)$ & $+$ & $ \kappa \times \frac{1}{{\color{red} \varepsilon^{3}}} $ & {\color{red} No} \\
       \midrule
       \midrule
       \makecell{\algname{Rennala NIGT} (\textbf{new}) \\ (Theorem~\ref{thm:rennala})} & ${\min\limits_{m\in [n]}\left[\left(\frac{1}{m}\sum\limits^m_{i=1}\frac{1}{\dot{h}_i}\right)^{-1}\left(\frac{1}{\varepsilon^2} + \frac{1}{m \varepsilon^{7/2}}\right)\right]}$ & $+$ & ${\kappa \times \frac{1}{\varepsilon^2}}$ & \textbf{Yes} \\
       \midrule
       \makecell{Lower bound\textsuperscript{\color{blue}(a)} (\textbf{new}) \\ (Theorem~\ref{thm:lower-p})} & ${\min\limits_{m\in [n]}\left[\left(\frac{1}{m}\sum\limits^m_{i=1}\frac{1}{\dot{h}_i}\right)^{-1}\left(\frac{1}{\varepsilon^{3/2}} + \frac{1}{m \varepsilon^{7/2}}\right)\right]}$ & $+$ & ${\kappa \times \frac{1}{\varepsilon^{12/7}}}$ & --- \\
       \midrule
    \multicolumn{5}{c}{\makecell{\bf The total time complexity of \algname{Rennala NIGT} is better than that of previous methods: \\ 
i) for small-$\varepsilon,$ its \emph{computational complexity} is much better than that of \algname{Vanilla PG}, \algname{NIGT}, and \algname{Rennala PG}; \\ 
ii) \algname{Rennala NIGT} supports \algname{AllReduce} and its \emph{communication complexity} is much better than that of \algname{AFedPG} for small-$\varepsilon;$ \\ 
iii) its \emph{computational complexity} can be arbitrarily better\footnotemark than that of \algname{AFedPG}, whose complexity can grow with $n.$}} \\
      \bottomrule
      \end{tabular}
      \begin{tablenotes}
      \scriptsize
      \item [{\color{blue}(a)}] The lower bound applies to methods that only access unbiased stochastic gradients of an $(L_g,L_h)$--twice smooth function with $\sigma$--bounded variance, treating \eqref{eq:ATnItCmpHBlEQ} as a black-box oracle. 
      Extending it to methods exploiting the full structure of $J$ in \eqref{eq:main} and closing the gap remains open. See discussion in Sections~\ref{sec:time_comm} and \ref{sec:proof_sketch}.
      \end{tablenotes}
      \end{threeparttable}
 \end{table*}
 \footnotetext{\tiny $\min\limits_{m\in [n]}\left[\left(\frac{1}{m}\sum\limits^m_{i=1}\frac{1}{\dot{h}_i}\right)^{-1}\left(\frac{1}{\varepsilon^2} + \frac{1}{m \varepsilon^{7/2}}\right)\right] \leq \left(\frac{1}{n}\sum\limits^n_{i=1}\frac{1}{\dot{h}_i}\right)^{-1}\left(\frac{1}{\varepsilon^2} + \frac{1}{n \varepsilon^{7/2}}\right) \leq {\color{red} \left(\frac{1}{n}\sum\limits^n_{i=1}\frac{1}{\dot{h}_i}\right)^{-1}}\left({\color{red}\frac{n^{4/3}}{\varepsilon^{7/3}}} + \frac{1}{{n} \varepsilon^{7/2}}\right)$}

We believe our new methods, analysis, insights, and numerical experiments are important for the RL community, as they achieve state-of-the-art time complexities in asynchronous and parallel RL (Tables~\ref{table:complexities} and \ref{tbl:heter}), an area that is rapidly gaining importance with the growing availability of computational resources.
\section{Problem Formulation and Preliminaries}
\label{sec:problem}
We consider a discrete-time discounted Markov decision process $\mathcal{M}=(\mathcal{S},\mathcal{A},\mathcal{P},r,\rho,\gamma),$ where $\mathcal{S}$ and $\mathcal{A}$ denote the state and action spaces, $\mathcal{P} : \mathcal{S} \times \mathcal{A} \times \mathcal{S} \to \R$ is the transition kernel, $r : \mathcal{S} \times \mathcal{A} \to [-r_{\text{max}}, r_{\text{max}}]$ is the reward function bounded by $r_{\text{max}}>0,$ $\rho$ is the initial state distribution, and $\gamma \in (0, 1)$ is the discount factor \citep{Puterman2014}.

We assume $n$ agents operate asynchronously in parallel and interact with independent copies of the environment. Each agent selects an action $a_{t} \in \mathcal{A}$ in a state $s_{t} \in \mathcal{S}$ according to the parameterized density function $\pi_\theta(\cdot|s_t)$. The agent then receives the reward $r(s_{t}, a_{t})$, and the environment transitions to the next state $s_{t+1}$ according to the distribution $\mathcal{P}(\cdot|s_{t},a_{t})$. One of the main problems in RL is to find $\theta \in \R^d$ that maximize the expected return
\begin{align}
\label{eq:main}
\textstyle \max\limits_{\theta \in \R^d} \left\{J(\theta)=\mathbb{E}_{(s_t,a_t)_{t \geq 0}}\left[\sum\limits^\infty_{t=0}\gamma^tr(s_t,a_t)\right]\right\},
\end{align}
where $s_0 \sim \rho(\cdot),$ $a_t \sim \pi_\theta(\cdot|s_t),$ and $s_{t + 1} \sim \mathcal{P}(\cdot|s_{t},a_{t})$ for all $t \geq 0.$ In practice, it is infeasible to generate an infinite trajectory. Instead, each agent generates a finite trajectory $\tau = (s_0,a_0,\cdots,s_{H-1},a_{H-1})$ of length $H \geq 1$ from the density $p(\tau|\pi_\theta)\eqdef\rho(s_0)\pi_\theta(a_0|s_0)\prod^{H - 1}_{t=1}\mathcal{P}(s_t|s_{t-1},a_{t-1})\pi_\theta(a_t|s_t).$
We define the truncated expected return:
\begin{align}
\textstyle {J}_H(\theta)=\mathbb{E}_{\tau}\left[\sum\limits^{H - 1}_{t=0}\gamma^tr(s_t,a_t)\right],
\end{align}
which approximates $J(\theta)$. Given the rewards, we can compute the truncated stochastic policy gradient
\begin{align}
\label{eq:ATnItCmpHBlEQ}
\textstyle g_H(\tau,\theta)=\sum\limits^{H-1}_{t=0}\left(\sum\limits^{H-1}_{h=t}\gamma^hr(s_{h},a_{h})\right) \nabla\log\pi_{\theta}(a_{t}|s_{t}),
\end{align}
where $\tau = (s_0,a_0,\cdots,s_{H-1},a_{H-1}) \sim p(\cdot|\pi_\theta).$ The random vector $g_H(\tau,\theta)$ is an unbiased estimator
of $\nabla{J}_H(\theta)$ \citep{sutton1998reinforcement,Masiha2022}:
$\nabla{J}_H(\theta)=\mathbb{E}_{\tau}[g_H(\tau, \theta)].$
In total, sampling trajectories and calculating $g_H,$ the goal of the agents is to find an $\varepsilon$--stationary point $\theta \in \R^d,$ i.e., a point $\theta$ such that $\Exp{\norm{\nabla{J} (\theta)}} \leq \varepsilon.$
\subsection{Assumptions}
We consider the standard assumptions from the RL literature:
\begin{assumption}\label{Assumption_1}
    For all $s,a\in\mathcal{S\times A},$ the function $\theta \to \pi_\theta(a,s)$ is positive, twice continuously differentiable, $\norm{\nabla\log\pi_\theta(a|s)}\leq M_g,$ and $\norm{\nabla^2 \log\pi_\theta(a|s)}\leq M_h$ for all $\theta \in \mathbb{R}^d,$ where $M_g, M_h > 0.$
\end{assumption}
\begin{table}
    \caption{\textbf{Heterogeneous Setup.} The time complexities of distributed methods to find an $\varepsilon$-stationary point in problem \eqref{eq:main} up to an error tolerance $\varepsilon,$ computation times $\dot{h}_i,$ communication time $\kappa$ (see Section~\ref{sec:time_comm}), and ignoring logarithmic factors.}
    \label{tbl:heter}
    \centering 
    \scriptsize  
      \begin{tabular}[t]{cccccc}
\toprule
     \multirow{2}{*}{\bf \makecell{\phantom{a} \\ Method}} 
  & \multicolumn{3}{c}{\bf \phantom{aaaa} Total Time Complexity} 
  & \multirow{2}{*}{\bf \makecell{\phantom{a} \\ Support \\ \algname{AllReduce}}} \\
\cmidrule(lr){2-4}
& \bf \makecell{Computational \\ Complexity} 
& 
& \bf \makecell{Communication \\ Complexity} 
& \\
       \midrule
       \makecell{\algname{(Synchronous) Vanilla PG} \\ \citep{Yuan2022b}} & $\max\limits_{i \in [n]} \dot{h}_i \frac{1}{\varepsilon^2} + {\color{red} \max\limits_{i \in [n]} \dot{h}_i} \frac{1}{n {\color{red} \varepsilon^{4}}}$ & $+$ & $ \kappa \times \left(\frac{1}{\varepsilon^2} + \frac{1}{n {\color{red} \varepsilon^{4}}}\right) $ & \textbf{Yes} \\
       \midrule
       \makecell{\algname{Malenia PG/SGD} \\ \citep{tyurin2023optimal}} & $\max\limits_{i \in [n]} \dot{h}_i \cdot \frac{1}{\varepsilon^2} + \left(\frac{1}{n}\sum\limits^n_{i=1} \dot{h}_i\right) \cdot \frac{1}{n {\color{red} \varepsilon^{4}}}$ & $+$ & $\kappa \times \frac{1}{\varepsilon^2} $ & \textbf{Yes} \\
       \midrule
       \makecell{\algname{AFedPG} \\ \citep{lan2024asynchronous}} & \multicolumn{3}{c}{\color{red} does not support heterogeneous setup} \\
       \midrule
       \midrule
       \makecell{\algname{Malenia NIGT} (\textbf{new}) \\ (Theorem~\ref{thm:malenia})} & $\max\limits_{i \in [n]} \dot{h}_i \cdot \frac{1}{\varepsilon^2} + \left(\frac{1}{n}\sum\limits^n_{i=1} \dot{h}_i\right) \cdot \frac{1}{n \varepsilon^{7/2}}$ & $+$ & ${ \kappa \times \frac{1}{\varepsilon^2}}$ & \textbf{Yes} \\
      \midrule
    \multicolumn{5}{c}{\makecell{\bf Similar to \algname{Rennala NIGT} (Table~\ref{table:complexities}), the total time complexity of \algname{Malenia NIGT} \\ \bf is better than that of previous methods in the \emph{heterogeneous setup}.}} \\
      \bottomrule
      \end{tabular}
 \end{table}


\begin{assumption}\label{Assumption_2}
    For all $s,a\in\mathcal{S\times A},$ the function $\theta\to\pi_\theta(a|s)$ is positive, twice continuously differentiable, and there exists $l_2>0,$ such that $\norm{\nabla^2\log\pi_\theta(a|s)-\nabla^2\log\pi_{\bar{\theta}}(a|s)}\leq l_2\norm{\theta-\bar{\theta}}$ for all $\theta,\bar{\theta}\in \R^d.$
\end{assumption}
Using the assumptions, we can derive the following useful properties.
\begin{proposition}[e.g. \citep{Zhang2020b,Masiha2022,Yuan2022b}]\label{Proposition_1}
    Let Assumptions \ref{Assumption_1} and \ref{Assumption_2} hold. Then,
    {
    \addtolength{\leftmargini}{-2em}
    \begin{enumerate}
    \item \label{Proposition_1:1} Function $J$ satisfies $\norm{\nabla{J}(\theta)-\nabla{J}(\theta')}\leq L_g\norm{\theta-\theta'}$ for all $\theta, \theta' \in \R^d,$ where $L_g \eqdef r_{\max}(M_g^2+M_h)/ (1-\gamma)^2.$
    \item \label{Proposition_1:2} Function $J$ satisfies $\norm{\nabla^2{J}(\theta)-\nabla^2{J}(\theta')}\leq L_h\norm{\theta-\theta'}$ for all $\theta, \theta' \in \R^d,$
where $L_h \eqdef \frac{r_{\text{max}}M_gM_h}{(1-\gamma)^2}+\frac{r_{\text{max}}M^3_g(1+\gamma)}{(1-\gamma)^3}+\frac{r_{\text{max}}M_g}{1-\gamma}\max\left\{M_h,\frac{\gamma M^2_g}{1-\gamma},\frac{l_2}{M_g},\frac{M_h\gamma}{1-\gamma},\frac{M_g(1+\gamma)+M_h\gamma(1-\gamma)}{1-\gamma^2}\right\}.$
    \item \label{Proposition_1:3} $\norm{\nabla{J}_H(\theta)-\nabla{J}(\theta)} \leq D_g\gamma^H$ and $\norm{\nabla^2{J}_H(\theta)-\nabla^2{J}(\theta)} \leq D_h\gamma^H$ for all $\theta \in \R^d,$ where $D_g\eqdef\frac{M_gr_{\text{max}}}{1-\gamma}\sqrt{\frac{1}{1-\gamma}+H}$ and $D_h\eqdef\frac{(M_h+M_g^2)r_{\text{max}}}{1-\gamma}\left(\frac{1}{1-\gamma}+H\right).$
    \item \label{Proposition_1:4} For $g_H$ defined in \eqref{eq:ATnItCmpHBlEQ}, we have $\nabla{J}_H(\theta)=\mathbb{E}_{\tau}[g_H(\tau, \theta)]$ and $\mathbb{E}_{\tau}\left[\norm{g_H(\tau,\theta)-\nabla J_H(\theta)}^2\right]\leq \sigma^2$ $\forall \theta \in \R^d$ with $\sigma^2 \eqdef r_{\max}^2M_g^2 / (1-\gamma)^3.$
    \end{enumerate}
    }
\end{proposition}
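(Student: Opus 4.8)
The plan is to derive all four items from a single toolkit: the score-function (REINFORCE) identity, the causal ``reward-to-go'' rewriting of the policy gradient, and the elementary fact that the per-step scores $\Psi_k \eqdef \nabla\log\pi_\theta(a_k\vert s_k)$ form a martingale difference sequence, since $\Exp{\Psi_k \mid s_k} = \int \nabla\pi_\theta(a\vert s_k)\, da = \nabla \int \pi_\theta(a\vert s_k)\, da = 0$. Throughout I write $r_t \eqdef r(s_t,a_t)$ and $S_t \eqdef \sum_{k=0}^t \Psi_k$, and I use the geometric identities $\sum_{t\ge 0}(t+1)\gamma^t = (1-\gamma)^{-2}$ and $\sum_{t\ge 0}(t+1)^2\gamma^t = \cO((1-\gamma)^{-3})$. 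The martingale property gives the crucial moment bound $\Exp{\norm{S_t}^2} \le (t+1)M_g^2$ (all cross terms vanish), which is precisely what upgrades the naive estimate $\norm{S_t}\le (t+1)M_g$ into the tighter controls that produce the stated powers of $(1-\gamma)$.

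For item~\ref{Proposition_1:1} I would first record the gradient in causal form, $\nabla J(\theta) = \Exp{\sum_{t\ge0}\gamma^t r_t S_t}$, and then differentiate once more under the expectation (justified by dominated convergence via Assumption~\ref{Assumption_1} and the discounting). Two groups of terms appear: a ``score$\times$score'' part and a ``curvature'' part $\sum_t \gamma^t r_t \nabla S_t$ with $\nabla S_t = \sum_{k\le t}\nabla^2\log\pi_\theta(a_k\vert s_k)$. In the first part, causality together with $\Exp{\Psi_j\mid \cF_{<j}}=0$ kills every cross term containing a future score, collapsing the full $\nabla\log p$ down to $S_t$, so that part equals $\Exp{\sum_t \gamma^t r_t S_t S_t^\top}$; its norm is at most $\sum_t \gamma^t r_{\max}\Exp{\norm{S_t}^2}\le r_{\max}M_g^2\sum_t(t+1)\gamma^t = r_{\max}M_g^2/(1-\gamma)^2$. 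The curvature part is bounded by $r_{\max}M_h\sum_t(t+1)\gamma^t = r_{\max}M_h/(1-\gamma)^2$. Adding gives $\norm{\nabla^2 J(\theta)}\le L_g$, and Lipschitzness of $\nabla J$ follows from the mean-value inequality. Item~\ref{Proposition_1:2} is the same computation carried one derivative further: differentiating $\nabla^2 J$ produces triple-score products (controlled through higher moments of $S_t$), mixed score/Hessian products (controlled by $M_gM_h$), and a genuine third-derivative-of-$\log\pi$ term, which is exactly where Assumption~\ref{Assumption_2} enters, since $l_2$ bounds $\norm{\nabla^3\log\pi_\theta}$ in the operator sense. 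Collecting each family with its geometric weight and taking the worst constant is what produces the $\max\{\cdots\}$ in $L_h$.

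For item~\ref{Proposition_1:3} the key remark is that the length-$H$ trajectory has the same joint law as the first $H$ steps of the infinite trajectory, so $\nabla J_H$ and $\nabla^2 J_H$ are exactly the partial sums (over $t<H$) of the causal expansions above. Hence $\nabla J(\theta) - \nabla J_H(\theta) = \Exp{\sum_{t\ge H}\gamma^t r_t S_t}$, whose norm is at most $r_{\max}M_g\sum_{t\ge H}\gamma^t\sqrt{t+1}$ after using $\Exp{\norm{S_t}}\le\sqrt{\Exp{\norm{S_t}^2}}\le\sqrt{t+1}\,M_g$. Factoring out $\gamma^H$ and applying Cauchy--Schwarz to the tail, $\sum_{j\ge0}\gamma^j\sqrt{H+j+1}\le (1-\gamma)^{-1}\sqrt{H+(1-\gamma)^{-1}}$, which gives exactly $D_g\gamma^H$. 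The Hessian estimate is identical with $\Exp{\norm{S_t}^2}\le(t+1)M_g^2$ and $\norm{\nabla S_t}\le(t+1)M_h$ in place of the first moment, so the square root is replaced by the linear factor $H+(1-\gamma)^{-1}$, yielding $D_h\gamma^H$.

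Item~\ref{Proposition_1:4} is the most self-contained. Unbiasedness $\nabla J_H(\theta)=\Exp{g_H(\tau,\theta)}$ is the finite-horizon log-derivative identity applied to $p(\tau\vert\pi_\theta)$, whose $\theta$-dependence is only through the policy factors. For the variance I would bound the second moment: writing $g_H = \sum_{h=0}^{H-1}\gamma^h r_h S_h$ and using a weighted Cauchy--Schwarz across $h$, $\norm{g_H}^2 \le \big(\sum_h\gamma^h\big)\sum_h \gamma^h r_{\max}^2\norm{S_h}^2$, so $\Exp{\norm{g_H}^2}\le \tfrac{r_{\max}^2}{1-\gamma}\sum_h\gamma^h M_g^2(h+1) = r_{\max}^2M_g^2/(1-\gamma)^3$; since $\Exp{\norm{g_H-\nabla J_H}^2}\le \Exp{\norm{g_H}^2}$, this is the claimed $\sigma^2$. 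The main obstacle throughout is item~\ref{Proposition_1:2}: it requires a disciplined enumeration of the third-order terms, matching each to the correct moment bound and geometric sum, and it is the only place where the delicate justifications (interchanging the third derivative with the infinite-horizon expectation, and tracking the martingale cancellations at third order) must all be handled at once; the remaining items are bookkeeping on top of the two moment bounds for $S_t$.
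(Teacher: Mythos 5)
A point of calibration first: the paper itself does not prove Proposition~\ref{Proposition_1}; it imports it from \citet{Zhang2020b,Masiha2022,Yuan2022b}. So your attempt must be judged against the standard proofs in those references, which follow essentially the route you chose: the causal (reward-to-go) form of the policy gradient, the martingale-difference property of the scores $\Psi_k$, and the resulting second-moment bound $\Exp{\norm{S_t}^2}\leq (t+1)M_g^2$. On items~\ref{Proposition_1:1}, \ref{Proposition_1:3}, and \ref{Proposition_1:4} your argument is correct and complete: the collapse of the score-times-score term to $\Exp{\sum_t\gamma^t r_t S_tS_t^\top}$ via conditioning is valid, and your bookkeeping reproduces the stated constants exactly. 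In particular, the weighted Cauchy--Schwarz step $\sum_{j\geq 0}\gamma^j\sqrt{H+j+1}\leq (1-\gamma)^{-1}\sqrt{H+(1-\gamma)^{-1}}$ is right and yields precisely $D_g\gamma^H$, the analogous linear-factor computation yields $D_h\gamma^H$, and the chain $\Exp{\norm{g_H-\nabla J_H(\theta)}^2}\leq\Exp{\norm{g_H}^2}\leq r_{\max}^2M_g^2/(1-\gamma)^3$ closes item~\ref{Proposition_1:4} with the claimed $\sigma^2$.

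The genuine gap is item~\ref{Proposition_1:2}, and you partly concede it yourself. Two concrete problems. First, your plan is to differentiate $\nabla^2 J$ once more and invoke ``$l_2$ bounds $\norm{\nabla^3\log\pi_\theta}$,'' but Assumption~\ref{Assumption_2} only asserts that $\pi_\theta$ is \emph{twice} continuously differentiable with $l_2$-Lipschitz Hessian of $\log\pi_\theta$; a third derivative need not exist, so the rigorous argument (as in \citealp{Masiha2022}) must bound the difference $\norm{\nabla^2 J(\theta)-\nabla^2 J(\theta')}$ directly, splitting it term by term and applying the Lipschitz and boundedness hypotheses factor by factor, rather than bounding a pointwise third derivative. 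Second, the statement carries a very specific constant $L_h$ --- the $(1+\gamma)$ factor, the five-way $\max$ --- whose structure encodes exactly which moment bound and which geometric sum each third-order family is paired with; ``collecting each family with its geometric weight and taking the worst constant'' is a plan, not a derivation, and nothing in your text verifies that it produces this $L_h$. For instance, the mixed score/Hessian family, bounded the naive way via $\Exp{\norm{\nabla S_t}\norm{S_t}}\leq (t+1)^{3/2}M_hM_g$, leads to the sum $\sum_t (t+1)^{3/2}\gamma^t = \Theta((1-\gamma)^{-5/2})$, which does not obviously match the stated $r_{\max}M_gM_h/(1-\gamma)^2$ term --- evidence that the actual decomposition in the cited proofs differs from the one you sketch. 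As it stands, items~\ref{Proposition_1:1}, \ref{Proposition_1:3}, and \ref{Proposition_1:4} are proved; item~\ref{Proposition_1:2} is not.
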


\subsection{Homogeneous and Heterogeneous Setups}
We consider two important settings: \emph{homogeneous} and \emph{heterogeneous}. The homogeneous setting arises in open-data scenarios, where each agent has access to the same environment and distribution. In contrast, the heterogeneous setting is more relevant in federated learning (FL) \citep{konevcny2016federated,kairouz2021advances}, where agents aim to preserve privacy or where sharing environments is infeasible. \\
\textbf{Homogeneous setup.} We start with the homogeneous setup, where agents have access to the same distribution, share the same reward function, and $\pi_\theta$.  This problem is the same as in \citep{lan2024asynchronous} and defined in Section~\ref{sec:problem}.

\textbf{Heterogeneous setup.} However, unlike \citep{lan2024asynchronous}, our theory also supports the heterogeneous setting, where agents have access to arbitrary heterogeneous distributions and reward functions, which is important in FL. We consider the problem of maximizing
\begin{align}
   \label{eq:ocBneCuJL}
   \textstyle J(\theta)=\frac{1}{n} \sum\limits_{i=1}^{n} J_i(\theta) \text{, where } J_i(\theta) = \mathbb{E}_{(s_{i,t},a_{i,t})_{t \geq 0}}\left[\sum\limits^\infty_{t=0}\gamma^t r_i(s_{i,t},a_{i,t})\right]
\end{align}
and $s_{i,0} \sim \rho_i(\cdot),$ $a_{i,t} \sim \pi_{i,\theta}(\cdot|s_{i,t}),$ and $s_{i,t + 1} \sim \mathcal{P}_i(\cdot|s_{i,t},a_{i,t})$ for all $t \geq 0, i \in [n].$ We assume the fully general setting, where agents sample from arbitrary heterogeneous distributions, and both the reward function and $\pi_{i,\theta}$ may differ. The truncated expected return and stochastic gradient are defined as
\begin{align}
   \label{eq:kzXZOWSskmkPsMRuDkT}
   \textstyle {J}_{H}(\theta) = \frac{1}{n} \sum\limits_{i=1}^{n} {J}_{i,H}(\theta) \text{ and } g_{i,H}(\tau_i,\theta)=\sum\limits^{H-1}_{t=0}\left(\sum\limits^{H-1}_{h=t}\gamma^h r_i(s_{i,h},a_{i,h})\right) \nabla\log\pi_{i,\theta}(a_{i,t}|s_{i,t})
\end{align}
where ${J}_{i,H}(\theta)= \mathbb{E}_{\tau_i}\big[\sum^{H - 1}_{t=0}\gamma^t r_i(s_{i,t},a_{i,t})\big]$ and $\tau_i = (s_{i,0},a_{i,0},\cdots,s_{i,H-1},a_{i,H-1})$ from the density $p_i(\tau|\pi_{i,\theta})\eqdef\rho_i(s_0)\pi_{i,\theta}(a_{i,0}|s_{i,0})\prod^{H - 1}_{t=1}\mathcal{P}_i(s_{i,t}|s_{i,t-1},a_{i,t-1})\pi_{i,\theta}(a_{i,t}|s_{i,t}).$ Similarly to Section~\ref{sec:problem}, $\nabla{J}_{i,H}(\theta)=\mathbb{E}_{\tau_i}[g_{i,H}(\tau_i, \theta)].$
In the heterogeneous setting, Proposition~\ref{Proposition_1}–\eqref{Proposition_1:1}, \eqref{Proposition_1:2}, and \eqref{Proposition_1:3} still hold. However, due to heterogeneity, the last property concerning the unbiasedness and the variance of stochastic gradients holds only locally, for $g_{i,H}$ instead of $g_i$.
\begin{proposition}\label{Proposition_1_heter}
   For all $i \in [n],$ let $\pi_{i,\theta}$ satisfy Assumptions \ref{Assumption_1} and \ref{Assumption_2}. Then, Proposition~\ref{Proposition_1}-\eqref{Proposition_1:1},\eqref{Proposition_1:2},\eqref{Proposition_1:3} are satisfied in the heterogeneous setting \eqref{eq:ocBneCuJL} and \eqref{eq:kzXZOWSskmkPsMRuDkT} for the functions $J$ and $J_H$ (follows from triangle's inequality). \\
   4. For $g_{i,H}$ defined in \eqref{eq:kzXZOWSskmkPsMRuDkT}, we have $\nabla J_{i,H}(\theta) = \mathbb{E}_{\tau}[g_{i,H}(\tau,\theta)]$ and $\mathbb{E}\left[\norm{g_{i,H}(\tau,\theta)-\nabla J_{i,H}(\theta)}^2\right]\leq \sigma^2$ for all $\theta \in \R^d, i \in [n].$
\end{proposition}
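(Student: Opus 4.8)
The plan is to reduce the entire statement to the single-agent Proposition~\ref{Proposition_1}, applied separately to each agent, and then to recombine through linearity of differentiation and the triangle inequality. The key observation is that for each fixed $i \in [n]$, the tuple $(\cS,\cA,\cP_i,r_i,\rho_i,\gamma)$ is itself a discounted MDP whose policy $\pi_{i,\theta}$ satisfies Assumptions~\ref{Assumption_1} and \ref{Assumption_2}, with the reward $r_i$ bounded by the common $r_{\max}$ and with the shared constants $M_g,M_h,l_2$. Hence Proposition~\ref{Proposition_1} applies verbatim to the per-agent objectives $J_i$ and $J_{i,H}$, yielding for every $i$: that $\nabla J_i$ is $L_g$-Lipschitz, that $\nabla^2 J_i$ is $L_h$-Lipschitz, that $\norm{\nabla J_{i,H}(\theta)-\nabla J_i(\theta)} \le D_g\gamma^H$ and $\norm{\nabla^2 J_{i,H}(\theta)-\nabla^2 J_i(\theta)} \le D_h\gamma^H$, and that $g_{i,H}$ is unbiased for $\nabla J_{i,H}$ with variance at most $\sigma^2$. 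Crucially, since $L_g,L_h,D_g,D_h,\sigma$ depend only on the shared quantities $M_g,M_h,l_2,r_{\max},\gamma,H$, they are identical across all agents, so averaging cannot introduce any agent- or $n$-dependent blow-up.

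For property~(4) of Proposition~\ref{Proposition_1_heter}, nothing further is required: $g_{i,H}$ defined in \eqref{eq:kzXZOWSskmkPsMRuDkT} is exactly the truncated stochastic policy gradient of agent $i$'s own MDP, so the unbiasedness identity $\nabla J_{i,H}(\theta)=\Exp{g_{i,H}(\tau,\theta)}$ and the variance bound $\sigma^2$ are literally the single-agent statement (4) for agent $i$; this claim is purely local and never interacts with the averaging. To push properties~(1)--(3) from the individual $J_i,J_{i,H}$ to the averaged $J=\frac1n\sum_i J_i$ and $J_H=\frac1n\sum_i J_{i,H}$, I would use that differentiation is linear, so $\nabla J=\frac1n\sum_i \nabla J_i$ and $\nabla^2 J=\frac1n\sum_i \nabla^2 J_i$ (and likewise for $J_H$), and then apply the triangle inequality, e.g.
\begin{align*}
\norm{\nabla J(\theta)-\nabla J(\theta')} = \norm{\tfrac{1}{n}\sum_{i=1}^{n}\parens{\nabla J_i(\theta)-\nabla J_i(\theta')}} \le \tfrac{1}{n}\sum_{i=1}^{n} L_g\norm{\theta-\theta'} = L_g\norm{\theta-\theta'}.
\end{align*}
The identical one-line computation handles the Hessian Lipschitz bound in (2) and both truncation estimates in (3); in each case the per-agent constant is common, so the average of $n$ copies returns the same constant.

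The argument has no genuine obstacle; the only point requiring care is confirming that the reduction to Proposition~\ref{Proposition_1} is legitimate for each agent---that is, that the per-agent MDP and policy $\pi_{i,\theta}$ indeed satisfy Assumptions~\ref{Assumption_1} and \ref{Assumption_2} with agent-uniform constants and a common reward bound. Once this uniformity is in place, properties~(1)--(3) follow from the subadditivity of Lipschitz constants under averaging of functions sharing the same constant, and property~(4) is immediate, which is precisely why the statement can be asserted to follow from the triangle inequality.
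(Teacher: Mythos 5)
Your proposal is correct and follows exactly the route the paper intends: the paper's entire justification is the parenthetical ``(follows from triangle's inequality),'' i.e., apply Proposition~\ref{Proposition_1} per agent (valid since each $(\mathcal{S},\mathcal{A},\mathcal{P}_i,r_i,\rho_i,\gamma)$ with $\pi_{i,\theta}$ satisfies Assumptions~\ref{Assumption_1} and \ref{Assumption_2} with the common constants $M_g,M_h,l_2,r_{\max}$), then average via linearity of $\nabla$ and $\nabla^2$ and the triangle inequality, with property~4 being the per-agent statement verbatim. Your added care about agent-uniformity of the constants is a worthwhile explicit check, but it does not constitute a different argument.
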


\subsection{Computation and Communication Times}
\label{sec:ass_time}
To illustrate our improvements over the previous state-of-the-art results, we make the following assumption. Notice that it is not required for the convergence of our methods.
\begin{theorembox}
\begin{assumption}
\label{ass:time}
\leavevmode
{
\addtolength{\leftmargini}{-2em}
\begin{itemize}
    \item Computing a single stochastic policy gradient $g_H$ (or $g_{i,H}$) on agent $i$ requires at most 
    \begin{align*}
        h_i \;\eqdef\; \dot{h}_i \times H
    \end{align*}
    seconds, where $\dot{h}_i$ denotes the time required to obtain the next state, and $H$ is the trajectory length. Without loss of generality, we assume that $\dot{h}_1 \leq \dots \leq \dot{h}_n,$ and consequently $h_1 \leq \dots \leq h_n.$
    \item In the centralized setup (with a server), transmitting a vector from an agent to and from the server takes at most $\kappa$ seconds. In the decentralized setup, transmitting vectors between agents (e.g., via \algname{AllReduce}) also takes at most $\kappa$ seconds.
\end{itemize}
}
\end{assumption}
\end{theorembox}
In order to compute a stochastic gradient $g_H,$ an agent must generate a trajectory and collect the corresponding rewards. Since trajectories are generated sequentially in a Markov decision process, the time required grows linearly with the horizon length $H.$ Therefore, it is natural to assume that the computation time is bounded by $\dot{h}_i \times H,$ where $\dot{h}_i$ denotes the maximal time needed by agent $i$ to simulate or observe a single state transition.

The second condition is natural in distributed training and optimization, where communication between agents requires a non-negligible time $\kappa > 0,$ for instance when performed over the Internet or via MPI-based message passing.

We consider heterogeneous computations ${h_i}$ and the communication time $\kappa$ to compare methods and highlight our contributions in the asynchronous setup. \emph{All our new methods work without assuming these.} Moreover, Theorem~\ref{thm:any_time} illustrates how our time complexity results can be generalized to arbitrary computation patterns.

\section{New Methods: \algname{Rennala NIGT} and \algname{Malenia NIGT}}
\begin{figure}[t]
\begin{minipage}[t]{\textwidth}
\begin{algorithm}[H]
    \caption{\algname{Rennala NIGT} or \algname{Malenia NIGT}}
    \label{Algorithm_1}
    \begin{algorithmic}[1]
    \State \textbf{Input:} momentum $\eta$ and step size $\alpha,$ starting point $\theta_0,$ parameters $M_{\text{init}}$ and $M,$ horizon $H$
    \State Initialize $d_0 = {\green \textnormal{\algname{AggregateRennala}}(\theta_0, M_{\text{init}}, H)}$ \hfill \big(or ${= \green\textnormal{\algname{AggregateMalenia}}(\theta_0, M_{\text{init}}, H)}$\big)
    \State $\theta_1 = \theta_0 + \alpha \frac{d_0}{\norm{d_0}}$
    \For{$t = 1, 2, \ldots$}
        \State $\widetilde{\theta}_t = \theta_t + \frac{1 - \eta}{\eta} (\theta_t - \theta_{t - 1})$
        \State $g_t = {\green \textnormal{\algname{AggregateRennala}}(\widetilde{\theta}_t, M, H)}$ \hfill \big(or $= {\green \textnormal{\algname{AggregateMalenia}}(\widetilde{\theta}_t, M, H)}$\big)
        \State $d_t = (1 - \eta) d_{t - 1} + \eta g_t$
        \State $\theta_{t + 1} = \theta_{t} + \alpha \frac{d_t}{\norm{d_t}}$
    \EndFor
    \end{algorithmic}
\end{algorithm}
\end{minipage}
\begin{minipage}[t]{0.5\textwidth}
\begin{algorithm}[H]
    \centering
    \caption{\algname{AggregateRennala}($\theta$, $M$, $H$)}
    \label{alg:rennala}
    \begin{algorithmic}[1]
    \State Init $\bar{g} = 0\in\R^d$ and $i = 1$
    \State Broadcast $\theta$ to all agents
    \State Each agent $i$ starts sampling $\tau_{i,1} \sim p(\cdot|\pi_\theta)$ and calculating $g_H(\tau_{i,1}, \theta)$
        \While{$i \leq M$}
            \State Wait for $g_H(\tau_{j,k}, \theta)$ from an agent $j$
            \State $\bar{g} = \bar{g} + \frac{1}{M} g_H(\tau_{j,k}, \theta); i = i + 1$
            \State Agent $j$ starts sampling $\tau_{j,k + 1} \sim p(\cdot|\pi_\theta)$ \hspace*{0.45cm} and calculating $g_H(\tau_{j,k + 1}, \theta)$
        \EndWhile
        \State Stop all calculations
        \State Return $\bar{g}$ \hfill (e.g., via \algname{AllReduce})
        \vspace{0.03cm}
    \end{algorithmic}
\end{algorithm}
\end{minipage}
\hfill
\begin{minipage}[t]{0.5\textwidth}
\begin{algorithm}[H]
    \centering
    \caption{\algname{AggregateMalenia}($\theta$, $M$, $H$)}
    \label{alg:malenia}
    \begin{algorithmic}[1]
    \State Init $\bar{g}_i = 0 \in \R^d$ and $M_i = 0$ for all $i \in [n]$
    \State Broadcast $\theta$ to all agents
    \State Each agent $i$ starts sampling $\tau_{i,1} \sim p_i(\cdot|\pi_{i,\theta})$ and calculating $g_{i,H}(\tau_{i,1}, \theta)$
        \While{$(\nicefrac{1}{n} \sum_{i=1}^n \nicefrac{1}{M_i})^{-1} < \nicefrac{M}{n}$}
            \State Wait for $g_{j,H}(\tau_{j,k}, \theta)$ from an agent $j$
            \State $\bar{g}_j = \bar{g}_j + g_{j,H}(\tau_{j,k}, \theta); M_j = M_j + 1$
            \State Agent $j$ starts sampling $\tau_{j,k + 1} \sim$ \hspace*{0.45cm} $p_j(\cdot|\pi_{j,\theta})$ and calculating $g_{j,H}(\tau_{j,k + 1}, \theta)$
        \EndWhile
        \State Stop all calculations
        \State Return $\nicefrac{1}{n} \sum_{i=1}^{n} \nicefrac{\bar{g}_i}{M_i}$ \hfill (e.g., via \algname{AllReduce})
    \end{algorithmic}
\end{algorithm}
\end{minipage}
\textbf{Note:} The agents can locally aggregate $g_H$ and $g_{i,H}$, after which the algorithm can perform a single \algname{AllReduce} call to collect all computed gradients. The trajectories $\{\tau_{i,j}\}$ are statistically independent.
\end{figure}
In this section, we present our new algorithm, \algname{Rennala NIGT} (Algorithm~\ref{Algorithm_1} and Algorithm~\ref{alg:rennala}). Later, we present its heterogeneous version, \algname{Malenia NIGT} (Algorithm~\ref{Algorithm_1} and Algorithm~\ref{alg:malenia}). They are inspired by \citep{cutkosky2020momentum,fatkhullin2023stochastic,tyurin2023optimal}. The core steps (Algorithm~\ref{Algorithm_1}) are almost the same as in \citep{cutkosky2020momentum}. Using the extrapolation steps, \citet{cutkosky2020momentum} showed that it is possible to improve the oracle complexity of the vanilla \algname{SGD} method \citep{lan2020first} under the additional assumption that the Hessian is smooth. It turns out that the oracle complexity from \citep{cutkosky2020momentum} can be improved with the \algname{STORM}/\algname{MVR} method \citep{cutkosky2019momentum}, but the latter requires calculating stochastic gradients at two different points using the same random variable, which cannot easily be applied in the RL context due to the non-stationarity of the distribution.

To adapt Algorithm~\ref{Algorithm_1} to parallel and asynchronous scenarios, we follow the idea from \citep{tyurin2023optimal,tyurin2024freya} and design the \algname{AggregateRennala} procedure. The method broadcasts $\theta$ to all agents. Each agent $i$ then starts sampling $\tau_{i,1}$, obtaining the reward, and computing the stochastic gradient $g_H(\tau_{i,1}, \theta)$ locally. 
Next, the algorithm enters a loop and waits for any agent to complete these steps. Once an agent finishes, the algorithm increases the counter $i$ and instructs that agent to sample another trajectory and repeat the process. This continues until the \emph{total} number of calculated stochastic gradients reaches $M$. Notice that, unlike \citep{lan2024asynchronous}, the agents can aggregate the stochastic gradients locally, thereby reducing the communication overhead. Finally, performing only one communication, the algorithm aggregates all vectors to $\bar{g}$ (e.g., via \algname{AllReduce}), which Algorithm~\ref{Algorithm_1} uses to make the steps.

Our strategy offers several advantages: i) \algname{Rennala NIGT} (Algorithm~\ref{Algorithm_1} and Algorithm~\ref{alg:rennala}) can be applied in both centralized and decentralized settings; ii) it is asynchronous-friendly, as Algorithm~\ref{alg:rennala} is resilient to stragglers: if an agent is slow or even disconnected, the procedure is not delayed, since \algname{AggregateMalenia} only needs to collect $M$ stochastic gradients from \emph{all agents}. This will be formalized in Section~\ref{sec:main}. iii) it is also communication-efficient, as vector communication occurs only once at the end of Algorithm~\ref{alg:rennala} (see Section~\ref{sec:time_comm}). iv) finally, our theoretical guarantees are provably better than those of the previous results.

\section{Time Complexity in the Homogeneous Setup}
\label{sec:main}
\begin{restatable}{theorem}{THMRENNALA}\label{Theorem_1}
Let Assumptions \ref{Assumption_1} and \ref{Assumption_2} hold. Consider \algname{Rennala NIGT} (Algorithm~\ref{Algorithm_1} and Algorithm~\ref{alg:rennala}) in the homogeneous setup, or \algname{Malenia NIGT} (Algorithms~\ref{Algorithm_1} and Algorithm~\ref{alg:malenia}) in the heterogeneous setup. Let $\eta=\min\big\{\frac{M \varepsilon^2}{64 \sigma^2}, \frac{1}{2}\big\},$ $\alpha=\min\big\{\frac{\varepsilon}{8 L_g},\frac{\eta\sqrt\varepsilon}{4 \sqrt{L_h}}\big\},$ $H= \max\big\{\log_{\gamma}\big(\frac{\varepsilon \eta}{64 \max\{D_g,\alpha D_h\}}\big), 1 \big\} = \tilde{\cO}(\nicefrac{1}{(1 - \gamma)}).$ 
Let $\bar\theta_T$ be a uniformly sampled iterate from $\left\{\theta_0,\cdots,\theta_{T-1}\right\}.$ Then $\mathbb{E}\norm{\nabla J(\bar\theta_T)} \leq \varepsilon$ after
    $\textstyle T=\mathcal{O}\left(\frac{L_g \Delta}{\varepsilon^2} + \frac{\sqrt{L_h} \Delta}{\varepsilon^{3/2}} + \frac{\sigma}{\sqrt{M_{\text{init}}} \varepsilon} + \frac{\sigma^3}{M \sqrt{M_{\text{init}}}\varepsilon^3} + \frac{\sigma^2 \sqrt{L_h} \Delta}{M \varepsilon^{7/2}}\right)$
global iterations.
\end{restatable}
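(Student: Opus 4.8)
The plan is to combine a descent lemma for normalized gradient ascent with a variance-reduced error recursion for the momentum estimator $d_t$, in the spirit of \citep{cutkosky2020momentum,fatkhullin2023stochastic}, while carefully carrying the RL truncation bias and the parallel-aggregation variance. Throughout I would work with the truncated objective $J_H$, for which the returned gradient is unbiased, and only convert back to $J$ at the very end.

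\textbf{Aggregation variance.} First I would isolate the single property that unifies \algname{Rennala NIGT} and \algname{Malenia NIGT}: conditioned on the history $\cF_t$ before sampling, the vector $g_t$ satisfies $\Exp{g_t \mid \cF_t} = \nabla J_H(\widetilde\theta_t)$ and $\Exp{\sqnorm{g_t - \nabla J_H(\widetilde\theta_t)} \mid \cF_t} \leq \nicefrac{\sigma^2}{M}$. For \algname{AggregateRennala} this is immediate, since $g_t$ averages $M$ independent gradients each of variance $\leq \sigma^2$ by Proposition~\ref{Proposition_1}-\eqref{Proposition_1:4}. For \algname{AggregateMalenia}, writing the output as $\frac1n\sum_{i=1}^n \frac{1}{M_i}\sum_k g_{i,H}(\tau_{i,k},\widetilde\theta_t)$ gives variance $\frac{\sigma^2}{n^2}\sum_i \nicefrac{1}{M_i}$, and the stopping rule $(\frac1n\sum_i \nicefrac{1}{M_i})^{-1} \geq \nicefrac Mn$ forces this to be $\leq \nicefrac{\sigma^2}{M}$. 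The subtlety is that the per-agent counts $M_i$ are random (set by computation times); I would note that, since the trajectories $\{\tau_{i,j}\}$ are sampled independently of the timing, $M_i$ is independent of the gradient values, so the conditional variance bound remains valid.

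\textbf{Error recursion (the heart).} Define $\epsilon_t \eqdef d_t - \nabla J_H(\theta_t)$ and $\xi_t \eqdef g_t - \nabla J_H(\widetilde\theta_t)$. From $d_t = (1-\eta)d_{t-1} + \eta g_t$ I would derive $\epsilon_t = (1-\eta)\epsilon_{t-1} + \eta\xi_t + r_t$, where $r_t = (1-\eta)(\nabla J_H(\theta_{t-1}) - \nabla J_H(\theta_t)) + \eta(\nabla J_H(\widetilde\theta_t) - \nabla J_H(\theta_t))$. The crucial point is \emph{implicit gradient transport}: the extrapolation weight $\frac{1-\eta}{\eta}$ in $\widetilde\theta_t = \theta_t + \frac{1-\eta}{\eta}(\theta_t - \theta_{t-1})$ makes the first-order (Hessian) parts of these two differences cancel exactly, leaving only a Taylor remainder. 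Bounding the remainder requires the Hessian-Lipschitz behaviour of $J_H$: combining Proposition~\ref{Proposition_1}-\eqref{Proposition_1:2},\eqref{Proposition_1:3} yields $\norm{\nabla^2 J_H(\theta) - \nabla^2 J_H(\theta')} \leq L_h\norm{\theta-\theta'} + 2D_h\gamma^H$, so the second-order remainder is $\tfrac{L_h}{2}\norm{\cdot}^2 + 2D_h\gamma^H\norm{\cdot}$. Since the step is normalized, $\norm{\theta_t - \theta_{t-1}} = \alpha$ exactly, and I would conclude $\norm{r_t} \leq B$ with $B = \mathcal{O}\big(\tfrac{L_h\alpha^2}{\eta} + \alpha D_h\gamma^H\big)$; the gradient-truncation bias $D_g\gamma^H$ is deferred to the final conversion step. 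This is exactly why the stated $H$ scales with $\max\{D_g, \alpha D_h\}$.

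\textbf{From error to complexity, and tuning.} Squaring the recursion and taking expectations, the $\xi_t$ cross-term vanishes by conditional mean-zero, and Young's inequality gives $a_t \leq (1-\eta)a_{t-1} + \tfrac{2B^2}{\eta} + \tfrac{\eta^2\sigma^2}{M}$ for $a_t \eqdef \Exp{\sqnorm{\epsilon_t}}$; unrolling the geometric recursion yields $a_t \leq (1-\eta)^t a_0 + \tfrac{2B^2}{\eta^2} + \tfrac{\eta\sigma^2}{M}$, with $a_0 \lesssim \nicefrac{\sigma^2}{M_{\text{init}}}$ (up to bias). In parallel, the $L_g$-smooth ascent lemma on $J$ together with the normalized-step inequality $\inp{\nabla J(\theta_t)}{d_t/\norm{d_t}} \geq \norm{\nabla J(\theta_t)} - 2\norm{\epsilon_t} - 2D_g\gamma^H$ gives, after telescoping and dividing by $\alpha T$, the bound $\Exp{\norm{\nabla J(\bar\theta_T)}} \leq \tfrac{\Delta}{\alpha T} + \tfrac{2}{T}\sum_t \Exp{\norm{\epsilon_t}} + \tfrac{L_g\alpha}{2} + 2D_g\gamma^H$. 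Bounding $\tfrac1T\sum_t\Exp{\norm{\epsilon_t}} \leq \tfrac1T\sum_t\sqrt{a_t}$ and summing the $(1-\eta)^{t/2}$ tail (producing the factor $\tfrac{\sqrt{a_0}}{\eta T}$), I would substitute $\eta, \alpha, H$ and verify that each resulting term — $\tfrac{L_g\alpha}{2}$, $\sqrt{\nicefrac{\eta\sigma^2}{M}}$, $\nicefrac{B}{\eta}$, and the truncation terms — is $\mathcal{O}(\varepsilon)$, so that requiring the two $T$-dependent contributions ($\tfrac{\Delta}{\alpha T}$ and $\tfrac{\sqrt{a_0}}{\eta T}$) to be $\leq \varepsilon$ forces exactly the five claimed terms (the $\max\{\cdot,\cdot\}$ in $\eta$ and $\alpha$ produces the split into $\nicefrac{1}{\varepsilon^2},\nicefrac{1}{\varepsilon^{3/2}}$ versus $\nicefrac{1}{\varepsilon^{7/2}}$, and into $\nicefrac{1}{\varepsilon},\nicefrac{1}{\varepsilon^3}$). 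The main obstacle I anticipate is the consistent bookkeeping of the two truncation biases in the error recursion: because $g_t$ is unbiased only for $\nabla J_H$, the Hessian cancellation must be carried out with $J_H$, whose affine Hessian-Lipschitz constant injects the $\alpha D_h\gamma^H$ term, while the final conversion $\norm{\nabla J_H} \to \norm{\nabla J}$ injects $D_g\gamma^H$; one must check that the prescribed $H$ simultaneously controls both at the scale $\eta\varepsilon$.
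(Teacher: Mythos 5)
Your proposal is correct and takes essentially the same route as the paper's proof: the same normalized-step descent lemma, the same momentum-error recursion with exact cancellation of the Hessian terms induced by the extrapolation weight $\frac{1-\eta}{\eta}$, the same unbiasedness/variance verification for both aggregators (including the Malenia stopping-rule argument giving variance $\leq \nicefrac{\sigma^2}{M}$, where your explicit remark on the randomness of the counts $M_i$ is a point the paper leaves implicit), and the same tuning of $\eta,\alpha,H$ producing exactly the five claimed terms. The only immaterial local differences are that you unroll a squared recursion via Young's inequality and then take square roots, whereas the paper unrolls the recursion linearly and bounds the expected norm of the martingale sum by its second moment, and that you bound the Taylor remainders using the approximate Hessian-Lipschitzness of $J_H$ (deferring the $D_g\gamma^H$ bias to the final conversion $\norm{\nabla J_H}\to\norm{\nabla J}$), while the paper injects that bias inside the recursion by comparing $S_t,Z_t$ with their $J$-counterparts $\bar S_t,\bar Z_t$ — both versions are controlled at scale $\eta\varepsilon$ by the prescribed $H$ and yield the same complexity up to constants.
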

\textbf{Notice that Theorem~\ref{Theorem_1} does not rely on Assumption~\ref{ass:time}; convergence is guaranteed even without Assumption~\ref{ass:time}}. It is an auxiliary result that holds for any choice of $M$ and $M_{\text{init}}.$ 
We now establish the time complexity of \algname{Rennala NIGT} in Theorem~\ref{Theorem_1}, our first main result.
\begin{theorembox}
\begin{restatable}{theorem}{THEOREMRENNALA}
\label{thm:main_1}
Consider the results and assumptions of Theorem \ref{Theorem_1}. Additionally, consider that Assumption~\ref{ass:time} holds with $\kappa = 0$ (i.e., communication is free). Taking $M_{\text{init}} = \max\big\{\big\lceil\frac{\sigma^2}{\varepsilon^{2}}\big\rceil, 1\big\}$ and $M = \max\big\{\big\lceil\big(\frac{\sigma^2}{\varepsilon^2} + \frac{\sigma^2 \sqrt{L_h} \Delta}{\varepsilon^{7/2}}\big) / \big(\frac{L_g \Delta}{\varepsilon^2} + \frac{\sqrt{L_h} \Delta}{\varepsilon^{3/2}}\big)\big\rceil, 1\big\},$ the time required to find an $\varepsilon$--stationary point by \algname{Rennala NIGT} (Algorithms~\ref{Algorithm_1} and \ref{alg:rennala}) is
\begin{equation}
\label{eq:dbkUuJTDQcUoqRPwG}
\begin{aligned}
    \textstyle \tilde{\mathcal{O}}\left(\frac{1}{1 - \gamma} \min\limits_{m\in [n]}\left[\left(\frac{1}{m}\sum\limits^m_{i=1}\frac{1}{\dot{h}_i}\right)^{-1}\left(\frac{L_g\Delta}{\varepsilon^2} + \frac{\sqrt{L_h} \Delta}{\varepsilon^{3/2}} + \frac{\sigma^2}{m \varepsilon^2} + \frac{\sigma^2 \sqrt{L_h} \Delta}{m \varepsilon^{7/2}}\right)\right]\right).
\end{aligned}
\end{equation}
\end{restatable}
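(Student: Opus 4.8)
The plan is to combine the iteration bound of Theorem~\ref{Theorem_1} with a time-accounting argument for the \algname{AggregateRennala} subroutine. Since $\kappa = 0$, the running time is entirely the time spent assembling gradient batches: one initial call that collects $M_{\text{init}}$ gradients, followed by $\cO(T)$ calls that each collect $M$ gradients. The first ingredient I would invoke is the batch-collection time bound underlying \algname{Rennala SGD} \citep{tyurin2023optimal}: when the per-gradient times satisfy $h_1 \le \cdots \le h_n$ and \algname{AggregateRennala} restarts all agents after each call, the time to assemble a batch of $M$ gradients is $\Theta(\min_{m\in[n]}[(\frac{1}{m}\sum_{i=1}^m \frac{1}{h_i})^{-1}(1 + \frac{M}{m})])$. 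Using $h_i = \dot{h}_i H$ from Assumption~\ref{ass:time}, the factor $\frac{1}{H}$ pulls out, so this batch time equals $H \cdot \min_{m}[S_m(1 + M/m)]$ with $S_m := (\frac{1}{m}\sum_{i=1}^m \frac{1}{\dot{h}_i})^{-1}$; the factor $H = \tilde{\cO}(\frac{1}{1-\gamma})$ is exactly the $\frac{1}{1-\gamma}$ prefactor in \eqref{eq:dbkUuJTDQcUoqRPwG}.

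Next I would substitute $M_{\text{init}} = \lceil \sigma^2/\varepsilon^2\rceil$ into the iteration count from Theorem~\ref{Theorem_1}. This choice collapses the two $M_{\text{init}}$-dependent terms: $\frac{\sigma}{\sqrt{M_{\text{init}}}\varepsilon} = \cO(1)$ and $\frac{\sigma^3}{M\sqrt{M_{\text{init}}}\varepsilon^3} = \cO(\frac{\sigma^2}{M\varepsilon^2})$. Absorbing the $\cO(1)$ into $\frac{L_g\Delta}{\varepsilon^2}$, we obtain $T = \cO(A + B/M)$ where $A := \frac{L_g\Delta}{\varepsilon^2} + \frac{\sqrt{L_h}\Delta}{\varepsilon^{3/2}}$ and $B := \frac{\sigma^2}{\varepsilon^2} + \frac{\sigma^2\sqrt{L_h}\Delta}{\varepsilon^{7/2}}$. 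The prescribed $M = \max\{\lceil B/A\rceil, 1\}$ is precisely the value balancing $A$ against $B/M$.

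The core computation is bounding $T \cdot t_{\text{batch}}(M)$. Because $T$ is independent of $m$, I can pull it inside the minimum: $T \cdot t_{\text{batch}}(M) = H \min_m S_m (T + TM/m)$. The crux is showing $T + TM/m = \cO(A + B/m)$ for every $m$, which I would verify by splitting on the two regimes of $M$. If $B \ge A$, then $M \asymp B/A$, so $B/M \le A$ (hence $T = \cO(A)$) and $TM/m = \cO(B/m)$; if $B < A$, then $M = 1$, so $T = \cO(A)$ and $TM/m = \cO(A/m) \le \cO(A)$. In both regimes $T + TM/m = \cO(A + B/m)$, and expanding $A + B/m$ reproduces exactly the bracketed expression $\frac{L_g\Delta}{\varepsilon^2} + \frac{\sqrt{L_h}\Delta}{\varepsilon^{3/2}} + \frac{\sigma^2}{m\varepsilon^2} + \frac{\sigma^2\sqrt{L_h}\Delta}{m\varepsilon^{7/2}}$. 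Thus $T \cdot t_{\text{batch}}(M) = \tilde{\cO}(H\min_m S_m(A + B/m))$, matching \eqref{eq:dbkUuJTDQcUoqRPwG}. Finally I would confirm the initial call is dominated: $t_{\text{batch}}(M_{\text{init}}) = H\min_m S_m(1 + \sigma^2/(m\varepsilon^2))$, and since $\sigma^2/\varepsilon^2 \le B$ and $1 \le A$, this is $\cO(H \min_m S_m(A + B/m))$ and absorbed.

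The main obstacle is the third step: the minimum over $m$ couples the batch-collection time to the iteration count, so one must argue that the near-optimal batch size $M \asymp B/A$ keeps $T(1 + M/m)$ within a constant factor of $A + B/m$ \emph{simultaneously for all} $m$, not merely for the minimizing index. A secondary subtlety is justifying the batch-time characterization in the RL setting — specifically, that discarding the partial (unfinished) trajectories when \algname{AggregateRennala} executes its ``stop all calculations'' step costs only a constant factor, which is the content of the \algname{Rennala SGD} time lemma of \citet{tyurin2023optimal}.
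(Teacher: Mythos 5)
Your proposal is correct and takes essentially the same route as the paper's proof: the paper likewise simplifies the iteration count of Theorem~\ref{Theorem_1} using the prescribed $M_{\text{init}}$ and $M$ (yielding $T = \cO(\nicefrac{L_g\Delta}{\varepsilon^2} + \nicefrac{\sqrt{L_h}\Delta}{\varepsilon^{3/2}})$), invokes the batch-collection bound (the paper's Lemma~\ref{lemma:rennala}, which is exactly the \algname{Rennala SGD}-style time bound you cite, including the handling of discarded partial computations), multiplies by $T$, absorbs the initial $M_{\text{init}}$-call, and substitutes $h_i = \dot{h}_i H$. Your explicit regime-split verification that $T(1 + \nicefrac{M}{m}) = \cO(A + \nicefrac{B}{m})$ uniformly in $m$ is precisely the algebraic step the paper performs implicitly in its chain of $\cO$-equalities, so the two arguments coincide in substance.
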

\end{theorembox}
We now compare this result with \citep{lan2024asynchronous}. Although that paper requires the impractical assumption that generated trajectories have infinite horizons, we assume that they also require $\dot{h}_i \times H$ seconds to generate one trajectory for agent $i.$ Then, they obtain at least the time complexity $\tilde{\mathcal{O}}\big(\left(\nicefrac{1}{n}\sum^n_{i=1}\nicefrac{1}{\dot{h}_i}\right)^{-1}\left(\nicefrac{n^{4/3}}{\varepsilon^{7/3}} + \nicefrac{1}{n \varepsilon^{7/2}}\right)\big)$, up to $\varepsilon$, $n$, and ${\dot{h}_i}$ constant factors. If $n$ is large, which is the case in federated learning and distributed optimization, their complexity can be arbitrarily large due to the $n^{4/3}$ and $\left(\nicefrac{1}{n}\sum^n_{i=1}\nicefrac{1}{\dot{h}_i}\right)^{-1}$ dependencies\footnote{For instance, take $\dot{h}_i = h,$ then it reduces to $\dot{h} (\nicefrac{n^{4/3}}{\varepsilon^{7/3}} + \nicefrac{1}{n \varepsilon^{7/2}}) \overset{n \to \infty}{\to} \infty.$ As a concrete example, if $\varepsilon = 0.0001,$ the first term $\nicefrac{n^{4/3}}{\varepsilon^{7/3}}$ already dominates when $n = 100.$ In practice, the number of computational resources $n$ continues to grow toward $10$K--$100$K, causing the complexity to increase with $n$ as well.}. Notice that our complexity \eqref{eq:dbkUuJTDQcUoqRPwG} can only decrease with larger $n$ due to the $\tilde{\mathcal{O}}\big(\min_{m\in [n]}\big[\left(\frac{1}{m}\sum^m_{i=1}\nicefrac{1}{\dot{h}_i}\right)^{-1} (\nicefrac{1}{\varepsilon^{2}} + \nicefrac{1}{m \varepsilon^{7/2}})\big]\big)$ dependency. In Section~\ref{sec:time_comm}, we show that the gap is even larger when we start taking into account the communication factor.

Our time complexity \eqref{eq:dbkUuJTDQcUoqRPwG} has a harmonic-like dependency $\tilde{\mathcal{O}}\big(\min_{m\in [n]}\big[\big(\nicefrac{1}{m}\sum^m_{i=1}\nicefrac{1}{\dot{h}_i}\big)^{-1}\big(A + \nicefrac{B}{m}\big)\big]\big)$ on $\{\dot{h}_i\}$ for $A \eqdef \nicefrac{L_g\Delta}{\varepsilon^2} + \nicefrac{\sqrt{L_h} \Delta}{\varepsilon^{3/2}}$ and $B \eqdef \nicefrac{\sigma^2}{\varepsilon^2} + \nicefrac{\sigma^2 \sqrt{L_h} \Delta}{\varepsilon^{7/2}}.$ To the best of our knowledge, this is the current state-of-the-art computational complexity for maximizing \eqref{eq:main}. It has many nice properties: i) it is robust to stragglers. If we take $\dot{h}_n \to \infty,$ this complexity starts ignoring the slowest agent and becomes $\tilde{\mathcal{O}}\big(\min_{m\in [{\green n - 1}]}\big[\big(\nicefrac{1}{m}\sum^m_{i=1}\nicefrac{1}{\dot{h}_i}\big)^{-1}\big(A + \nicefrac{B}{m}\big)\big]\big);$ ii) since the harmonic mean is less than or equal to the maximum term, this complexity is much better than the time complexity of the naive synchronized distributed version of \algname{NIGT} with the complexity $\tilde{\mathcal{O}}\big(\max_{i \in [n]} \dot{h}_i \big(A + \nicefrac{B}{n}\big)\big),$ where all agents synchronize after each has computed one stochastic gradient. Notice that we can easily generalize our result when the times are non-static. 
\begin{restatable}{theorem}{THEOREMRENNALAGENERAL}
\label{thm:any_time}
Consider the results and assumptions of Theorem \ref{Theorem_1}. Additionally, consider that computing a single stochastic policy gradient $g_H$ on agent $i$ in iteration $t$ requires at most $\dot{h}_{t,i} \times H$ seconds. Taking $M_{\text{init}} = \max\big\{\big\lceil\frac{\sigma^2}{\varepsilon^{2}}\big\rceil, 1\big\}$ and $M = \max\big\{\big\lceil\big(\frac{\sigma^2}{\varepsilon^2} + \frac{\sigma^2 \sqrt{L_h} \Delta}{\varepsilon^{7/2}}\big) / \big(\frac{L_g \Delta}{\varepsilon^2} + \frac{\sqrt{L_h} \Delta}{\varepsilon^{3/2}}\big)\big\rceil, 1\big\},$ the time required to find an $\varepsilon$--stationary point by \algname{Rennala NIGT} (Algorithms~\ref{Algorithm_1} and \ref{alg:rennala}) is
    $\tilde{\mathcal{O}}\big(\nicefrac{1}{1 - \gamma} \sum_{t = 1}^{T} \min_{m\in [n]}\big[\big(\frac{1}{m}\sum^m_{i=1}\nicefrac{1}{\dot{h}_{t,\pi_{t,i}}}\big)^{-1}\left(\nicefrac{M}{m} + 1\right)\big] + \nicefrac{1}{1 - \gamma} \min_{m\in [n]}\big[\left(\frac{1}{m}\sum^m_{i=1}\nicefrac{1}{h_{t,\pi_{0,i}}}\right)^{-1}\left(\nicefrac{M_{\text{init}}}{m} + 1\right)\big]\big),$
where $T = \cO\left(\nicefrac{L_g\Delta}{\varepsilon^2} + \nicefrac{\sqrt{L_h} \Delta}{\varepsilon^{3/2}}\right)$ and $\pi_{t,i}$ is a permutation such that $\dot{h}_{t,\pi_{t,1}} \leq \dots \leq \dot{h}_{t,\pi_{t,n}}.$
\end{restatable}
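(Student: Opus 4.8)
The plan is to decouple the wall-clock time into two independent pieces: the number of outer iterations $T$ that Algorithm~\ref{Algorithm_1} needs to reach an $\varepsilon$--stationary point, and the time spent inside each \algname{AggregateRennala} call. The iteration count is already delivered by Theorem~\ref{Theorem_1}, so the first step is to substitute the prescribed $M_{\text{init}}=\max\{\lceil\sigma^2/\varepsilon^2\rceil,1\}$ and $M=\max\{\lceil B/A\rceil,1\}$, with $A\eqdef\nicefrac{L_g\Delta}{\varepsilon^2}+\nicefrac{\sqrt{L_h}\Delta}{\varepsilon^{3/2}}$ and $B\eqdef\nicefrac{\sigma^2}{\varepsilon^2}+\nicefrac{\sigma^2\sqrt{L_h}\Delta}{\varepsilon^{7/2}}$, into the bound of Theorem~\ref{Theorem_1} and verify that the last three terms collapse: $\nicefrac{\sigma}{\sqrt{M_{\text{init}}}\varepsilon}=\cO(1)$, while $\nicefrac{\sigma^3}{M\sqrt{M_{\text{init}}}\varepsilon^3}$ and $\nicefrac{\sigma^2\sqrt{L_h}\Delta}{M\varepsilon^{7/2}}$ are each $\cO(A)$ since $M\approx B/A$ and $\nicefrac{\sigma^2}{\varepsilon^2}\le B$. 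This yields the clean iteration bound $T=\cO(A)=\cO(\nicefrac{L_g\Delta}{\varepsilon^2}+\nicefrac{\sqrt{L_h}\Delta}{\varepsilon^{3/2}})$ claimed in the statement.

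The heart of the argument is a per-iteration time lemma for the greedy collection procedure (Algorithm~\ref{alg:rennala}), adapted from the \algname{Rennala SGD} analysis of \citep{tyurin2023optimal}. Fix an iteration $t$; by assumption every agent $i$ produces one gradient every $h_{t,i}\eqdef\dot{h}_{t,i}\times H$ seconds, and these per-call times are constant throughout the call. Because all agents compute continuously in parallel starting from the broadcast, the number of gradients completed by agent $i$ within any window of length $\mathcal{T}$ is at least $\lfloor\mathcal{T}/h_{t,\pi_{t,i}}\rfloor$, where $\pi_{t,\cdot}$ sorts the times increasingly. Hence, restricting attention to the $m$ fastest agents, the total number of completed gradients by time $\mathcal{T}$ is at least $\sum_{i=1}^m\lfloor\mathcal{T}/h_{t,\pi_{t,i}}\rfloor\ge \mathcal{T}\sum_{i=1}^m\nicefrac{1}{h_{t,\pi_{t,i}}}-m$. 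Setting this $\ge M$ and solving for $\mathcal{T}$ shows the call collects $M$ gradients no later than $\big(\frac{1}{m}\sum_{i=1}^m\nicefrac{1}{h_{t,\pi_{t,i}}}\big)^{-1}\big(\nicefrac{M}{m}+1\big)$; since this holds for \emph{every} $m\in[n]$, the call finishes within the minimum over $m$ of this quantity. Two structural facts make this go through: (i) running all $n$ agents can only be faster than running any fixed $m$-subset, so it suffices to lower-bound the throughput of the best subset; and (ii) the additive $+1$ is exactly the floor correction $-m$ that absorbs the startup latency of the $m$-th agent.

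Finally I would assemble the bound. Factoring $H$ out of each harmonic mean turns the per-iteration time into $H\cdot\min_{m\in[n]}\big[\big(\frac{1}{m}\sum_{i=1}^m\nicefrac{1}{\dot{h}_{t,\pi_{t,i}}}\big)^{-1}(\nicefrac{M}{m}+1)\big]$, and substituting $H=\tilde{\cO}(\nicefrac{1}{1-\gamma})$ produces the $\nicefrac{1}{1-\gamma}$ prefactor. Summing over the $T$ main iterations and adding the single initialization call (batch $M_{\text{init}}$, times at $t=0$) gives the two-term expression in the statement, with the minimum over $m$ kept \emph{inside} the sum because the optimal subset size may change from iteration to iteration as $\dot{h}_{t,i}$ varies. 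The main obstacle, and the only genuinely non-routine step, is the time lemma of the previous paragraph: care is needed to justify that the greedy ``collect the first $M$ finished gradients'' rule dominates every static subset strategy and that the floor correction is uniform in $m$. Once that is in place, the rest is substitution and summation, and it uses no static form of Assumption~\ref{ass:time} beyond the per-iteration time bounds $\dot{h}_{t,i}$.
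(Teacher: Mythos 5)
Your proposal is correct and takes essentially the same route as the paper: the paper's proof of this theorem simply reruns the argument of Theorem~\ref{thm:main_1}, replacing the static per-call bound of Lemma~\ref{lemma:rennala} with its per-iteration analogue for times $h_{t,i}=\dot{h}_{t,i}\times H$ --- exactly the floor-counting lemma you reconstruct (agents compute continuously, restrict to the $m$ fastest, use $\lfloor x\rfloor \geq x-1$, and minimize over $m$) --- and then sums over the $T=\cO\left(\nicefrac{L_g\Delta}{\varepsilon^2}+\nicefrac{\sqrt{L_h}\Delta}{\varepsilon^{3/2}}\right)$ iterations plus the initialization call. Your substitution checks for $M$, $M_{\text{init}}$, and $H=\tilde{\cO}\left(\nicefrac{1}{1-\gamma}\right)$ likewise match the paper's.
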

To simplify the discussion, we further focus on Assumption~\ref{ass:time} and assume that the bounds on computation times are $\{h_i\}.$
\subsection{Time complexity with communication times}\label{sec:time_comm}
We now generalize Theorem~\ref{thm:main_1} by taking into account the communication time $\kappa$:
\begin{theorembox}
\begin{restatable}{theorem}{THEOREMRENNALACOMM}
\label{thm:rennala}
Consider the results and assumptions of Theorem \ref{Theorem_1}. Additionally, consider that Assumption~\ref{ass:time} holds. 
Taking $M_{\text{init}} = \max\big\{\big\lceil\frac{\sigma^2}{\varepsilon^{2}}\big\rceil, 1\big\}$ and $M = \max\big\{\big\lceil\big(\frac{\sigma^2}{\varepsilon^2} + \frac{\sigma^2 \sqrt{L_h} \Delta}{\varepsilon^{7/2}}\big) / \big(\frac{L_g \Delta}{\varepsilon^2} + \frac{\sqrt{L_h} \Delta}{\varepsilon^{3/2}}\big)\big\rceil, 1\big\},$
the time required to find an $\varepsilon$--stationary point by \algname{Rennala NIGT} (Algorithms~\ref{Algorithm_1} and \ref{alg:rennala}) is
\begin{equation*}
\begin{aligned}
    \textstyle \tilde{\mathcal{O}}\left(\kappa \left(\frac{L_g\Delta}{\varepsilon^2} + \frac{\sqrt{L_h} \Delta}{\varepsilon^{3/2}}\right) + \frac{1}{1 - \gamma} \min\limits_{m\in [n]}\left[\left(\frac{1}{m}\sum\limits^m_{i=1}\frac{1}{\dot{h}_i}\right)^{-1}\left(\frac{L_g\Delta}{\varepsilon^2} + \frac{\sqrt{L_h} \Delta}{\varepsilon^{3/2}} + \frac{\sigma^2}{m \varepsilon^2} + \frac{\sigma^2 \sqrt{L_h} \Delta}{m \varepsilon^{7/2}}\right)\right]\right).
\end{aligned}
\end{equation*}
\end{restatable}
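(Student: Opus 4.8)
The plan is to decompose the wall-clock time of each global iteration of \algname{Rennala NIGT} into a \emph{computation} part (running \algname{AggregateRennala} to collect $M$ stochastic gradients) and a \emph{communication} part (the single \algname{AllReduce} at the end of Algorithm~\ref{alg:rennala}), and then sum over iterations. By Theorem~\ref{Theorem_1}, $\Exp{\norm{\nabla J(\bar\theta_T)}} \leq \varepsilon$ after $T = \cO(\frac{L_g\Delta}{\varepsilon^2} + \frac{\sqrt{L_h}\Delta}{\varepsilon^{3/2}} + \frac{\sigma}{\sqrt{M_{\text{init}}}\varepsilon} + \frac{\sigma^3}{M\sqrt{M_{\text{init}}}\varepsilon^3} + \frac{\sigma^2\sqrt{L_h}\Delta}{M\varepsilon^{7/2}})$ iterations. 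First I would substitute $M_{\text{init}} = \max\{\lceil\sigma^2/\varepsilon^2\rceil,1\}$ and the chosen $M$ and verify that the three variance-dependent terms are all dominated by $A \eqdef \frac{L_g\Delta}{\varepsilon^2} + \frac{\sqrt{L_h}\Delta}{\varepsilon^{3/2}}$: indeed $\frac{\sigma}{\sqrt{M_{\text{init}}}\varepsilon} = \cO(1)$, while $\frac{\sigma^2}{M\varepsilon^2}$ and $\frac{\sigma^2\sqrt{L_h}\Delta}{M\varepsilon^{7/2}}$ are each $\leq A$ because $M \geq B/A$ with $B \eqdef \frac{\sigma^2}{\varepsilon^2} + \frac{\sigma^2\sqrt{L_h}\Delta}{\varepsilon^{7/2}}$ dominating both numerators. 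This yields $T = \cO(A)$.

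Next I would quantify the computation time of a single \algname{AggregateRennala}$(\cdot,M,H)$ call. The key ingredient here is the asynchronous batch-collection lemma of \citet{tyurin2023optimal}: under Assumption~\ref{ass:time}, where generating one $g_H$ on agent $i$ costs $h_i = \dot{h}_i H$ seconds, the greedy ``wait for any agent, then immediately restart it'' strategy collects $M$ gradients in time $\tilde{\cO}(\min_{m\in[n]}[(\frac{1}{m}\sum_{i=1}^m \frac{1}{h_i})^{-1}(\frac{M}{m} + 1)])$, where the prefix-harmonic-mean factor times $M/m$ is the steady-state throughput of the $m$ fastest agents and the $+1$ captures their startup latency. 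Multiplying by the $T$ iterations of the main loop, adding the single initial \algname{AggregateRennala}$(\theta_0,M_{\text{init}},H)$ call, and pulling the common factor $H = \tilde{\cO}(1/(1-\gamma))$ out front, the computation time becomes $\tilde{\cO}(\frac{1}{1-\gamma}\min_{m}[(\frac{1}{m}\sum_{i=1}^m \frac{1}{\dot{h}_i})^{-1}(\frac{TM}{m} + T)])$ (the initialization term is absorbed). Using $T = \cO(A)$ together with $M = \cO(B/A + 1)$ gives $TM = \cO(B + A)$, so $\frac{TM}{m} = \cO(\frac{B}{m} + A)$ and $T = \cO(A)$; substituting $B$ recovers exactly the bracketed bound $\frac{L_g\Delta}{\varepsilon^2} + \frac{\sqrt{L_h}\Delta}{\varepsilon^{3/2}} + \frac{\sigma^2}{m\varepsilon^2} + \frac{\sigma^2\sqrt{L_h}\Delta}{m\varepsilon^{7/2}}$, i.e. the computation part of Theorem~\ref{thm:main_1}.

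Finally I would add the communication cost. Each global iteration performs exactly one vector exchange (the \algname{AllReduce} returning $\bar g$), costing at most $\kappa$ seconds by Assumption~\ref{ass:time}; together with the single exchange needed to form $d_0$ this gives a total communication time of $\kappa(T+1) = \tilde{\cO}(\kappa A) = \tilde{\cO}(\kappa(\frac{L_g\Delta}{\varepsilon^2} + \frac{\sqrt{L_h}\Delta}{\varepsilon^{3/2}}))$. Since computation and communication are performed sequentially within each iteration, the total time is the sum of the two parts, which is precisely the claimed bound. The main obstacle is the batch-collection lemma in the second step: establishing that the asynchronous greedy aggregation achieves the prefix-minimized harmonic-mean time (with the correct $\min_{m}$ and $+1$ startup term) is the genuine technical content, and it must be combined carefully with the ceilings and $\max\{\cdot,1\}$ in the definitions of $M$ and $M_{\text{init}}$ so that the $TM \approx B$ substitution holds up to constants in all regimes; the remaining algebra is routine.
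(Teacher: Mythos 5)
Your proposal is correct and follows essentially the same route as the paper: invoke Theorem~\ref{Theorem_1} and the choices of $M,M_{\text{init}}$ to get $T=\cO\bigl(\frac{L_g\Delta}{\varepsilon^2}+\frac{\sqrt{L_h}\Delta}{\varepsilon^{3/2}}\bigr)$, bound each \algname{AggregateRennala} call by the prefix-harmonic-mean batch-collection bound (which the paper proves itself as Lemma~\ref{lemma:rennala} via the floor-counting argument $\sum_i\lfloor t/h_i\rfloor\geq M$, rather than citing it), add $\cO(\kappa)$ communication per iteration, absorb the initialization call, and substitute $h_i=\dot{h}_i H$ with $H=\tilde{\cO}(1/(1-\gamma))$. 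The only slip is that each round actually incurs two $\kappa$-bounded exchanges (the broadcast of $\theta$ in addition to the final \algname{AllReduce}), but this changes nothing beyond a constant factor.
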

\end{theorembox}
Compared to \eqref{eq:dbkUuJTDQcUoqRPwG}, we obtain an additional time term $\kappa \left(\nicefrac{L_g\Delta}{\varepsilon^2} + \nicefrac{\sqrt{L_h} \Delta}{\varepsilon^{3/2}}\right),$
which accounts for the communication time complexity. Recall that the method of \citet{lan2024asynchronous} performs asynchronous updates, where each agents independently send gradients to the server, which is non-communication-efficient. For small $\varepsilon,$ in \citet{lan2024asynchronous}, the fastest agent sends at least $\cO(\max\{n^{4/3}\varepsilon^{-7/3}, n^{-1} \varepsilon^{-7/2}\})$ stochastic gradients (according to their Theorem 5.2). Consequently, their communication time complexity is at least\footnote{since $\max\{n^{4/3}\varepsilon^{-7/3}, n^{-1} \varepsilon^{-7/2}\} = \Omega(\varepsilon^{-3})$} 
$\cO(\kappa \varepsilon^{-3}),$ whereas our algorithm achieves a significantly better dependence on $\varepsilon,$ namely $\cO(\kappa \varepsilon^{-2}).$ In the extreme case, when only the fastest agent participates and contributes, \algname{AFedPG} reduces to a non-distributed stochastic method in which a single agent sends a gradient for every oracle call. In this setting, \algname{AFedPG} provably requires at least $\Omega(\varepsilon^{-7/2})$ communications \citep{arjevani2020second}, resulting in an even larger gap compared to our complexity of $\cO(\varepsilon^{-2}).$ Moreover, our method supports \algname{AllReduce}, an important feature in practical engineering scenarios.

In Section~\ref{sec:proof_sketch}, we prove a lower bound for functions and stochastic gradients satisfying Proposition~\ref{Proposition_1}. Although we do not establish a lower bound under Assumptions~\ref{Assumption_1} and \ref{Assumption_2}, we believe our result still reflects the fundamental lower bound of the considered task, since all recent state-of-the-art results \citep{fatkhullin2023stochastic,lan2024asynchronous}, including ours, rely solely on Proposition~\ref{Proposition_1}. Comparing Theorems~\ref{thm:rennala} and \ref{thm:lower-p}, one can see that there is still a gap, for instance, between $\kappa \varepsilon^{-2}$ and $\kappa \varepsilon^{-12/7}.$ We obtain the latter term by reducing our problem to the lower bound of \citet{carmon2021lower} for $(L_g,L_h)$--twice smooth functions. To the best of our knowledge, it remains an open problem whether the $\varepsilon^{-12/7}$ rate can be achieved, even in the non-distributed deterministic case.

\section{Time Complexities in the Heterogeneous Setup}
\label{sec:main_heter}
In the heterogeneous setup, we consider Algorithm~\ref{alg:malenia} instead of Algorithm~\ref{alg:rennala}. Algorithm~\ref{alg:malenia} is also an asynchronous aggregation scheme of stochastic gradients adapted to the heterogeneous setting. Unlike Algorithm~\ref{alg:rennala}, which is specialized for the homogeneous setup, Algorithm~\ref{alg:malenia} ensures that the returned vector is unbiased in the heterogeneous setup: $\Exp{\nicefrac{1}{n} \sum_{i=1}^{n} \nicefrac{\bar{g}_i}{M_i}} = \nicefrac{1}{n} \sum_{i=1}^{n} {J}_{i,H}(\theta) = {J}_{H}(\theta).$ For \algname{Malenia NIGT}, we can prove the following result:
\begin{theorembox}
\begin{restatable}{theorem}{THEOREMHETER}
\label{thm:malenia}
Consider the results and assumptions of Theorem~\ref{Theorem_1}. Additionally, consider that Assumption~\ref{ass:time} holds. 
Taking $M_{\text{init}} = \max\big\{\big\lceil\frac{\sigma^2}{\varepsilon^{2}}\big\rceil, 1\big\}$ and $M = \max\big\{\big\lceil\big(\frac{\sigma^2}{\varepsilon^2} + \frac{\sigma^2 \sqrt{L_h} \Delta}{\varepsilon^{7/2}}\big) / \big(\frac{L_g \Delta}{\varepsilon^2} + \frac{\sqrt{L_h} \Delta}{\varepsilon^{3/2}}\big)\big\rceil, 1\big\},$
the time required to find an $\varepsilon$--stationary point by \algname{Malenia NIGT} (Algorithms~\ref{Algorithm_1} and \ref{alg:malenia}) is
\begin{equation*}
\begin{aligned}
    \textstyle  \tilde{\mathcal{O}}\left(\kappa \left(\frac{L_g\Delta}{\varepsilon^2} + \frac{\sqrt{L_h} \Delta}{\varepsilon^{3/2}}\right) + \frac{1}{1 - \gamma} \left[\dot{h}_n \left(\frac{L_g\Delta}{\varepsilon^2} + \frac{\sqrt{L_h} \Delta}{\varepsilon^{3/2}}\right) + \left(\frac{1}{n}\sum\limits^n_{i=1} \dot{h}_i\right)\left(\frac{\sigma^2}{n \varepsilon^2} + \frac{\sigma^2 \sqrt{L_h} \Delta}{n \varepsilon^{7/2}}\right)\right]\right).
\end{aligned}
\end{equation*}
\end{restatable}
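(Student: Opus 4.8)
The plan is to treat Theorem~\ref{Theorem_1} as a black box that supplies the iteration count $T$ (it does not invoke Assumption~\ref{ass:time}), and then to convert $T$ into wall-clock time under Assumption~\ref{ass:time}, exactly as in the \algname{Rennala NIGT} passage from Theorem~\ref{Theorem_1} to Theorems~\ref{thm:main_1} and \ref{thm:rennala}, but with the \algname{AggregateMalenia} batching rule replacing \algname{AggregateRennala}. Writing $A := \frac{L_g\Delta}{\varepsilon^2} + \frac{\sqrt{L_h}\Delta}{\varepsilon^{3/2}}$ and $B := \frac{\sigma^2}{\varepsilon^2} + \frac{\sigma^2\sqrt{L_h}\Delta}{\varepsilon^{7/2}}$, the first step simplifies the bound of Theorem~\ref{Theorem_1} under $M_{\text{init}} = \lceil\sigma^2/\varepsilon^2\rceil$ and $M = \lceil B/A\rceil$. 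With $M_{\text{init}} \asymp \sigma^2/\varepsilon^2$ the term $\sigma/(\sqrt{M_{\text{init}}}\,\varepsilon)$ is $\mathcal{O}(1)$ and $\sigma^3/(M\sqrt{M_{\text{init}}}\,\varepsilon^3)$ collapses to $\sigma^2/(M\varepsilon^2)$, so $T = \mathcal{O}(A + B/M)$; substituting $M \asymp B/A$ gives $B/M \asymp A$ and hence $T = \mathcal{O}(A) = \mathcal{O}(L_g\Delta/\varepsilon^2 + \sqrt{L_h}\Delta/\varepsilon^{3/2})$.

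The heart of the argument is bounding the time of one \algname{AggregateMalenia}$(\theta, M, H)$ call. Each agent $i$ produces a stochastic gradient every at most $h_i = \dot h_i H$ seconds, so after running $t$ seconds it has contributed $M_i \geq \lfloor t/h_i\rfloor$ samples. I would show that the stopping test $(\frac{1}{n}\sum_{i=1}^n 1/M_i)^{-1} \geq M/n$ fires by time $t^\star = \mathcal{O}(h_n + \frac{M}{n}\bar h)$, where $\bar h := \frac{1}{n}\sum_{i=1}^n h_i$. Indeed, once $t \geq 2h_n$ every agent has $M_i \geq \lfloor t/h_i\rfloor \geq t/(2h_i)$, whence $\sum_i 1/M_i \leq (2/t)\sum_i h_i = 2n\bar h/t$; the requirement $2n\bar h/t \leq n^2/M$ is equivalent to $t \geq 2M\bar h/n$, so the criterion holds for all $t \geq \max\{2h_n, 2M\bar h/n\}$. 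The warm-up $h_n$ absorbs the floor effect and the fact that every agent (including the slowest) must deliver at least one sample, while the leading term depends only on the arithmetic mean $\bar h$ — this is the straggler-resilience of the harmonic-mean rule, since a slow agent with small $M_i$ contributes only $h_i/t$ to $\sum_i 1/M_i$. The same computation yields $\sum_i 1/M_i \leq n^2/M$ at the stopping time, so the returned estimator $\frac{1}{n}\sum_i \bar g_i/M_i$ has variance at most $\frac{\sigma^2}{n^2}\sum_i 1/M_i \leq \sigma^2/M$; this is precisely the variance level that makes $M$ play the same role in the heterogeneous analysis as it does in the homogeneous one underlying Theorem~\ref{Theorem_1}.

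Combining the pieces, the total computation time is $T$ times the per-call bound. Using $h_i = \dot h_i H$, $\bar h = H\cdot\frac{1}{n}\sum_i\dot h_i$, $H = \tilde{\mathcal{O}}(1/(1-\gamma))$, and $MA = \mathcal{O}(B+A)$ (the additive $A$ being dominated by the $\dot h_n A$ term, since $\frac{A}{n}\bar{\dot h} \leq A\dot h_n$), the quantity $T\cdot\mathcal{O}(h_n + \frac{M}{n}\bar h)$ becomes
\begin{equation*}
\tilde{\mathcal{O}}\!\left(\frac{1}{1-\gamma}\Big[\dot h_n A + \frac{MA}{n}\Big(\frac{1}{n}\sum_{i=1}^n\dot h_i\Big)\Big]\right) = \tilde{\mathcal{O}}\!\left(\frac{1}{1-\gamma}\Big[\dot h_n A + \frac{B}{n}\Big(\frac{1}{n}\sum_{i=1}^n\dot h_i\Big)\Big]\right),
\end{equation*}
which, after unfolding $A$ and $B$, is exactly the bracketed compute term of the statement. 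The initial \algname{AggregateMalenia}$(\theta_0, M_{\text{init}}, H)$ call costs $\tilde{\mathcal{O}}(\frac{1}{1-\gamma}[\dot h_n + \frac{\sigma^2}{n\varepsilon^2}\frac{1}{n}\sum_i\dot h_i])$ by the same per-call bound with $M$ replaced by $M_{\text{init}}$, and is dominated by terms already present. Finally, each of the $T$ iterations performs one broadcast and one \algname{AllReduce} (agents aggregate locally), each at most $\kappa$ by Assumption~\ref{ass:time}, so communication costs $\mathcal{O}(\kappa T) = \mathcal{O}(\kappa A) = \mathcal{O}(\kappa(L_g\Delta/\varepsilon^2 + \sqrt{L_h}\Delta/\varepsilon^{3/2}))$, giving the first term.

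I expect the main obstacle to be the per-call stopping-time analysis: one must treat the floor/discretization carefully and prove that the harmonic-mean threshold converts the worst-case per-sample times into a dependence on the arithmetic mean $\bar h$ rather than on $h_n$, with $h_n$ entering only through the unavoidable warm-up in which every agent delivers at least one sample. The remaining bookkeeping — substituting $H$, using $MA = \mathcal{O}(B+A)$, and absorbing the initial call and the communication — is routine once this lemma is established.
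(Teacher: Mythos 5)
Your proposal is correct and follows essentially the same route as the paper's proof: Theorem~\ref{Theorem_1} gives $T=\mathcal{O}\big(\frac{L_g\Delta}{\varepsilon^2}+\frac{\sqrt{L_h}\Delta}{\varepsilon^{3/2}}\big)$ under the stated $M$, $M_{\text{init}}$; the per-call stopping-time bound $\mathcal{O}\big(\kappa+h_n+\big(\frac{1}{n}\sum_{i=1}^n h_i\big)\frac{M}{n}\big)$ is exactly the paper's Lemma~\ref{lemma:malenia}, with your warm-up argument ($t\geq 2h_n$ and $\lfloor t/h_i\rfloor\geq t/(2h_i)$) a cosmetic variant of the paper's split into $h_n$ seconds for $M_i\geq 1$ plus $t$ additional seconds with $M_i\geq 1+\lfloor t/h_i\rfloor$. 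The remaining bookkeeping (variance $\leq\sigma^2/M$ via the exit condition, absorbing the $M_{\text{init}}$ call and the $\mathcal{O}(\kappa T)$ communication, substituting $h_i=\dot h_i H$ with $H=\tilde{\mathcal{O}}(1/(1-\gamma))$) matches the paper's derivation step for step.
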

\end{theorembox}
To the best of our knowledge, this result represents the current state-of-the-art complexity for RL problems in the heterogeneous, distributed, and asynchronous setting. Compared to Theorem~\ref{thm:main_1}, which has a harmonic-like dependency, the dependence on $\{\dot{h}_i\}$ in Theorem~\ref{thm:malenia} is mean-like in the small-$\varepsilon$ regime. This behavior is expected, as the heterogeneous case is more challenging due to agents operating with different distributions and environments. However, the term related to the communication time complexity is the same.
\section{Summary of Experiments}
\newcommand{\commonheight}{0.14}
\begin{figure}[h]
\centering
\begin{subfigure}{0.32\columnwidth}
    \centering
    \includegraphics[height=\commonheight\textheight]{./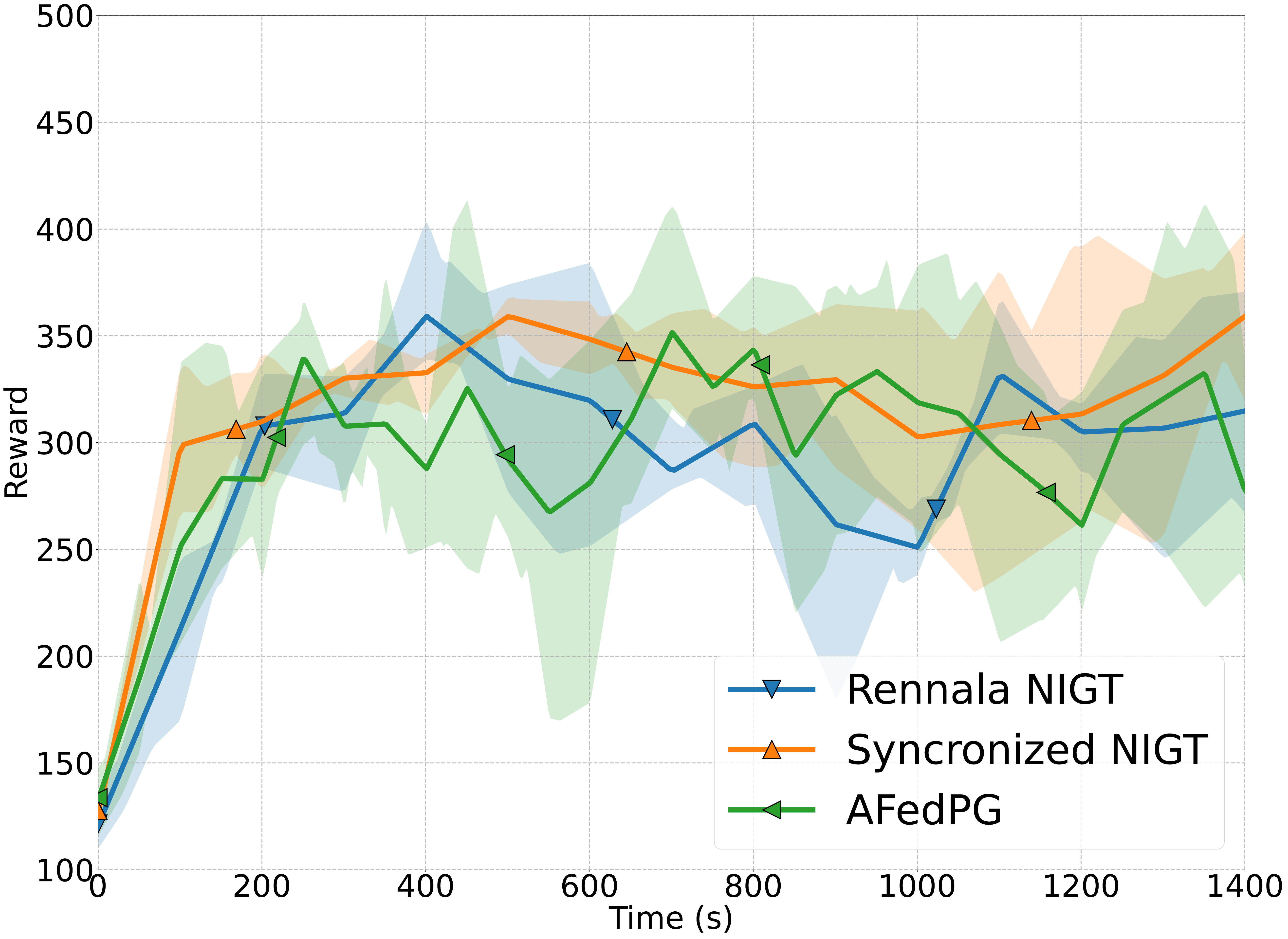}
    \caption{Equal times}
    \label{fig:e_1}
\end{subfigure}
\begin{subfigure}{0.32\columnwidth}
    \centering
    \includegraphics[height=\commonheight\textheight]{./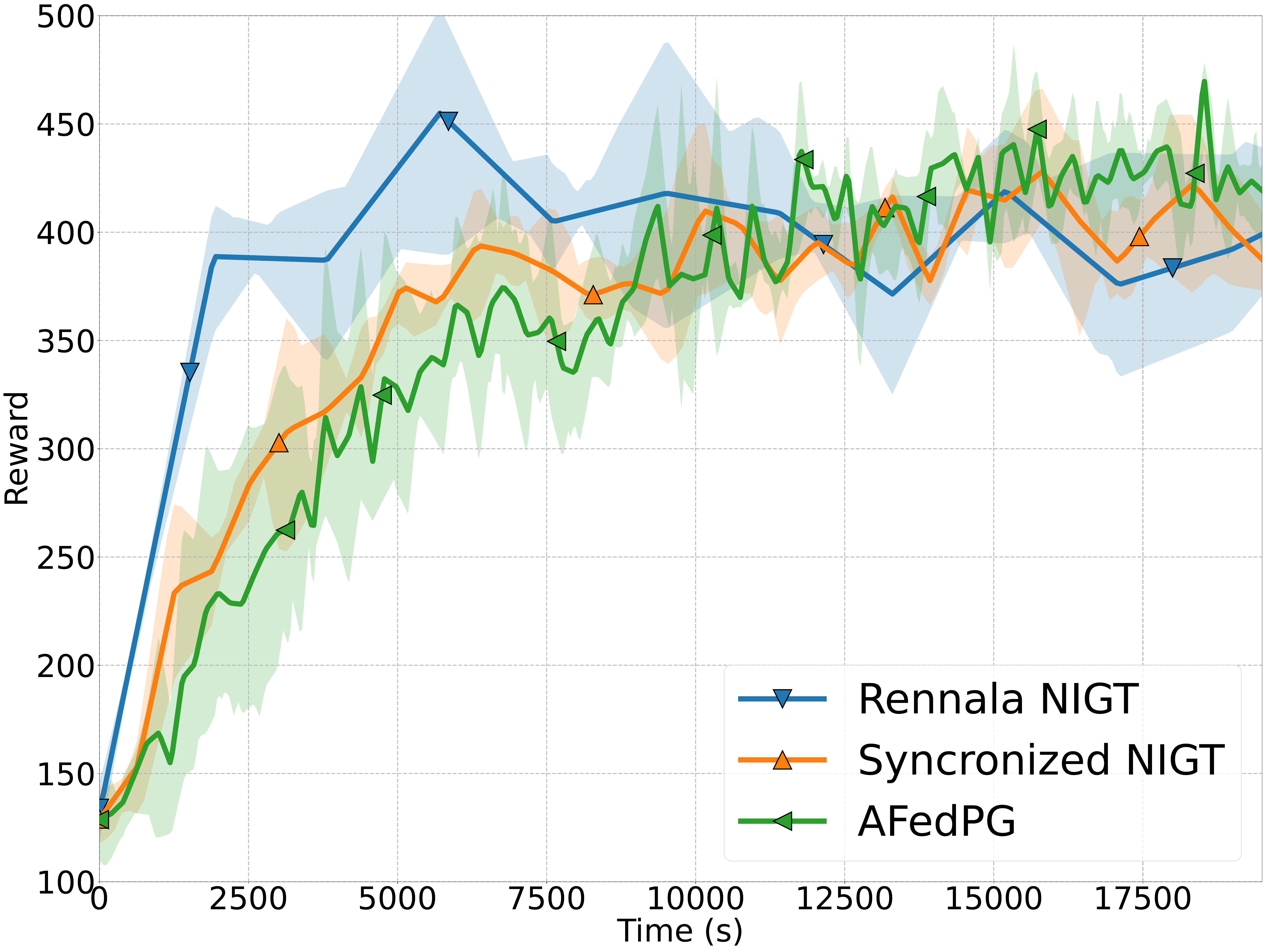}
    \caption{Heterogeneous times}
    \label{fig:e_2}
\end{subfigure}
\begin{subfigure}{0.32\columnwidth}
    \centering
    \includegraphics[height=\commonheight\textheight]{./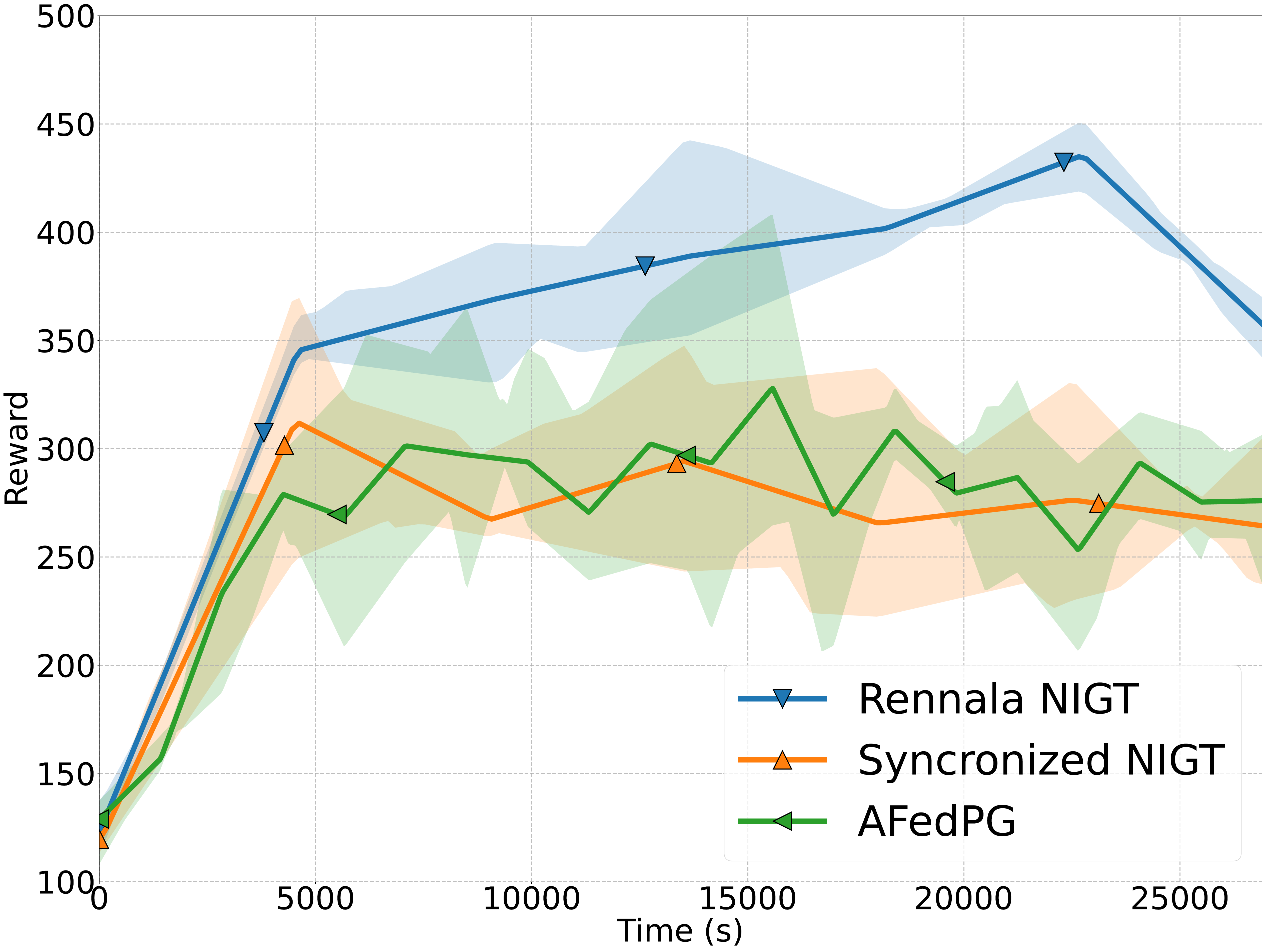}
    \caption{Increased communication times}
    \label{fig:e_3}
\end{subfigure}
\caption{Experiments on Humanoid-v4 with increasing heterogeneity of times (from left to right).}
\label{fig:ml}
\end{figure}
In this section, we empirically test the performance of \algname{Rennala NIGT} and compare it with the previous state-of-the-art method by \citep{lan2024asynchronous} and the synchronized version of \algname{NIGT} (\algname{Synchronized NIGT}). Following previous works \citep{lan2024asynchronous,fatkhullin2023stochastic}, we focus on the MuJoCo tasks \citep{todorov2012mujoco}. We defer details, parameters, and additional experiments to Section~\ref{sec:experiments}. Figure~\ref{fig:ml} shows the main results: i) when the computation times are equal, all methods exhibit almost the same performance, which is expected since they reduce to nearly the same algorithm; ii) when the computation times are heterogeneous, \algname{Rennala NIGT} converges faster than other methods; iii) when the computation times are heterogeneous and the communication times are large, \algname{Rennala NIGT} is the only robust method, and the gap increases even more. These experiments, together with the additional experiments from Section~\ref{sec:experiments}, support our theoretical results.

\section{Universal Computation Model}
Following \citep{tyurin2024tight,maranjyan2025ringmaster}, for completeness, we can extend the previously derived time complexities to arbitrary computational dynamics.
\begin{theorembox}
\begin{assumption}
\label{ass:time_univ}
\leavevmode
{
\addtolength{\leftmargini}{-2em}
\begin{itemize}
    \item We define a non-negative and continuous almost everywhere \emph{computation power} function $v_i \,:\, \R_{+} \rightarrow \R_{+}$ for all agent $i \in [n].$ 
    The number of stochastic policy gradient that agent $i$ can compute between times $t_0$ and $t_1$ is
    \begin{align}
    \label{eq:rSIiSfVcmivSKsfzoSA}
    \textstyle N_i(t_0, t_1) \eqdef \flr{\int_{t_0}^{t_1} v_i(\tau) d \tau}.
    \end{align}
    \item For simplicity, we ignore the communication time.
\end{itemize}
}
\end{assumption}
\end{theorembox}
Using these computational powers, we can formalize the changing computation behaviors of agents, taking into account random fluctuations, different trends, and disconnections in a more flexible way. In particular, when $v_i(\tau) = v_i \in \R_+$ is constant, $N_i(t_0,t_1) = \flr{v_i \times (t_1 - t_0)},$ and if agent $i$ starts calculating at time $t_0,$ then it will compute one policy gradient after $t_0 + \nicefrac{1}{v_i}$ seconds because $N_i(t_0, t_0 + \nicefrac{1}{v_i}) = 1,$ two after $t_0 + \nicefrac{2}{v_i}$ seconds, and so forth. This example reduces to Assumption~\ref{ass:time} with $h_i \equiv \nicefrac{1}{v_i}.$ However, Assumption~\ref{ass:time_univ} allows us to capture virtually any computational scenarios (see examples in \citep{tyurin2024tight}).
\begin{theorembox}
\begin{restatable}{theorem}{THEOREMRENNALAUNIV}
\label{thm:main_1_univ}
Consider the results and assumptions of Theorem \ref{Theorem_1}. Additionally, consider that Assumption~\ref{ass:time_univ} holds. Taking $M_{\text{init}} = \max\big\{\big\lceil\frac{\sigma^2}{\varepsilon^{2}}\big\rceil, 1\big\}$ and $M = \max\big\{\big\lceil\big(\frac{\sigma^2}{\varepsilon^2} + \frac{\sigma^2 \sqrt{L_h} \Delta}{\varepsilon^{7/2}}\big) / \big(\frac{L_g \Delta}{\varepsilon^2} + \frac{\sqrt{L_h} \Delta}{\varepsilon^{3/2}}\big)\big\rceil, 1\big\},$ the time required to find an $\varepsilon$--stationary point by \algname{Rennala NIGT} (Algorithms~\ref{Algorithm_1} and \ref{alg:rennala}) is $t_{T}$ seconds, where $T = \mathcal{O}\left(\frac{L_g \Delta}{\varepsilon^2} + \frac{\sqrt{L_h} \Delta}{\varepsilon^{3/2}}\right)$ and the sequence $t_k$ is defined recursively:
\begin{align}
    \label{eq:LDRdpSKStViRkUJBPHce}
    \textstyle t_k = \min\left\{t \, : \, \sum\limits_{i=1}^n N_i(t_{k-1},t) \geq M\right\} \qquad \forall k \geq 1
\end{align}
with $t_{0} = \min\left\{t \, : \, \sum_{i=1}^n N_i(0,t) \geq M_{\text{init}}\right\}.$
\end{restatable}
\end{theorembox}
One can show that this theorem admits an analytical formula and reduces to Theorem~\ref{thm:main_1} when $\{v_i\}$ are constant functions. In general, to find $t_T,$ one should solve \eqref{eq:LDRdpSKStViRkUJBPHce} for each particular choice of ${v_i},$ and it is always possible to do so numerically. First, find the smallest $t \geq 0$ such that $\sum_{i=1}^n N_i(0,t) \geq M_{\text{init}},$ where $\sum_{i=1}^n N_i(0,t)$ is the number of gradients that the agents can compute in parallel after $t$ seconds. Then, repeat the same procedure recursively for \eqref{eq:LDRdpSKStViRkUJBPHce}.
We can also extend Theorem~\ref{thm:malenia} (ignoring communication times):
\begin{theorembox}
\begin{restatable}{theorem}{THEOREMHETERUNIV}
\label{thm:malenia_univ}
Consider the results and assumptions of Theorem~\ref{Theorem_1}. Additionally, consider that Assumption~\ref{ass:time_univ} holds. 
Taking $M_{\text{init}} = \max\big\{\big\lceil\frac{\sigma^2}{\varepsilon^{2}}\big\rceil, 1\big\}$ and $M = \max\big\{\big\lceil\big(\frac{\sigma^2}{\varepsilon^2} + \frac{\sigma^2 \sqrt{L_h} \Delta}{\varepsilon^{7/2}}\big) / \big(\frac{L_g \Delta}{\varepsilon^2} + \frac{\sqrt{L_h} \Delta}{\varepsilon^{3/2}}\big)\big\rceil, 1\big\},$
the time required to find an $\varepsilon$--stationary point by \algname{Malenia NIGT} (Algorithms~\ref{Algorithm_1} and \ref{alg:malenia}) is $t_{T}$ seconds, where $T = \mathcal{O}\left(\frac{L_g \Delta}{\varepsilon^2} + \frac{\sqrt{L_h} \Delta}{\varepsilon^{3/2}}\right)$ and the sequence $t_k$ is defined recursively:
\begin{align}
    \label{eq:LDRdpSKStViRkUJBPHce_malenia}
    \textstyle t_k = \min\left\{t \, : \, \left(\frac{1}{n} \sum_{i=1}^n \frac{1}{N_i(t_{k-1}, t)}\right)^{-1} \geq \frac{M}{n}\right\} \qquad \forall k \geq 1
\end{align}
with $t_{0} = \min\left\{t \, : \, \left(\frac{1}{n} \sum_{i=1}^n \frac{1}{N_i(0, t)}\right)^{-1} \geq \frac{M_{\text{init}}}{n}\right\}.$
\end{restatable}
\end{theorembox}

\section*{Acknowledgements}
The work was supported by the grant for research centers in the field of AI provided by the Ministry of Economic Development of the Russian Federation in accordance with the agreement 000000C313925P4F0002 and the agreement №139-10-2025-033.

\bibliography{iclr2026_conference}
\bibliographystyle{iclr2026_conference}

\newpage
\appendix

\section{Global Convergence}
Under additional technical Assumptions 2.1 and 4.6 from \citep{Ding2022} about the Fisher information matrix induced by the policy $\pi_{\theta}$ and the initial state distribution, following previous works \citep{Ding2022,fatkhullin2023stochastic,lan2024asynchronous}, we can guarantee global convergence up to an $\sqrt{2} \varepsilon' / \sqrt{\mu}$ neighborhood:
\begin{restatable}{theorem}{THEOREMGLOBAL}\label{Theorem_1_global}
Let Assumptions \ref{Assumption_1}, \ref{Assumption_2}, and Assumptions 2.1 and 4.6 from \citep{Ding2022} hold. Consider \algname{Rennala NIGT} (Algorithm~\ref{Algorithm_1} and Algorithm~\ref{alg:rennala}) in the homogeneous setup, or \algname{Malenia NIGT} (Algorithms~\ref{Algorithm_1} and Algorithm~\ref{alg:malenia}) in the heterogeneous setup. Let $M_{\text{init}} = \max\big\{\big\lceil\frac{\sigma^2}{\mu \varepsilon^{2}}\big\rceil, 1\big\},$ $M = \max\big\{\big\lceil\big(\frac{\sigma^2}{\mu \varepsilon^2} + \frac{\sigma^2 \sqrt{L_h}}{\mu^{7/4} \varepsilon^{5/2}}\big) / \big(\frac{L_g}{\mu \varepsilon} + \frac{\sqrt{L_h}}{\mu^{3/4} \sqrt\varepsilon}\big)\big\rceil, 1\big\},$ $\eta=\min\big\{\frac{M \mu \varepsilon^2}{64 \sigma^2}, \frac{1}{2}\big\},$ $\alpha=\min\big\{\frac{\sqrt{\mu} \varepsilon}{8 L_g},\frac{\eta \mu^{1/4} \sqrt\varepsilon}{4 \sqrt{L_h}}, \frac{1}{\sqrt{2 \mu}}, \frac{\varepsilon \eta \sqrt{M_{\text{init}}}}{8 \sigma}\big\},$ $H= \max\big\{\log_{\gamma}\big(\frac{\sqrt{\mu} \varepsilon \eta}{64 \max\{D_g,\alpha D_h\}}\big), 1 \big\}.$
Then $\Exp{J^* -J(\theta_{T})} \leq \varepsilon + \frac{\sqrt{2} \varepsilon'}{\sqrt{\mu}}$ after
    $T=\mathcal{O}\left(\left(\frac{L_g}{\mu \varepsilon} + \frac{\sqrt{L_h}}{\mu^{3/4} \sqrt\varepsilon}\right) \log \left(\frac{\Delta}{\varepsilon}\right)\right)$
global iterations, where $\varepsilon' \eqdef \frac{\mu_F\sqrt{\varepsilon_{bias}}}{M_g(1-\gamma)}$ and $\mu \eqdef \frac{\mu_F^2}{2M_g^2}.$ Parameter $\varepsilon_{bias}$ is an approximation error and $\mu_F$ is the smallest eigenvalue of the Fisher information matrix induced by the policy $\pi_{\theta}$ and and the initial state distribution \citep{Ding2022}.
\end{restatable}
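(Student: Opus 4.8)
The plan is to combine the stationarity machinery already developed for Theorem~\ref{Theorem_1} with a gradient-domination (PL-type) inequality supplied by the Fisher-information assumptions of \citep{Ding2022}, which convert a bound on $\Exp{\norm{\nabla J(\theta_t)}}$ into a bound on the optimality gap $J^* - J(\theta_t)$. First I would record the consequence of Assumptions~2.1 and~4.6 of \citep{Ding2022}: for the chosen constants $\mu = \mu_F^2/(2M_g^2)$ and $\varepsilon' = \mu_F\sqrt{\varepsilon_{bias}}/(M_g(1-\gamma))$, one has the relaxed weak gradient domination $\sqrt{2\mu}\,(J^* - J(\theta)) \leq \norm{\nabla J(\theta)} + \varepsilon'$ for every $\theta \in \R^d$. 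This is the single new structural ingredient; everything else is inherited from the local analysis, and it is exactly what lets $J^*-J(\theta_T)$ collapse to a neighborhood of radius $\sqrt{2}\varepsilon'/\sqrt{\mu}$.

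Next I would set up a per-iteration contraction for the gap $\delta_t \eqdef J^* - J(\theta_t)$. Starting from the $L_g$-smoothness of $J$ from Proposition~\ref{Proposition_1} applied to the normalized ascent step $\theta_{t+1} = \theta_t + \alpha d_t/\norm{d_t}$, together with the elementary bound $\inp{\nabla J(\theta_t)}{d_t/\norm{d_t}} \geq \norm{\nabla J(\theta_t)} - 2\norm{\nabla J(\theta_t) - d_t}$, I obtain
\begin{equation*}
\delta_{t+1} \leq \delta_t - \alpha\norm{\nabla J(\theta_t)} + 2\alpha\norm{\nabla J(\theta_t) - d_t} + \tfrac{L_g\alpha^2}{2}.
\end{equation*}
Substituting gradient domination yields the geometric recursion $\delta_{t+1} \leq (1-\alpha\sqrt{2\mu})\,\delta_t + \alpha\varepsilon' + 2\alpha\norm{\nabla J(\theta_t) - d_t} + \tfrac{L_g\alpha^2}{2}$, where the constraint $\alpha \leq 1/\sqrt{2\mu}$ (the third term in the definition of $\alpha$) guarantees a contraction factor in $[0,1)$. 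Unrolling over $t$ and taking expectations gives linear decay of $\Exp{\delta_T}$ to a neighborhood whose radius is governed by $\varepsilon'$ together with the accumulated, geometrically weighted momentum error $\sum_t (1-\alpha\sqrt{2\mu})^{T-1-t}\,\Exp{\norm{\nabla J(\theta_t) - d_t}}$; solving $(1-\alpha\sqrt{2\mu})^T\Delta \leq \varepsilon$ for $T$ then produces the stated $\cO((\tfrac{L_g}{\mu\varepsilon} + \tfrac{\sqrt{L_h}}{\mu^{3/4}\sqrt\varepsilon})\log(\Delta/\varepsilon))$ iteration count, once $M$ is large enough to force $\eta$ to its cap $\tfrac12$ so that the second factor emerges from $1/(\alpha\sqrt{2\mu})$.

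The remaining and most delicate step is controlling the weighted momentum error, and this is where I expect the main obstacle to lie. Here I would reuse the momentum recursion and second-order-smoothness estimates from the proof of Theorem~\ref{Theorem_1}: splitting $\norm{\nabla J(\theta_t) - d_t} \leq L_g\norm{\theta_t - \widetilde\theta_t} + \norm{\nabla J(\widetilde\theta_t) - d_t}$, bounding the first term by $L_g\alpha(1-\eta)/\eta$, and invoking the \algname{NIGT} analysis to control $\Exp{\norm{\nabla J(\widetilde\theta_t) - d_t}}$ through the variance $\sigma^2/M$ of the aggregated estimator (Proposition~\ref{Proposition_1}), the extrapolation-induced Hessian term $L_h$, and the momentum parameter $\eta$. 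The parameter choices are calibrated to balance these contributions: the extra constraint $\alpha \leq \varepsilon\eta\sqrt{M_{\text{init}}}/(8\sigma)$ tames the initial error $\Exp{\norm{\nabla J(\theta_0) - d_0}} = \cO(\sigma/\sqrt{M_{\text{init}}})$, and the whole scaling effectively runs Theorem~\ref{Theorem_1} at the rescaled target accuracy $\sqrt{\mu}\,\varepsilon$. The subtlety is that in the sublinear stationarity analysis these errors are averaged \emph{uniformly}, whereas the PL contraction weights them \emph{geometrically}; I must therefore verify that the \algname{NIGT} error bound, and in particular its $\sigma$-dependent floor, still sums to $\cO(\varepsilon + \sqrt{2}\varepsilon'/\sqrt\mu)$ under the weights $(1-\alpha\sqrt{2\mu})^{T-1-t}$. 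Once this geometric-weight bookkeeping is carried out, combining it with the linear recursion above closes the proof.
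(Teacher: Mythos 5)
Your overall architecture coincides with the paper's proof: Lemma~\ref{lemma_smoothness} (the relaxed weak gradient domination from \citealp{Ding2022}) is combined with Lemma~\ref{Lemma 1} to get $\Exp{J^*-J(\theta_{t+1})} \le (1-\alpha\sqrt{2\mu})\Exp{J^*-J(\theta_t)} + \alpha\varepsilon' + 2\alpha\Exp{\norm{d_t-\nabla J(\theta_t)}} + \frac{L_g\alpha^2}{2}$, the recursion is unrolled with geometric weights, the cross-sum $\sum_{i}(1-\alpha\sqrt{2\mu})^{i}(1-\eta)^{t-i}\frac{\sigma}{\sqrt{M_{\text{init}}}}$ is handled by bounding the contraction weights by one (exactly the geometric-bookkeeping issue you flag, and exactly why the fourth constraint $\alpha \le \frac{\varepsilon\eta\sqrt{M_{\text{init}}}}{8\sigma}$ is in the theorem), and $T$ comes from $(1-\alpha\sqrt{2\mu})^{T}\Delta\le\varepsilon$. (A minor inaccuracy: the prescribed $M$ does not force $\eta=\tfrac12$; it only guarantees $\frac{\sigma^2}{M\mu\varepsilon^2}$, hence $\frac1\eta$, is $\cO\big(\frac{L_g}{\mu\varepsilon}+\frac{\sqrt{L_h}}{\mu^{3/4}\sqrt{\varepsilon}}\big)$.)

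However, your plan for the one step that carries the momentum analysis contains a genuine error. You propose the split $\norm{\nabla J(\theta_t)-d_t}\le L_g\norm{\theta_t-\widetilde\theta_t}+\norm{\nabla J(\widetilde\theta_t)-d_t}$ with first term $L_g\alpha(1-\eta)/\eta$. This destroys the cancellation that the extrapolation point is designed to produce: in Lemma~\ref{Lemma 3} the error recursion $\hat e_t=(1-\eta)\hat e_{t-1}+\eta e_t+(1-\eta)S_t+\eta Z_t$ contains \emph{only} second-order remainders $S_t,Z_t=\cO\big(L_h\alpha^2/\eta^2+\gamma^H\cdot(\cdots)\big)$, precisely because $(1-\eta)(\theta_{t-1}-\theta_t)+\eta(\widetilde\theta_t-\theta_t)=0$, so no $L_g$-dependent error term ever appears. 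Your extra term does not disappear under the stated parameters: $L_g\alpha(1-\eta)/\eta\le\frac{\sqrt{\mu}\varepsilon}{8\eta}$, and after the geometric summation (factor $\approx\sqrt{2}/\sqrt{\mu}$) it contributes $\Theta(\varepsilon/\eta)$ to the optimality gap. Since $\eta=\min\{\frac{M\mu\varepsilon^2}{64\sigma^2},\tfrac12\}$ can be arbitrarily small with the prescribed $M$ (for instance, as $L_h\to0$ one gets $\eta\approx\frac{\mu\varepsilon}{64L_g}$, so this contribution is $\Theta(L_g/\mu)$, a condition-number-sized constant), the resulting bound is not $\cO\big(\varepsilon+\frac{\sqrt{2}\varepsilon'}{\sqrt{\mu}}\big)$. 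A second, related issue: the \algname{NIGT} machinery bounds $\Exp{\norm{d_t-\nabla J_H(\theta_t)}}$, not $\Exp{\norm{d_t-\nabla J(\widetilde\theta_t)}}$; the latter satisfies no clean recursion. The fix is to drop the triangle split entirely and plug the per-iteration bound of Lemma~\ref{Lemma 3}, $\Exp{\norm{d_t-\nabla J_H(\theta_t)}}\le(1-\eta)^t\frac{\sigma}{\sqrt{M_{\text{init}}}}+\sqrt{\eta}\frac{\sigma}{\sqrt{M}}+\frac{2L_h\alpha^2}{\eta^2}+\frac{4D_g\gamma^H}{\eta}+\frac{2D_h\gamma^H\alpha}{\eta}$, directly into the contraction, which is what the paper does; the rest of your bookkeeping then goes through.
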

In this case, the time complexities of \algname{Rennala NIGT} and \algname{Malenia NIGT} follow the same structure as in Sections~\ref{sec:time_comm} and \ref{sec:main_heter}:
\begin{theorembox}
\begin{restatable}{theorem}{THEOREMRENNALGLOBAL}
Consider the results and assumptions of Theorem \ref{Theorem_1_global}. Additionally, consider that Assumption~\ref{ass:time} holds. 
The time required to find an $\varepsilon$--solution up to $\sqrt{2} \varepsilon' / \sqrt{\mu}$ neighborhood by \algname{Rennala NIGT} (Algorithms~\ref{Algorithm_1} and \ref{alg:rennala}) in the homogeneous setup is
\begin{equation*}
\begin{aligned}
    \textstyle \tilde{\mathcal{O}}\left(\kappa \left(\frac{L_g}{\mu \varepsilon} + \frac{\sqrt{L_h}}{\mu^{3/4} \sqrt\varepsilon}\right) + \frac{1}{1 - \gamma} \min\limits_{m\in [n]}\left[\left(\frac{1}{m}\sum\limits^m_{i=1}\frac{1}{\dot{h}_i}\right)^{-1}\left(\frac{L_g}{\mu \varepsilon} + \frac{\sqrt{L_h}}{\mu^{3/4} \sqrt\varepsilon} + \frac{\sigma^2}{m \mu \varepsilon^2} + \frac{\sigma^2 \sqrt{L_h}}{m \mu^{7/4} \varepsilon^{5/2}}\right)\right]\right).
\end{aligned}
\end{equation*}
\end{restatable}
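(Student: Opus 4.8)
This statement is the global-convergence analogue of Theorem~\ref{thm:rennala}, so the plan is to reuse exactly the same machinery: combine the iteration count supplied by Theorem~\ref{Theorem_1_global} with the per-call running time of \algname{AggregateRennala}, and then simplify under the stated choices of $M_{\textnormal{init}}$, $M$, and $H$. First I would invoke Theorem~\ref{Theorem_1_global} to fix the number of global iterations $T=\mathcal{O}((\frac{L_g}{\mu\varepsilon}+\frac{\sqrt{L_h}}{\mu^{3/4}\sqrt\varepsilon})\log(\Delta/\varepsilon))$ for the given parameters. Writing $\mathsf{A}\eqdef\frac{L_g}{\mu\varepsilon}+\frac{\sqrt{L_h}}{\mu^{3/4}\sqrt\varepsilon}$ and $\mathsf{B}\eqdef\frac{\sigma^2}{\mu\varepsilon^2}+\frac{\sigma^2\sqrt{L_h}}{\mu^{7/4}\varepsilon^{5/2}}$, the prescribed step counts read $T=\tilde{\mathcal{O}}(\mathsf{A})$, $M=\max\{\lceil \mathsf{B}/\mathsf{A}\rceil,1\}$, and $M_{\textnormal{init}}=\max\{\lceil \sigma^2/(\mu\varepsilon^2)\rceil,1\}$, so that $M_{\textnormal{init}}\le \mathsf{B}$.

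\textbf{Second}, the key ingredient is the running time of a single \algname{AggregateRennala}$(\cdot,K,H)$ call. Following the time-complexity lemma of \citet{tyurin2023optimal} (the same one underlying Theorem~\ref{thm:rennala} and the per-iteration bound in Theorem~\ref{thm:any_time}), collecting $K$ stochastic gradients in the \algname{Rennala} scheme, where one gradient costs $h_i=\dot{h}_i\times H$ seconds on agent $i$, requires
\[
\Theta\Big(H\cdot\min_{m\in[n]}\big[\big(\tfrac{1}{m}\textstyle\sum_{i=1}^{m}\tfrac{1}{\dot{h}_i}\big)^{-1}\big(\tfrac{K}{m}+1\big)\big]\Big)
\]
seconds of computation, plus a single communication round of cost $\kappa$ (one \algname{AllReduce}). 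Since $H=\tilde{\mathcal{O}}(1/(1-\gamma))$ by Theorem~\ref{Theorem_1_global}, the horizon contributes only the $\frac{1}{1-\gamma}$ prefactor.

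\textbf{Third}, I would assemble the total time as the one-time initialization cost (an \algname{AggregateRennala} call with $M_{\textnormal{init}}$ gradients) plus $T$ main iterations, each an \algname{AggregateRennala} call with $M$ gradients preceded by one communication round. The communication contribution is $\tilde{\mathcal{O}}(\kappa T)=\tilde{\mathcal{O}}(\kappa\mathsf{A})$, giving the first additive term $\kappa(\frac{L_g}{\mu\varepsilon}+\frac{\sqrt{L_h}}{\mu^{3/4}\sqrt\varepsilon})$. For the computation, pulling the $m$-independent factor $T$ inside the minimum yields
\[
T\cdot H\min_{m}\big[(\cdots)^{-1}\big(\tfrac{M}{m}+1\big)\big]=\tfrac{1}{1-\gamma}\min_{m}\big[(\cdots)^{-1}\big(\tfrac{TM}{m}+T\big)\big],
\]
where $(\cdots)$ abbreviates $(\frac{1}{m}\sum_{i=1}^{m}\frac{1}{\dot{h}_i})$. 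The crucial algebraic collapse is $TM\asymp\mathsf{B}$: when $\mathsf{B}/\mathsf{A}\ge 1$ we have $M\asymp\mathsf{B}/\mathsf{A}$ and $T\asymp\mathsf{A}$ (the $\log$ absorbed into $\tilde{\mathcal{O}}$), so $TM\asymp\mathsf{B}$; when $\mathsf{B}/\mathsf{A}<1$ we have $M=1$ and $\mathsf{B}<\mathsf{A}$, so the $\mathsf{B}/m$ contribution is dominated by $T\asymp\mathsf{A}$ in either case. Substituting $TM\asymp\mathsf{B}$ and $T\asymp\mathsf{A}$ turns the bracket into $\mathsf{A}+\mathsf{B}/m$, which is exactly the stated computational term.

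\textbf{Finally}, I would verify that the initialization cost is negligible: since $M_{\textnormal{init}}\le\mathsf{B}\asymp TM$ and the $\min_m[(\cdots)^{-1}(\frac{K}{m}+1)]$ expression is monotone in $K$, a single collection of $M_{\textnormal{init}}$ gradients is dominated by the aggregate $T$-iteration collection cost. The main obstacle here is not any individual estimate but the bookkeeping of the two-regime collapse $TM\asymp\mathsf{B}$ together with the $\max\{\cdot,1\}$ truncations in $M$ and $M_{\textnormal{init}}$; once these are handled, the result follows by the same reduction as Theorem~\ref{thm:rennala}, simply with the geometrically-convergent iteration count of Theorem~\ref{Theorem_1_global} substituted for the stationarity count of Theorem~\ref{Theorem_1}.
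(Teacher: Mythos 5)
Your proposal is correct and follows essentially the same route as the paper's proof: iteration count from Theorem~\ref{Theorem_1_global}, per-call cost from Lemma~\ref{lemma:rennala} (the \algname{Rennala} collection-time bound, $\kappa$ plus the harmonic-mean term), total time assembled as $T$ iterations plus initialization, and the collapse $TM \asymp \mathsf{B}$ with the $M_{\text{init}}$ term absorbed. The only cosmetic difference is that you spell out the two-regime handling of the $\max\{\cdot,1\}$ truncations explicitly (and state the per-call compute cost as $\Theta(\cdot)$ where the lemma only gives $\cO(\cdot)$, which is harmless for an upper bound), while the paper compresses these steps into a single substitution.
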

\end{theorembox}
\begin{theorembox}
\begin{restatable}{theorem}{THEOREMMALENIAGLOBAL}
Consider the results and assumptions of Theorem \ref{Theorem_1_global}. Additionally, consider that Assumption~\ref{ass:time} holds. The time required to find an $\varepsilon$--solution up to $\sqrt{2} \varepsilon' / \sqrt{\mu}$ neighborhood by \algname{Malenia NIGT} (Algorithms~\ref{Algorithm_1} and \ref{alg:malenia}) in the heterogeneous setup is
\begin{equation*}
\begin{aligned}
    \textstyle \tilde{\mathcal{O}}\left(\kappa \left(\frac{L_g}{\mu \varepsilon} + \frac{\sqrt{L_h}}{\mu^{3/4} \sqrt\varepsilon}\right) + \frac{1}{1 - \gamma} \left[\dot{h}_n \left(\frac{L_g}{\mu \varepsilon} + \frac{\sqrt{L_h}}{\mu^{3/4} \sqrt\varepsilon}\right) + \left(\frac{1}{n}\sum\limits^n_{i=1} \dot{h}_i\right)\left(\frac{\sigma^2}{n \mu \varepsilon^2} + \frac{\sigma^2 \sqrt{L_h}}{n \mu^{7/4} \varepsilon^{5/2}}\right)\right]\right).
\end{aligned}
\end{equation*}
\end{restatable}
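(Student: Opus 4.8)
The plan is to reuse the two-stage structure behind Theorems~\ref{thm:malenia} and \ref{Theorem_1_global}: take the iteration-complexity bound from the global-convergence theorem as given, bound the worst-case wall-clock time of one global iteration, and then multiply and simplify using the prescribed $M$, $M_{\text{init}}$, and $H$. Concretely, Theorem~\ref{Theorem_1_global} already establishes that after $T = \mathcal{O}\big((\frac{L_g}{\mu\varepsilon} + \frac{\sqrt{L_h}}{\mu^{3/4}\sqrt\varepsilon})\log(\Delta/\varepsilon)\big)$ global iterations the iterate lies within the $\sqrt{2}\varepsilon'/\sqrt{\mu}$ neighborhood, so the whole task reduces to pricing each iteration in seconds under Assumption~\ref{ass:time}.

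The crux is Step~2: bounding the wall-clock time of a single call to \algname{AggregateMalenia}$(\theta, M, H)$. Under Assumption~\ref{ass:time} agent $i$ emits one stochastic gradient $g_{i,H}$ every $h_i = \dot{h}_i H$ seconds, and the loop halts once the harmonic-mean stopping rule $\big(\frac{1}{n}\sum_{i=1}^n \frac{1}{M_i}\big)^{-1} \ge \frac{M}{n}$ holds. Approximating $M_i \approx \lfloor t/h_i\rfloor$ at wall-clock time $t$, the stopping quantity behaves like $t / \big(\frac{1}{n}\sum_{i=1}^n h_i\big)$, so the condition is met once $t = \tilde{\mathcal{O}}\big(\max_i h_i + \big(\frac{1}{n}\sum_{i=1}^n h_i\big)\frac{M}{n}\big)$, the extra $\max_i h_i$ accounting for the floor/startup correction. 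Using $H = \tilde{\mathcal{O}}(1/(1-\gamma))$, the per-iteration computation time is $\tilde{\mathcal{O}}\big(\frac{1}{1-\gamma}\big[\dot{h}_n + \big(\frac{1}{n}\sum_{i=1}^n \dot{h}_i\big)\frac{M}{n}\big]\big)$; this is exactly the \algname{Malenia PG/SGD} time lemma underlying Theorem~\ref{thm:malenia}, and I would invoke it verbatim. Each iteration then adds a single \algname{AllReduce} of cost $\kappa$, since agents aggregate locally and only one communication is required.

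For Step~3 I multiply $T$ by the per-iteration time and add the one-time cost of the initialization \algname{AggregateMalenia}$(\theta_0, M_{\text{init}}, H)$. The communication term becomes $T\kappa = \tilde{\mathcal{O}}\big(\kappa(\frac{L_g}{\mu\varepsilon} + \frac{\sqrt{L_h}}{\mu^{3/4}\sqrt\varepsilon})\big)$, and the $\dot{h}_n$ term becomes $\tilde{\mathcal{O}}\big(\frac{1}{1-\gamma}\dot{h}_n(\frac{L_g}{\mu\varepsilon} + \frac{\sqrt{L_h}}{\mu^{3/4}\sqrt\varepsilon})\big)$. The decisive simplification is that $M$ is chosen so that $T M = \tilde{\mathcal{O}}\big(\frac{\sigma^2}{\mu\varepsilon^2} + \frac{\sigma^2\sqrt{L_h}}{\mu^{7/4}\varepsilon^{5/2}}\big)$: the factor $(\frac{L_g}{\mu\varepsilon} + \frac{\sqrt{L_h}}{\mu^{3/4}\sqrt\varepsilon})$ in $T$ cancels the identical denominator of $M$, leaving precisely its numerator up to logarithms. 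Hence the mean term collapses to $\tilde{\mathcal{O}}\big(\frac{1}{1-\gamma}\big(\frac{1}{n}\sum_{i=1}^n \dot{h}_i\big)\big(\frac{\sigma^2}{n\mu\varepsilon^2} + \frac{\sigma^2\sqrt{L_h}}{n\mu^{7/4}\varepsilon^{5/2}}\big)\big)$, and one checks directly that the $M_{\text{init}}$ initialization is dominated by these contributions.

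The main obstacle is Step~2, namely showing rigorously that the harmonic-mean stopping rule attains the $\max_i h_i + \big(\frac{1}{n}\sum_i h_i\big)\frac{M}{n}$ bound uniformly over the (possibly adversarial) relative agent speeds; this is exactly the content of the Malenia time lemma. Once it is in place, everything else is bookkeeping, because the global-convergence iteration count $T$ differs from the stationary-point count of Theorem~\ref{thm:malenia} only through its $\mu$- and $\log$-dependent prefactor, while the per-iteration time accounting is structurally identical.
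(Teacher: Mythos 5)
Your proposal is correct and follows essentially the same route as the paper's proof: it takes the iteration count $T$ from Theorem~\ref{Theorem_1_global}, prices each call to \algname{AggregateMalenia} via the bound $\cO\big(\kappa + h_n + \big(\frac{1}{n}\sum_{i=1}^n h_i\big)\frac{M}{n}\big)$ of Lemma~\ref{lemma:malenia} (whose startup term $h_n$ and harmonic-mean argument you sketch accurately), multiplies, and exploits the cancellation of $\big(\frac{L_g}{\mu\varepsilon} + \frac{\sqrt{L_h}}{\mu^{3/4}\sqrt{\varepsilon}}\big)$ between $T$ and the denominator of $M$, exactly as the paper does before substituting $h_i = \dot{h}_i \times H$ with $H = \tilde{\cO}(1/(1-\gamma))$. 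Your handling of the $M_{\text{init}}$ initialization cost being absorbed into the mean term also matches the paper's bookkeeping.
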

\end{theorembox}

\section{Notations}\label{sec:notations}
$\N \eqdef \{1, 2, \dots\};$ $[n] \eqdef \{1, \dots n\};$ $\norm{\cdot}$ is the Euclidean norm; $\inp{x}{y} = \sum_{i=1}^{d} x_i y_i$ is the standard dot product; $\norm{A}$ is the standard spectral norm for all $A \in \R^{d \times d};$ $g = \cO(f):$ there exists $C > 0$ such that $g(z) \leq C \times f(z)$ for all $z \in \mathcal{Z};$ $g = \Omega(f):$ there exists $C > 0$ such that $g(z) \geq C \times f(z)$ for all $z \in \mathcal{Z};$ $g = \Theta(f):$ if $g = \cO(f)$ and $g = \Omega(f):$
$g \simeq h:$ $g$ and $h$ are equal up to a universal positive constant; $\tilde{\cO}, \tilde{\Omega}, \tilde{\Theta}$ the same as $\cO, \Omega, \Theta,$ but up to logarithmic factors.

\section{Useful Lemmas}
The following lemmas provide the time complexities of collecting stochastic gradients in Algorithms~\ref{alg:rennala} and \ref{alg:malenia}.
\begin{lemma}
   \label{lemma:rennala}
   Under Assumption~\ref{ass:time}, Algorithms~\ref{alg:rennala} returns the output vector after at most 
   \begin{align*}
   \cO\left(\kappa + \min_{m\in [n]}\left[\left(\sum^m_{i=1}\frac{1}{h_i}\right)^{-1}(M+m)\right]\right) = \cO\left(\kappa + \min_{m\in [n]}\left[\left(\frac{1}{m} \sum^m_{i=1}\frac{1}{h_i}\right)^{-1}\left(\frac{M}{m} + 1\right)\right]\right)
   \end{align*}
   seconds.
\end{lemma}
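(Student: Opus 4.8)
The plan is to separate the running time of Algorithm~\ref{alg:rennala} into a communication part and a computation part. By Assumption~\ref{ass:time}, the initial broadcast of $\theta$ and the single final \algname{AllReduce} each cost at most $\kappa$ seconds, contributing $\cO(\kappa)$, which I would simply add at the end. It then suffices to bound the duration of the \textbf{while}-loop, i.e.\ the first moment at which the total number of completed gradients across all $n$ agents reaches $M$.

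The key structural observation is that every agent works \emph{continuously}: upon finishing a gradient it is immediately instructed to start the next trajectory, so no agent idles while the loop runs. Since each gradient on agent $i$ takes at most $h_i$ seconds, by any time $t$ agent $i$ has completed at least $\flr{t/h_i}$ gradients. Following \citep{tyurin2023optimal}, the main idea is that the algorithm need not wait for all agents: for any $m \in [n]$, the number of gradients completed by the $m$ fastest agents (indices $1,\dots,m$, recall $h_1 \le \dots \le h_n$) is a lower bound on the total completed count at every time before the loop stops. Hence, if the $m$ fastest agents \emph{alone} are guaranteed to have produced $M$ gradients by some time $T_m$, the real loop using all $n$ agents must stop no later than $T_m$, which yields the time bound $\min_{m \in [n]} T_m$.

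To compute $T_m$, I would lower-bound the guaranteed output of the fastest $m$ agents by time $t$:
\begin{align*}
\textstyle \sum_{i=1}^{m} \flr{t/h_i} \;\ge\; \sum_{i=1}^{m}\left(\frac{t}{h_i} - 1\right) \;=\; t\sum_{i=1}^{m}\frac{1}{h_i} - m.
\end{align*}
Requiring the right-hand side to be at least $M$ and solving for $t$ gives $T_m = \big(\sum_{i=1}^m \tfrac{1}{h_i}\big)^{-1}(M + m)$, at which point at least $M$ gradients are guaranteed. Taking the minimum over $m$ and adding the $\cO(\kappa)$ communication term yields the first expression; the second follows from the identity $\big(\sum_{i=1}^m \tfrac{1}{h_i}\big)^{-1}(M+m) = \big(\tfrac1m\sum_{i=1}^m \tfrac{1}{h_i}\big)^{-1}\big(\tfrac{M}{m}+1\big)$.

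The main obstacle — and essentially the only place requiring care — is the domination step: one must argue rigorously that running with all $n$ agents can only be faster than running with the fastest $m$, and that the ``$-m$'' slack produced by the floor (equivalently, from each of the $m$ active agents possibly holding one unfinished gradient in progress) is exactly what yields the additive ``$+m$'' term. Since $h_i$ is only an upper bound on the per-gradient time, the inequality $\sum_{i=1}^m \flr{t/h_i} \ge M$ is a sufficient (worst-case) stopping condition, so the resulting $T_m$ is a valid upper bound regardless of faster realized completions. Everything else is the elementary counting estimate above.
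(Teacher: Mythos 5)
Your proposal is correct and matches the paper's proof essentially step for step: the same continuous-work observation, the same lower bound $\sum_{i=1}^{m}\flr{t/h_i}\ge t\sum_{i=1}^{m}\nicefrac{1}{h_i}-m$ obtained by restricting the total count over all $n$ agents to the $m$ fastest, the same solve for $t$ yielding $\left(\sum_{i=1}^m \nicefrac{1}{h_i}\right)^{-1}(M+m)$, and the same additive $\cO(\kappa)$ for the broadcast and final aggregation. The only cosmetic difference is that the paper fixes $t$ at the minimizing value $m^*$ up front and verifies the count reaches $M$, while you derive $T_m$ for each $m$ and then minimize — the same calculation in a different order.
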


\begin{proof}
    Let
    \begin{align*}
        t = \min_{m \in [n]} \parens{\parens{\sum_{i=1}^m \frac{1}{h_i}}^{-1} (M + m)}.
    \end{align*}
    As soon as some agent finishes computing a stochastic policy gradient, it immediately starts computing the next one. 
    Hence, by time $t,$ the agents together will have processed at least
    \begin{align*}
        \sum_{i=1}^{n} \flr{\frac{t}{h_i}}
    \end{align*}
    stochastic gradients.
    Let
    \begin{align*}
        m^* = \arg\min_{m\in[n]} \parens{\parens{\sum_{i=1}^m \frac{1}{h_i}}^{-1} (M + m)}.
    \end{align*}
    Using $\flr{x} \ge x-1$ for all $x \ge 0,$ we obtain
    \begin{align*}
        \sum_{i=1}^{n} \flr{\frac{t}{h_i}}
        \geq \sum_{i=1}^{m^*} \flr{\frac{t}{h_i}}
        \geq \sum_{i=1}^{m^*} \frac{t}{h_i} - m^*
        = \parens{\sum_{i=1}^{m^*} \frac{1}{h_i}} 
              \parens{\parens{\sum_{i=1}^{m^*} \frac{1}{h_i}}^{-1}(M+m^*)} - m^*
        = M.
    \end{align*}
    Therefore, by time $t$ the algorithm has collected at least $M$ stochastic gradients and exits the inner loop.
    Returning the output vector then requires a single communication/aggregation, which by Assumption~\ref{ass:time} takes at most $\kappa$ seconds (via a server or \algname{AllReduce} for instance). In the centralized setting, broadcasting would also take at most $\kappa$ seconds. Thus, Algorithm~\ref{alg:rennala} returns after at most $t+2 \kappa = \cO\left(\kappa + \min_{m\in [n]}\left[\left(\sum^m_{i=1}\frac{1}{h_i}\right)^{-1}(M+m)\right]\right)$ seconds.
\end{proof}

\begin{lemma}
   \label{lemma:malenia}
   Under Assumption~\ref{ass:time}, Algorithms~\ref{alg:malenia} returns the output vector after at most
   \begin{align*}
   \cO\left(\kappa + h_n + \left(\frac{1}{n} \sum^n_{i=1} h_i\right) \frac{M}{n}\right)
   \end{align*}
   seconds.
\end{lemma}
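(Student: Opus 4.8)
The plan is to mirror the proof of Lemma~\ref{lemma:rennala}, replacing the ``collect $M$ gradients in total'' criterion by the harmonic-mean stopping rule of \algname{AggregateMalenia} (Algorithm~\ref{alg:malenia}). First I would record the basic computation bound exactly as in Lemma~\ref{lemma:rennala}: since each agent restarts immediately after finishing, and recording an arrived gradient costs no time, by time $t$ (measured after the broadcast) agent $i$ has contributed at least $\flr{t/h_i}$ stochastic gradients, i.e. the recorded count satisfies $M_i(t) \geq \flr{t/h_i}$.

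Next I would rewrite the loop guard. Writing $S(t) \eqdef \sum_{i=1}^n \tfrac{1}{M_i(t)}$, the condition $(\tfrac1n \sum_{i=1}^n \tfrac{1}{M_i})^{-1} < \tfrac{M}{n}$ is equivalent to $S(t) > \tfrac{n^2}{M}$, so the procedure exits as soon as $S(t) \leq \tfrac{n^2}{M}$. Because the $M_i(t)$ are nondecreasing in $t$, $S(t)$ is nonincreasing over the event times; hence it suffices to exhibit a single time $T$ with $S(T) \leq n^2/M$, and the true exit time is then at most $T$.

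The core is a threshold estimate for $S(t)$. A complication is that $S(t)=+\infty$ while any $M_i = 0$, so I would first pass the barrier $t \geq h_n$, after which every agent has completed at least one gradient. Then for $t \geq 2 h_n$ I use $\flr{t/h_i} \geq t/h_i - 1 \geq t/(2h_i)$, where the last inequality holds because $t \geq 2 h_n \geq 2 h_i$ forces $1 \leq t/(2h_i)$. This yields $\tfrac{1}{M_i(t)} \leq \tfrac{2 h_i}{t}$ and therefore $S(t) \leq \tfrac{2}{t}\sum_{i=1}^n h_i$. Choosing $t \geq 2\big(\tfrac1n\sum_{i=1}^n h_i\big)\tfrac{M}{n} = \tfrac{2M}{n^2}\sum_{i=1}^n h_i$ makes the right-hand side at most $n^2/M$, as required.

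Combining the two thresholds, at $T = 2 h_n + 2\big(\tfrac1n\sum_{i=1}^n h_i\big)\tfrac{M}{n}$ we have $S(T) \leq n^2/M$, so the loop has terminated by time $T$. Adding the initial broadcast and the single final aggregation, each costing at most $\kappa$ by Assumption~\ref{ass:time} (via a server or \algname{AllReduce}), the total time is $T + 2\kappa = \cO\big(\kappa + h_n + \big(\tfrac1n\sum_{i=1}^n h_i\big)\tfrac{M}{n}\big)$, which is the claim. The delicate point I expect to watch most carefully is the coupling of the floor's $-1$ term with the $h_n$ barrier: the harmonic-mean rule is infinite until every $M_i \geq 1$, and the $2h_n$ offset is precisely what lets me absorb $\flr{t/h_i} \geq t/(2h_i)$ uniformly in $i$ without a separate analysis of the slow agents.
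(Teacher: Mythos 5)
Your proposal is correct and follows essentially the same route as the paper's proof: wait out an initial barrier of order $h_n$ so that every $M_i \geq 1$, bound $1/M_i(t)$ via the floor inequality, choose the threshold time $\bigl(\tfrac{1}{n}\sum_{i=1}^n h_i\bigr)\tfrac{M}{n}$ to force $\sum_i 1/M_i \leq n^2/M$, and add $2\kappa$ for broadcast and aggregation. The only difference is cosmetic bookkeeping of the floor's $-1$: the paper credits the one gradient finished during the $h_n$ barrier (giving $M_i \geq 1 + \lfloor t/h_i\rfloor$ and hence $1/M_i \leq h_i/t$), whereas you double the time ($t \geq 2h_n$ gives $\lfloor t/h_i\rfloor \geq t/(2h_i)$), which changes only constants.
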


\begin{proof}
    By Assumption~\ref{ass:time}, broadcasting $\theta$ takes at most $\kappa$ seconds. 
    After that, the agents compute in parallel and after $h_n$ seconds each agent has completed at least one gradient. Thus, $M_i\ge 1$ for all $i\in[n]$ after $h_n$ seconds. Let us take 
    \[
        t = \left(\frac{1}{n}\sum_{i=1}^n h_i\right) \frac{M}{n}.
    \]
    For all $i \in [n],$ after $h_n + t$ seconds, agent $i$ produces $M_i(t)$ gradients, where $M_i(t)$ satisfy the inequality
    \[
        M_i(t) \geq 1 + \flr{\tfrac{t}{h_i}}.
    \]
    since agent $i$ can calculate at least $\flr{\tfrac{t}{h_i}}$ gradients after $t$ seconds. Using $\flr{x} \geq x-1$ for all $x \in \R,$
    \[
        \frac{1}{M_i(t)} \leq \frac{1}{1 + \flr{t/h_i}} \leq \frac{h_i}{t}.
    \]
    Summing over $i$ yields
    \[
        \sum_{i=1}^n \frac{1}{M_i(t)} \leq \frac{1}{t}\sum_{i=1}^n h_i = \frac{n^2}{M}
    \]
    and
    \[
        \left(\frac{1}{n} \sum_{i=1}^n \frac{1}{M_i(t)}\right)^{-1} \geq \frac{M}{n}
    \]
    Thus, by time $\kappa + h_n + t$ the while-condition
    \(
        \big(\frac{1}{n}\sum_{i=1}^n \frac{1}{M_i}\big)^{-1} < \frac{M}{n}
    \)
    becomes false and the loop terminates. Returning the output vector then requires at most an additional $\kappa$ seconds. In total, it takes
    \[
        \cO\left(\kappa + h_n + \left(\frac{1}{n}\sum_{i=1}^n h_i\right)\frac{M}{n}\right)
    \]
    seconds.
\end{proof}

\section{Proof of Theorems}\label{Proof of Theorem 1}

We start with supporting lemmas and then apply them in the proof of the theorem. 
\begin{lemma}\label{Lemma 1}
Let Assumption \ref{Assumption_1} and \ref{Assumption_2} hold, $\{d_t\}$ is any sequence of vectors from $\R^d$ and $\theta_{t+1}=\theta_t+\alpha\frac{d_t}{\norm{d_t}},$ where $\theta_0\in\mathbb{R}^d$ and we use the standard convention $\nicefrac{0}{\norm{0}} = 0.$ Then
\begin{align}
    \label{eq:main_1}
    -J(\theta_{t+1}) \leq -J(\theta_t) - \alpha \norm{\nabla J(\theta_t)} + 2 \alpha \norm{d_t - \nabla J(\theta_t)} + \frac{L_g \alpha^2}{2}.
\end{align}
for every integer $t \geq 0.$
\end{lemma}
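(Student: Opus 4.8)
The plan is to invoke the $L_g$-smoothness of $J$ established in Proposition~\ref{Proposition_1}-\eqref{Proposition_1:1}. Since $\nabla J$ is $L_g$-Lipschitz, the standard quadratic bound gives $J(\theta_{t+1}) \geq J(\theta_t) + \inp{\nabla J(\theta_t)}{\theta_{t+1} - \theta_t} - \frac{L_g}{2}\sqnorm{\theta_{t+1} - \theta_t}$, equivalently $-J(\theta_{t+1}) \leq -J(\theta_t) - \inp{\nabla J(\theta_t)}{\theta_{t+1} - \theta_t} + \frac{L_g}{2}\sqnorm{\theta_{t+1} - \theta_t}$. I would first treat the generic case $d_t \neq 0$ and substitute the update $\theta_{t+1} - \theta_t = \alpha \, d_t / \norm{d_t}$. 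The crucial simplification is that the normalized step has norm exactly $\alpha$, so $\sqnorm{\theta_{t+1} - \theta_t} = \alpha^2$ and the quadratic term becomes precisely $\frac{L_g \alpha^2}{2}$, matching the last term of the claim.

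It then remains to control the linear term $-\alpha \inp{\nabla J(\theta_t)}{d_t/\norm{d_t}}$. Writing $u = d_t/\norm{d_t}$ (a unit vector), I would use $\inp{d_t}{u} = \norm{d_t}$ together with the decomposition $\nabla J(\theta_t) = d_t + (\nabla J(\theta_t) - d_t)$ to get $\inp{\nabla J(\theta_t)}{u} = \norm{d_t} + \inp{\nabla J(\theta_t) - d_t}{u}$. Applying Cauchy--Schwarz and $\norm{u} = 1$ to the second term bounds it below by $-\norm{d_t - \nabla J(\theta_t)}$, giving $-\inp{\nabla J(\theta_t)}{u} \leq -\norm{d_t} + \norm{d_t - \nabla J(\theta_t)}$. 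Finally the triangle inequality $\norm{\nabla J(\theta_t)} \leq \norm{d_t} + \norm{d_t - \nabla J(\theta_t)}$ converts $-\norm{d_t}$ into $-\norm{\nabla J(\theta_t)} + \norm{d_t - \nabla J(\theta_t)}$, producing the factor $2$ on the error and yielding $-\alpha\inp{\nabla J(\theta_t)}{u} \leq -\alpha \norm{\nabla J(\theta_t)} + 2\alpha\norm{d_t - \nabla J(\theta_t)}$. Combining this with the smoothness step above gives the claim in the case $d_t \neq 0$.

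The only remaining point is the degenerate case $d_t = 0$, where the convention $\nicefrac{0}{\norm{0}} = 0$ freezes the iterate so that $\theta_{t+1} = \theta_t$ and the left-hand side equals $-J(\theta_t)$. Here the right-hand side reads $-J(\theta_t) + \alpha\norm{\nabla J(\theta_t)} + \frac{L_g \alpha^2}{2}$, using $\norm{d_t - \nabla J(\theta_t)} = \norm{\nabla J(\theta_t)}$, which is at least $-J(\theta_t)$, so the inequality holds with slack. I expect no real obstacle here: the single nontrivial ingredient is the normalized-direction estimate of the second paragraph, which is the standard identity underlying normalized/sign-type gradient methods, and everything else is direct substitution into the smoothness inequality.
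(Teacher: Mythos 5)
Your proof is correct and takes essentially the same route as the paper's: the $L_g$-smoothness descent inequality from Proposition~\ref{Proposition_1}-\eqref{Proposition_1:1}, the decomposition $\inp{\nabla J(\theta_t)}{d_t/\norm{d_t}} = \norm{d_t} + \inp{\nabla J(\theta_t)-d_t}{d_t/\norm{d_t}}$ bounded via Cauchy--Schwarz, and the triangle inequality $\norm{\nabla J(\theta_t)} \leq \norm{d_t} + \norm{d_t - \nabla J(\theta_t)}$ producing the factor $2$. Your explicit check of the degenerate case $d_t = 0$ is a small point of rigor the paper's algebra (which divides by $\norm{d_t}$) silently skips, but it does not constitute a different argument.
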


\begin{proof}
Using Proposition~\ref{Proposition_1}-\eqref{Proposition_1:1} (or Proposition~\ref{Proposition_1_heter}-\eqref{Proposition_1:1}) and the update rule, we get
\begin{align*}
    -J(\theta_{t+1})
    &\leq -J(\theta_t)- \inp{\nabla J(\theta_t)}{\theta_{t+1} - \theta_t}+\frac{L_g}{2} \norm{\theta_{t+1} - \theta_t}^2\\
    &= -J(\theta_t)-\frac{\alpha}{\norm{d_t}}\inp{\nabla J(\theta_t)}{d_t}+\frac{L_g}{2}\left(\frac{\alpha}{\norm{d_t}}\right)^2\norm{d_t}^2\\
    &= -J(\theta_t)-\frac{\alpha}{\norm{d_t}}\norm{d_t}^2 -\frac{\alpha}{\norm{d_t}}\inp{\nabla J(\theta_t)-d_t}{d_t}+\frac{L_g \alpha^2}{2} \\
    &\leq -J(\theta_t)-\alpha\norm{d_t} + \alpha \norm{d_t-\nabla J(\theta_t)}+\frac{L_g \alpha^2}{2},
\end{align*}
where we use the C-S inequality. Using $\norm{\nabla J(\theta_t)} \leq \norm{d_t} + \norm{d_t - \nabla J(\theta_t)},$ we have
\begin{align*}
    -J(\theta_{t+1}) 
    &\leq -J(\theta_t) - \alpha \norm{\nabla J(\theta_t)} + 2 \alpha \norm{d_t - \nabla J(\theta_t)} + \frac{L_g \alpha^2}{2}.
\end{align*}
\end{proof}
Let $\ExpSub{t}{\cdot}$ be the conditional expectation condition on all randomness up to $t$\textsuperscript{th} iteration. In the next lemma, we bound $\sum_{t = 0}^{T - 1} \mathbb{E}[\norm{d_t-\nabla{J}_H(\theta_t)}].$ 
\begin{lemma}\label{Lemma 3}
Let Assumptions \ref{Assumption_1} and \ref{Assumption_2} hold. Consider Algorithms \ref{Algorithm_1} and assume that 
\begin{align}
    \label{eq:vdroEReZ}
    \Exp{d_0} = \nabla J_H(\theta_0) \textnormal{ and } \Exp{\norm{d_0 - \nabla J_H(\theta_0)}^2} \leq \frac{\sigma^2}{M_{\text{init}}}
\end{align}
and
\begin{align}
    \label{eq:ZNVjWp}
    \ExpSub{t}{g_t} = \nabla J_H(\tilde{\theta}_t) \textnormal{ and } \ExpSub{t}{\norm{g_t - \nabla J_H(\tilde{\theta}_t)}^2} \leq \frac{\sigma^2}{M}
\end{align}
in Algorithms \ref{Algorithm_1} for any integer $t \geq 1.$ For every $t \geq 1,$
\begin{align*}
   \mathbb{E}[\norm{d_t-\nabla J_H(\theta_t)}] \leq (1-\eta)^t \frac{\sigma}{\sqrt{M_{\text{init}}}}+ \sqrt{\eta} \frac{\sigma}{\sqrt{M}} + L_h \frac{2 \alpha^2}{\eta^2} + \frac{4 D_g \gamma^H}{\eta} + 2 D_h \gamma^H \frac{\alpha}{\eta}.
\end{align*}
Moreover, for every $T\geq1,$
\begin{align*}
    \sum^{T-1}_{t=0}\mathbb{E}[\norm{d_t-\nabla J_H(\theta_t)}] \leq \frac{\sigma}{\sqrt{M_{\text{init}}} \eta} + \frac{\sqrt{\eta} \sigma T}{\sqrt{M}} + \frac{2 \alpha^2  L_h T}{\eta^2} + \frac{4 D_g \gamma^H T}{\eta} + \frac{2 \alpha  D_h \gamma^HT}{\eta}.
\end{align*}
where $D_g$ and $D_h$ are defined in Proposition~\ref{Proposition_1}.
\end{lemma}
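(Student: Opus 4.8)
The plan is to control the momentum error $\epsilon_t \eqdef d_t - \nabla J_H(\theta_t)$ through a one-step recursion and then unroll it. Writing $g_t = \nabla J_H(\widetilde{\theta}_t) + \xi_t$ with $\ExpSub{t}{\xi_t}=0$ and $\ExpSub{t}{\norm{\xi_t}^2}\le \sigma^2/M$ by \eqref{eq:ZNVjWp}, and substituting $d_t=(1-\eta)d_{t-1}+\eta g_t$, I obtain
\[
\epsilon_t = (1-\eta)\epsilon_{t-1} + \eta\xi_t + b_t,\qquad b_t \eqdef (1-\eta)\big(\nabla J_H(\theta_{t-1})-\nabla J_H(\theta_t)\big)+\eta\big(\nabla J_H(\widetilde{\theta}_t)-\nabla J_H(\theta_t)\big).
\]
Thus $\epsilon_t$ is driven by a conditionally zero-mean term $\eta\xi_t$ and a bias $b_t$ introduced by the extrapolation point $\widetilde{\theta}_t$.

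The heart of the argument is bounding $\norm{b_t}$, which I would split as $b_t=\tilde b_t + c_t$, separating the untruncated gradient $\nabla J$ from the truncation error $e\eqdef \nabla J_H-\nabla J$. For $\tilde b_t = (1-\eta)(\nabla J(\theta_{t-1})-\nabla J(\theta_t))+\eta(\nabla J(\widetilde{\theta}_t)-\nabla J(\theta_t))$, the extrapolation is engineered so that the first-order contribution cancels: since $\theta_t-\theta_{t-1}=\alpha\, d_{t-1}/\norm{d_{t-1}}$ has norm at most $\alpha$ and $\widetilde{\theta}_t-\theta_t=\tfrac{1-\eta}{\eta}(\theta_t-\theta_{t-1})$, one has $(1-\eta)(\theta_{t-1}-\theta_t)+\eta(\widetilde{\theta}_t-\theta_t)=0$, so subtracting a common Hessian $\nabla^2 J(\theta_t)$ annihilates the linear term and leaves only a second-order remainder controlled by the Hessian-Lipschitzness of $J$ (Proposition~\ref{Proposition_1}-\eqref{Proposition_1:2}); together with $\norm{\widetilde{\theta}_t-\theta_t}\le\alpha/\eta$ this gives $\norm{\tilde b_t}$ of order $L_h\alpha^2/\eta$. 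For the truncation part $c_t = (1-\eta)(e(\theta_{t-1})-e(\theta_t))+\eta(e(\widetilde{\theta}_t)-e(\theta_t))$, I would exploit both faces of Proposition~\ref{Proposition_1}-\eqref{Proposition_1:3}: $\norm{e}\le D_g\gamma^H$ bounds a difference by the triangle inequality, while $\norm{\nabla e}=\norm{\nabla^2 J_H-\nabla^2 J}\le D_h\gamma^H$ shows $e$ is $D_h\gamma^H$-Lipschitz, bounding a difference by $D_h\gamma^H$ times the displacement. Applying these to the two gradient differences yields $\norm{c_t}$ of order $D_g\gamma^H + D_h\gamma^H\alpha/\eta$.

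I would then unroll into $\epsilon_t=(1-\eta)^t\epsilon_0+\sum_{s=1}^t(1-\eta)^{t-s}(\eta\xi_s+b_s)$ and bound the three pieces after the triangle inequality. The initial term gives $(1-\eta)^t\Exp{\norm{\epsilon_0}}\le (1-\eta)^t\sigma/\sqrt{M_{\text{init}}}$ by Jensen and \eqref{eq:vdroEReZ}. For the noise, the coefficients are deterministic and the martingale orthogonality $\Exp{\inp{\xi_s}{\xi_{s'}}}=0$ for $s\ne s'$ gives $\Exp{\norm{\sum_{s=1}^t(1-\eta)^{t-s}\eta\xi_s}^2}=\sum_{s=1}^t(1-\eta)^{2(t-s)}\eta^2\Exp{\norm{\xi_s}^2}\le \eta\sigma^2/M$ via $1-(1-\eta)^2\ge\eta$, so by Jensen its expected norm is at most $\sqrt{\eta}\,\sigma/\sqrt{M}$. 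The bias is handled deterministically, $\sum_{s=1}^t(1-\eta)^{t-s}\norm{b_s}\le \tfrac1\eta\max_s\norm{b_s}$, attaching an extra $1/\eta$ to each bias contribution and reproducing the per-iterate inequality. Summing over $t=0,\dots,T-1$, the geometric initial term contributes $\sum_{t\ge0}(1-\eta)^t\le 1/\eta$, while every remaining term is constant in $t$ and thus multiplied by $T$, which gives the second displayed bound.

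The main obstacle is the bias estimate: one must verify that the Hessian-linear terms cancel exactly thanks to the extrapolation weights, and then correctly apportion the residual into the true second-order remainder ($\sim L_h\alpha^2$) and the two truncation contributions, tracking which displacement ($\alpha$ versus $\alpha/\eta$) multiplies $D_g\gamma^H$ and $D_h\gamma^H$. The remaining steps—the martingale variance computation and the geometric summation—are routine.
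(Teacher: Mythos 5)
Your proposal follows essentially the same route as the paper's proof: the identical error recursion $\hat e_t=(1-\eta)\hat e_{t-1}+\eta e_t+(\text{bias})$, the exact cancellation $(1-\eta)(\theta_{t-1}-\theta_t)+\eta(\widetilde\theta_t-\theta_t)=0$ that annihilates the Hessian-linear term, martingale orthogonality together with $\sum_{\tau}(1-\eta)^{2(t-\tau-1)}\le 1/\eta$ for the noise, and $1/\eta$ geometric sums for the bias and the initial error. Your split of the bias into an untruncated Taylor remainder $\tilde b_t$ and a truncation part $c_t$ is only a regrouping of the paper's quantities $S_t,\bar S_t,Z_t,\bar Z_t$: your estimates correspond exactly to the paper's bounds \eqref{eq:norm_S_t} and \eqref{eq:norm_Z_t} combined with the weights $(1-\eta)$ and $\eta$.

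The one flaw is a bookkeeping slip in the truncation bound. The extrapolation difference inside $c_t$ carries the weight $\eta$, and $\norm{\widetilde\theta_t-\theta_t}=\frac{1-\eta}{\eta}\alpha$, so the $D_h\gamma^H$-Lipschitz estimate gives $\eta\cdot D_h\gamma^H\cdot\frac{(1-\eta)\alpha}{\eta}\le D_h\gamma^H\alpha$; hence $\norm{c_t}=\cO(D_g\gamma^H+D_h\gamma^H\alpha)$, not $\cO(D_g\gamma^H+D_h\gamma^H\alpha/\eta)$ as you state. With your stated bound, attaching the extra $1/\eta$ from $\sum_{s}(1-\eta)^{t-s}\le 1/\eta$ would produce a term of order $D_h\gamma^H\alpha/\eta^2$, which is strictly weaker than the lemma's $2\alpha D_h\gamma^H/\eta$ and therefore does not ``reproduce the per-iterate inequality'' as claimed. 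The repair is immediate and is exactly the bookkeeping you already performed correctly for $\tilde b_t$, where the weight $\eta$ against the squared displacement $(\alpha/\eta)^2$ yielded $L_h\alpha^2/\eta$ and hence $L_h\alpha^2/\eta^2$ after summation: keep the $\eta$ prefactor so that it cancels one factor $1/\eta$ hidden in the displacement, precisely as the paper does when it multiplies $\norm{Z_t}\le L_h\frac{(1-\eta)^2\alpha^2}{\eta^2}+2D_g\gamma^H+D_h\gamma^H\frac{(1-\eta)\alpha}{\eta}$ by $\eta$ before the geometric sum. With that correction your argument matches the paper's proof line for line.
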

\begin{proof}
Let us define
\begin{align*}
    \hat{e}_t &\eqdef d_t-\nabla J_H(\theta_t), \\
    e_t&\eqdef g_t - \nabla J_H(\widetilde\theta_t), \\
    S_t&\eqdef\nabla J_H(\theta_{t-1})-\nabla J_H(\theta_t)+\nabla^2 J_H(\theta_t)(\theta_{t-1}-\theta_t), \\
    \bar{S_t}&\eqdef \nabla J(\theta_{t-1})-\nabla J(\theta_{t})+\nabla^2J(\theta_t)(\theta_{t-1}-\theta_t),\\
    Z_t&\eqdef\nabla J_H(\widetilde{\theta}_{t})-\nabla J_H({\theta}_t)+\nabla^2 J_H(\theta_t)(\widetilde{\theta}_{t}-\theta_t), \\
    \bar{Z_t}&\eqdef\nabla J(\widetilde{\theta_{t}})-\nabla J(\theta_{t})+\nabla^2J(\theta_t)(\widetilde{\theta_{t}}-\theta_t).
\end{align*}
Using triangle inequality and Proposition~\ref{Proposition_1}-\eqref{Proposition_1:2},\eqref{Proposition_1:3} (or Proposition~\ref{Proposition_1_heter}-\eqref{Proposition_1:2},\eqref{Proposition_1:3}), we get
\begin{align}
    \label{eq:norm_S_t}
    &\norm{S_t}\leq L_h\norm{\theta_{t}-\theta_{t-1}}^2+\norm{S_t-\bar{S_t}} =L_h\alpha^2 + 2 D_g \gamma^H + D_h\gamma^H\alpha.
\end{align}
Similarly,
\begin{align}
    \label{eq:norm_Z_t}
    \norm{Z_t} \leq L_h \frac{(1 - \eta)^2 \alpha^2}{\eta^2} + 2 D_g \gamma^H + D_h\gamma^H \frac{(1 - \eta) \alpha}{\eta}.
\end{align}
Applying the update rule for $d_t$ Algorithm~\ref{Algorithm_1}:
\begin{align}
    d_t=(1-\eta)d_{t-1}+\eta g_t,
\end{align}
connecting this with notation for $S_t$ and $Z_t,$ and using $\widetilde{\theta}_t = \theta_t + \frac{1 - \eta}{\eta} (\theta_t - \theta_{t - 1}),$ we derive the recursion
\begin{align*}
    \hat{e}_t&=d_t-\nabla J_H(\theta_t)=(1-\eta)d_{t-1}+\eta g_t - \nabla J_H(\theta_t)\\
    &=(1-\eta)\hat{e}_{t-1}+\eta e_t+(1-\eta)S_t+\eta Z_t.
\end{align*}
We can rewrite unroll the recursion and get
\begin{align*}
    \hat{e}_t=(1-\eta)^{t} (d_0 - \nabla J_H(\theta_0)) +\eta\sum^{t-1}_{\tau=0}(1-\eta)^{t-\tau-1}{e}_{\tau+1}&+(1-\eta)\sum^{t-1}_{\tau=0}(1-\eta)^{t-\tau-1}S_{\tau+1} + \eta\sum^{t-1}_{\tau=0}(1-\eta)^{t-\tau-1}Z_{\tau+1}.
\end{align*}
Now, we estimate the expectation of the norm. Using triangle inequality,
\begin{align*}
    \mathbb{E}[\norm{\hat{e}_t}]
    \leq(1-&\eta)^{t}\mathbb{E}[\norm{d_0 - \nabla J_H(\theta_0)}]+\eta\mathbb{E}\left[\norm{\sum^{t-1}_{\tau=0}(1-\eta)^{t-\tau-1} e_{\tau+1}}\right]+\\&+(1-\eta)\mathbb{E}\left[\norm{\sum^{t-1}_{\tau=0}(1-\eta)^{t-\tau-1}S_{\tau+1}}\right]+\eta\mathbb{E}\left[\norm{\sum^{t-1}_{\tau=0}(1-\eta)^{t-\tau-1}Z_{\tau+1}}\right]\\
    \leq (1-&\eta)^t \frac{\sigma}{\sqrt{M_{\text{init}}}}+ \left(\eta^2\mathbb{E}\left[\norm{\sum^{t-1}_{\tau=0}(1-\eta)^{t-\tau-1}e_{\tau+1}}^2\right]\right)^{1/2}+\\&+(1-\eta)\sum^{t-1}_{\tau=0}(1-\eta)^{t-\tau-1}\mathbb{E}\left[\norm{S_{\tau+1}}\right]+\eta\sum^{t-1}_{\tau=0}(1-\eta)^{t-\tau-1}\mathbb{E}\left[\norm{Z_{\tau+1}}\right],
\end{align*}
where we use the Jensen's inequality , triangle inequality, and \eqref{eq:vdroEReZ}; Using \eqref{eq:ZNVjWp}, for all $i > j,$ $\Exp{\inp{e_i}{e_j}} = \Exp{\ExpSub{i}{\inp{e_i}{e_j}}} = \Exp{\inp{\ExpSub{i}{e_i}}{e_j}} = 0,$ and we get
\begin{align*}
    \mathbb{E}[\norm{\hat{e}_t}]
    \leq (1-&\eta)^t \frac{\sigma}{\sqrt{M_{\text{init}}}}+ \left(\eta^2 \sum^{t-1}_{\tau=0} (1-\eta)^{2(t-\tau-1)} \frac{\sigma^2}{M}\right)^{1/2}+\\&+(1-\eta)\sum^{t-1}_{\tau=0}(1-\eta)^{t-\tau-1}\mathbb{E}\left[\norm{S_{\tau+1}}\right]+\eta\sum^{t-1}_{\tau=0}(1-\eta)^{t-\tau-1}\mathbb{E}\left[\norm{Z_{\tau+1}}\right] \\
    \leq (1-&\eta)^t \frac{\sigma}{\sqrt{M_{\text{init}}}}+ \sqrt{\eta} \frac{\sigma}{\sqrt{M}} +(1-\eta)\sum^{t-1}_{\tau=0}(1-\eta)^{t-\tau-1}\mathbb{E}\left[\norm{S_{\tau+1}}\right]+\eta\sum^{t-1}_{\tau=0}(1-\eta)^{t-\tau-1}\mathbb{E}\left[\norm{Z_{\tau+1}}\right],
\end{align*}
where we use $\sum^{t-1}_{\tau=0} (1-\eta)^{2(t-\tau-1)} \leq \frac{1}{\eta}.$ Using \eqref{eq:norm_S_t} and \eqref{eq:norm_Z_t},
\begin{align*}
    \mathbb{E}[\norm{\hat{e}_t}]
    &\leq (1-\eta)^t \frac{\sigma}{\sqrt{M_{\text{init}}}}+ \sqrt{\eta} \frac{\sigma}{\sqrt{M}} \\
    &\quad +(1-\eta)\sum^{t-1}_{\tau=0}(1-\eta)^{t-\tau-1} \left(L_h\alpha^2 + 2 D_g \gamma^H + D_h\gamma^H\alpha\right) \\
    &\quad + \eta\sum^{t-1}_{\tau=0}(1-\eta)^{t-\tau-1}\left(L_h \frac{(1 - \eta)^2 \alpha^2}{\eta^2} + 2 D_g \gamma^H + D_h\gamma^H \frac{(1 - \eta) \alpha}{\eta}\right) \\
    &\leq (1-\eta)^t \frac{\sigma}{\sqrt{M_{\text{init}}}}+ \sqrt{\eta} \frac{\sigma}{\sqrt{M}} + L_h \frac{2 \alpha^2}{\eta^2} + \frac{4 D_g \gamma^H}{\eta} + 2 D_h \gamma^H \frac{\alpha}{\eta},
\end{align*}
where we use $\sum^{t-1}_{\tau=0}(1-\eta)^{t-\tau-1} \leq \frac{1}{\eta}$ and $0 < \eta < 1.$ It left sum the inequality and use $\sum^{T-1}_{t=0} (1-\eta)^t \leq \frac{1}{\eta}$ to get
\begin{align*}
    \sum^{T-1}_{t=0}\mathbb{E}[\norm{\hat{e}_t}] \leq \frac{\sigma}{\sqrt{M_{\text{init}}} \eta} + \sqrt{\eta} \frac{\sigma}{\sqrt{M}} T + \frac{2 \alpha^2  L_h T}{\eta^2} + \frac{4 D_g \gamma^H T}{\eta} + \frac{2 \alpha  D_h \gamma^HT}{\eta}.
\end{align*}
\end{proof}

\THMRENNALA*

\begin{proof}
We rely on Lemma~\ref{Lemma 3}, which requires $d_0$ and $\{g_t\}$ to satisfy \eqref{eq:vdroEReZ} and \eqref{eq:ZNVjWp}. For \algname{Rennala NIGT} with Algorithm~\ref{alg:rennala}, this condition holds because $\Exp{\bar{g}} = \frac{1}{M} \sum_{j = 1}^{M} \Exp{g_H(\bar{\tau}_{j}, \theta)} = \nabla J_H(\theta)$ for all $\theta \in \R^d,$
where $\bar{\tau}_{j} \sim p(\cdot|\pi_\theta)$ are i.i.d.\ trajectories sampled by the agents. (For instance, if agent~1 is the first to return a stochastic gradient, then $\bar{\tau}_{1} \equiv \tau_{1,1}.$ If agent~3 is the next, then $\bar{\tau}_{2} \equiv \tau_{3,1}.$ If agent~1 returns again, then $\bar{\tau}_{3} \equiv \tau_{1,2},$ and so on.) Moreover,
\begin{align*}
   &\Exp{\norm{\bar{g} - \nabla J_H(\theta)}^2} 
   = \Exp{\norm{\frac{1}{M} \sum_{j = 1}^{M} g_H(\bar{\tau}_{j}, \theta) - \nabla J_H(\theta)}^2} \\
   &= \frac{1}{M^2} \sum_{j = 1}^{M} \Exp{\norm{g_H(\bar{\tau}_{j}, \theta) - \nabla J_H(\theta)}^2} \leq \frac{\sigma^2}{M}
\end{align*}
for all $\theta \in \R^d$ due to the unbiasedness, the i.i.d.\ nature of the trajectories, and Proposition~\ref{Proposition_1}-\eqref{Proposition_1:4}.

In the case of \algname{Malenia NIGT} with Algorithm~\ref{alg:malenia}, \eqref{eq:vdroEReZ} and \eqref{eq:ZNVjWp} also hold since $\Exp{\frac{1}{n} \sum_{i=1}^{n} \frac{\bar{g}_i}{M_i}} = \frac{1}{n} \sum_{i=1}^{n} \frac{1}{M_i} \sum_{j=1}^{M_i} \Exp{g_{i,H}(\tau_{i,j}, \theta)} = \frac{1}{n} \sum_{i=1}^{n} \nabla J_{i,H}(\theta) = \nabla J_{H}(\theta),$ and 
\begin{align*}
   &\Exp{\norm{\frac{1}{n} \sum_{i=1}^{n} \frac{1}{M_i} \sum_{j=1}^{M_i} g_{i,H}(\tau_{i,j}, \theta) - \nabla J_{H}(\theta)}^2} = \Exp{\norm{\frac{1}{n} \sum_{i=1}^{n} \left(\frac{1}{M_i} \sum_{j=1}^{M_i} g_{i,H}(\tau_{i,j}, \theta) - \nabla J_{i,H}(\theta)\right)}^2} \\
   &=\frac{1}{n^2} \sum_{i=1}^{n} \frac{1}{M_i^2} \sum_{j=1}^{M_i} \Exp{\norm{g_{i,H}(\tau_{i,j}, \theta) - \nabla J_{i,H}(\theta)}^2} \leq \frac{1}{n^2} \sum_{i=1}^{n} \frac{1}{M_i} \sigma^2.
\end{align*}
for all $\theta \in \R^d,$ where we use $\Exp{g_{i,H}(\tau_{i,j}, \theta)} = \nabla J_{i,H}(\theta)$ for $i \in [n], j \geq 1,$ independence, and Proposition~\ref{Proposition_1_heter}-(4). It left to use the exit loop condition of Algorithm~\ref{alg:malenia}, which ensure that $(\nicefrac{1}{n} \sum_{i=1}^n \nicefrac{1}{M_i})^{-1} \geq \nicefrac{M}{n}$ and 
\begin{align*}
   \Exp{\norm{\frac{1}{n} \sum_{i=1}^{n} \frac{1}{M_i} \sum_{j=1}^{M_i} g_{i,H}(\tau_{i,j}, \theta) - \nabla J_{H}(\theta)}^2} \leq \frac{\sigma^2}{M}.
\end{align*}

Using \eqref{eq:main_1} from Lemma \ref{Lemma 1} and summing it for $t = 0, \dots, T - 1,$
\begin{align*}
    \frac{1}{T} \Exp{J^* - J(\theta_{T})} \leq \frac{1}{T} \left(J^* - J(\theta_0)\right) - \alpha \frac{1}{T} \sum_{t = 0}^{T - 1} \Exp{\norm{\nabla J(\theta_t)}} + 2 \alpha \frac{1}{T} \sum_{t = 0}^{T - 1} \Exp{\norm{d_t - \nabla J(\theta_t)}} + \frac{L_g \alpha^2}{2}.
\end{align*}
Due to $\left(J^* - J(\theta_{T})\right) \geq 0$ and Lemma~\ref{Lemma 3},
\begin{align*}
    \frac{1}{T} \sum_{t = 0}^{T - 1} \Exp{\norm{\nabla J(\theta_t)}} \leq \frac{\Delta}{\alpha T} + \frac{L_g \alpha}{2} + \left(\frac{2 \sigma}{\sqrt{M_{\text{init}}}T \eta} + 2 \sqrt{\eta} \frac{\sigma}{\sqrt{M}} + \frac{4 \alpha^2  L_h}{\eta^2} + \frac{8 D_g \gamma^H}{\eta} + \frac{4 \alpha  D_h \gamma^H}{\eta}\right)
\end{align*}
Choosing $H= \max\left\{\log_{\gamma}\left(\frac{\varepsilon \eta}{64 \max\{D_g,\alpha D_h\}}\right), 1 \right\},$ we get $\frac{8 D_g \gamma^H}{\eta} + \frac{4 \alpha  D_h \gamma^H}{\eta} \leq \frac{\varepsilon}{4}$ and 
\begin{align*}
    \frac{1}{T} \sum_{t = 0}^{T - 1} \Exp{\norm{\nabla J(\theta_t)}} \leq \frac{\Delta}{\alpha T} + \frac{L_g \alpha}{2} + \frac{2 \sigma}{\sqrt{M_{\text{init}}} T \eta} + 2 \sqrt{\eta} \frac{\sigma}{\sqrt{M}} + \frac{4 \alpha^2  L_h}{\eta^2} + \frac{\varepsilon}{4}
\end{align*}
Now, we set $\alpha=\min\left\{\frac{\varepsilon}{8 L_g},\frac{\eta\sqrt\varepsilon}{4 \sqrt{L_h}}\right\}$ to get
\begin{align*}
    \frac{1}{T} \sum_{t = 0}^{T - 1} \Exp{\norm{\nabla J(\theta_t)}} \leq \frac{\Delta}{\alpha T} + \frac{2 \sigma}{\sqrt{M_{\text{init}}} T \eta} + 2 \sqrt{\eta} \frac{\sigma}{\sqrt{M}} + \frac{\varepsilon}{2}.
\end{align*}
We choose $\eta=\min\left\{\frac{M \varepsilon^2}{64 \sigma^2}, \frac{1}{2}\right\}:$ 
\begin{align*}
    \frac{1}{T} \sum_{t = 0}^{T - 1} \Exp{\norm{\nabla J(\theta_t)}} \leq \frac{\Delta}{\alpha T} + \frac{2 \sigma}{\sqrt{M_{\text{init}}} T \eta} + \frac{3 \varepsilon}{4}.
\end{align*}
Thus, the method converges after 
\begin{align*}
    T 
    &= \mathcal{O}\left(\frac{\Delta}{\varepsilon \alpha} + \frac{\sigma}{\sqrt{M_{\text{init}}} \varepsilon \eta}\right) \\
    &= \mathcal{O}\left(\frac{L_g \Delta}{\varepsilon^2} + \frac{\sqrt{L_h} \Delta}{\varepsilon^{3/2} \eta} + \frac{\sigma}{\sqrt{M_{\text{init}}} \varepsilon \eta}\right) \\
    &= \mathcal{O}\left(\frac{L_g \Delta}{\varepsilon^2} + \frac{\sqrt{L_h} \Delta}{\varepsilon^{3/2}} + \frac{\sigma^2 \sqrt{L_h} \Delta}{M \varepsilon^{7/2}} + \frac{\sigma}{\sqrt{M_{\text{init}}} \varepsilon} + \frac{\sigma^3}{M \sqrt{M_{\text{init}}}\varepsilon^3}\right)
\end{align*}
global iterations.
\end{proof}

\section{Time Complexity}

\subsection{Time complexity of \algname{Rennala NIGT}}
\label{Proof time complexity NIGT}


\THEOREMRENNALA*

\begin{proof}
    With our choice of $M$ and $M_{\text{init}},$ the number of global iterations to get an $\varepsilon$--stationary point is 
    \begin{align*}
    T 
    &= \mathcal{O}\left(\frac{L_g \Delta}{\varepsilon^2} + \frac{\sqrt{L_h} \Delta}{\varepsilon^{3/2}} + \frac{\sigma^2 \sqrt{L_h} \Delta}{M \varepsilon^{7/2}} + \frac{\sigma}{\sqrt{M_{\text{init}}} \varepsilon} + \frac{\sigma^3}{M \sqrt{M_{\text{init}}}\varepsilon^3}\right) \\
    &= \mathcal{O}\left(\frac{L_g \Delta}{\varepsilon^2} + \frac{\sqrt{L_h} \Delta}{\varepsilon^{3/2}} + \frac{\sigma^2}{M\varepsilon^2} + \frac{\sigma^2 \sqrt{L_h} \Delta}{M \varepsilon^{7/2}}\right) = \mathcal{O}\left(\frac{L_g \Delta}{\varepsilon^2} + \frac{\sqrt{L_h} \Delta}{\varepsilon^{3/2}}\right).
\end{align*}
Notice that the time requires to collect $M$ stochastic gradients in Algorithm~\ref{alg:rennala} is
\begin{align*}
    \cO\left(\min_{m\in [n]}\left[\left(\sum^m_{i=1}\frac{1}{h_i}\right)^{-1}(M + m)\right]\right).
\end{align*}
with $\kappa = 0$ due to Lemma~\ref{lemma:rennala}. At the beginning, the algorithm collects $M_{\text{init}}$ stochastic gradients that takes $\cO\left(\min_{m\in [n]}\left[\left(\sum^m_{i=1}\frac{1}{h_i}\right)^{-1}(M_{\text{init}} + m)\right]\right)$ seconds. Then, in each iteration the agents collect $M$ stochastic gradients, which takes $\cO\left(\min_{m\in [n]}\left[\left(\sum^m_{i=1}\frac{1}{h_i}\right)^{-1}(M + m)\right]\right)$ seconds. Thus, after $T$ iterations, the total time to find an $\varepsilon$--stationary point is
\begin{align*}
    &\cO\left(T \times \min_{m\in [n]}\left[\left(\sum^m_{i=1}\frac{1}{h_i}\right)^{-1}(M + m)\right] + \min_{m\in [n]}\left[\left(\sum^m_{i=1}\frac{1}{h_i}\right)^{-1}(M_{\text{init}} + m)\right]\right) \\
    &=\mathcal{O}\left(\min_{m\in [n]}\left[\left(\frac{1}{m}\sum^m_{i=1}\frac{1}{h_i}\right)^{-1}\left(\frac{L_g\Delta}{\varepsilon^2} + \frac{\sqrt{L_h} \Delta}{\varepsilon^{3/2}} + \frac{\sigma^2}{m \varepsilon^2} + \frac{\sigma^2 \sqrt{L_h} \Delta}{m \varepsilon^{7/2}}\right)\right]\right) \\
    &\quad + \mathcal{O}\left(\min_{m\in[n]}\left[\left(\frac{1}{m}\sum^m_{i=1}\frac{1}{h_i}\right)^{-1}\left(\frac{\sigma^2}{m \varepsilon^2} + 1\right)\right]\right) \\
    &=\mathcal{O}\left(\min_{m\in [n]}\left[\left(\frac{1}{m}\sum^m_{i=1}\frac{1}{h_i}\right)^{-1}\left(\frac{L_g\Delta}{\varepsilon^2} + \frac{\sqrt{L_h} \Delta}{\varepsilon^{3/2}} + \frac{\sigma^2}{m \varepsilon^2} + \frac{\sigma^2 \sqrt{L_h} \Delta}{m \varepsilon^{7/2}}\right)\right]\right).
\end{align*}
It is left to substitute $h_i=\dot{h}_i\times H$ and recall that $H = \tilde{\cO}\left(\frac{1}{1 - \gamma}\right)$ to get \eqref{eq:dbkUuJTDQcUoqRPwG}.
\end{proof}

\THEOREMRENNALAGENERAL*
\begin{proof}
   The proof is the same as in Theorem~\ref{thm:main_1}. One should only notice that the total time complexity is
   \begin{align*}
      \tilde{\mathcal{O}}\left(\sum_{t = 1}^{T} \min_{m\in [n]}\left[\left(\frac{1}{m}\sum^m_{i=1}\frac{1}{h_{t,\pi_{t,i}}}\right)^{-1}\left(\frac{M}{m} + 1\right)\right] + \min_{m\in [n]}\left[\left(\frac{1}{m}\sum^m_{i=1}\frac{1}{h_{t,\pi_{0,i}}}\right)^{-1}\left(\frac{M_{\text{init}}}{m} + 1\right)\right]\right)
   \end{align*}
   because Algorithm~\ref{alg:rennala} requires at most  $\min_{m\in [n]}\left[\left(\frac{1}{m}\sum^m_{i=1}\frac{1}{h_{t,\pi_{t,i}}}\right)^{-1}\left(\frac{M}{m} + 1\right)\right]$ seconds to collect a batch of size $M$ in $t$\textsuperscript{th} iteration, and $\min_{m\in [n]}\left[\left(\frac{1}{m}\sum^m_{i=1}\frac{1}{h_{t,\pi_{0,i}}}\right)^{-1}\left(\frac{M_{\text{init}}}{m} + 1\right)\right]$ seconds to collect a batch of size $M_{\text{init}}$ before the loop. It is left to substitute $h_{i,j}=\dot{h}_{i,j}\times H$.
\end{proof}

\THEOREMRENNALACOMM*

\begin{proof}
   The second term in the complexity is proved in Theorem~\ref{thm:main_1}. However, Theorem~\ref{thm:main_1} does not take into account the first communication term. Using the same reasoning, after $T$ iterations, the total time to find an $\varepsilon$--stationary point is
   \begin{align*}
    &\cO\left(T \times \kappa + T \times \min_{m\in [n]}\left[\left(\sum^m_{i=1}\frac{1}{h_i}\right)^{-1}(M + m)\right] + \kappa + \min_{m\in [n]}\left[\left(\sum^m_{i=1}\frac{1}{h_i}\right)^{-1}(M_{\text{init}} + m)\right]\right) \\
    &=\mathcal{O}\left(\kappa \left(\frac{L_g\Delta}{\varepsilon^2} + \frac{\sqrt{L_h} \Delta}{\varepsilon^{3/2}}\right) + \min_{m\in [n]}\left[\left(\frac{1}{m}\sum^m_{i=1}\frac{1}{h_i}\right)^{-1}\left(\frac{L_g\Delta}{\varepsilon^2} + \frac{\sqrt{L_h} \Delta}{\varepsilon^{3/2}} + \frac{\sigma^2}{m \varepsilon^2} + \frac{\sigma^2 \sqrt{L_h} \Delta}{m \varepsilon^{7/2}}\right)\right]\right).
   \end{align*}
   It is left to substitute $h_i=\dot{h}_i\times H$ and recall that $H = \tilde{\cO}\left(\frac{1}{1 - \gamma}\right).$
\end{proof}

\subsection{Time complexity of \algname{Rennala NIGT} under Assumption~\ref{ass:time_univ}}

\THEOREMRENNALAUNIV*

\begin{proof}
    With our choice of $M$ and $M_{\text{init}},$ the number of global iterations to get an $\varepsilon$--stationary point is 
    \begin{align*}
    T = \mathcal{O}\left(\frac{L_g \Delta}{\varepsilon^2} + \frac{\sqrt{L_h} \Delta}{\varepsilon^{3/2}}\right).
\end{align*}

At the beginning, the algorithm collects $M_{\text{init}}$ stochastic gradients that takes
\begin{align*}
    t_{0} = \min\left\{t \, : \, \sum\limits_{i=1}^n N_i(0,t) \geq M_{\text{init}}\right\}
\end{align*}
seconds because $N_i(0,t)$ is the number of calculated gradients in agent $i$ and they work in parallel. Then, in each iteration $k$ the agents collect $M$ stochastic gradients. The first iteration finishes after at most
\begin{align*}
    t_1 = \min\left\{t \, : \, \sum\limits_{i=1}^n N_i(t_{0},t) \geq M\right\}
\end{align*}
seconds. Similarly, the $k$\textsuperscript{th} iteration finishes after at most
\begin{align*}
    t_k = \min\left\{t \, : \, \sum\limits_{i=1}^n N_i(t_{k-1},t) \geq M\right\}
\end{align*}
seconds. Thus, $T$ iterations finish after at most $t_{T}$ seconds.
\end{proof}

\subsection{Time complexity of \algname{Malenia NIGT}}
\THEOREMHETER*

\begin{proof}
    Similarly to the proof of Theorem~\ref{thm:main_1}, with our choice of $M$ and $M_{\text{init}},$
    \begin{align*}
    T = \mathcal{O}\left(\frac{L_g \Delta}{\varepsilon^2} + \frac{\sqrt{L_h} \Delta}{\varepsilon^{3/2}}\right).
    \end{align*}
    At the beginning, the algorithm collects $M_{\text{init}}$ stochastic gradients that takes $\cO\left(\kappa + h_n + \left(\frac{1}{n} \sum^n_{i=1} h_i\right) \frac{M_{\text{init}}}{n}\right)$ seconds due to Lemma~\ref{lemma:malenia}. Then, in each iteration the agents collect $M$ stochastic gradients, which takes $\cO\left(\kappa + h_n + \left(\frac{1}{n} \sum^n_{i=1} h_i\right) \frac{M}{n}\right)$ seconds. Thus, after $T$ iterations, the total time to find an $\varepsilon$--stationary point is
    \begin{align*}
        &\cO\left(\kappa + h_n + \left(\frac{1}{n} \sum^n_{i=1} h_i\right) \frac{M_{\text{init}}}{n} + T \times \left(\kappa + h_n + \left(\frac{1}{n} \sum^n_{i=1} h_i\right) \frac{M}{n}\right)\right) \\
        &=\cO\left(\left(\frac{L_g \Delta}{\varepsilon^2} + \frac{\sqrt{L_h} \Delta}{\varepsilon^{3/2}}\right) \left(\kappa + h_n\right) + \left(\frac{1}{n} \sum^n_{i=1} h_i\right) \frac{\sigma^2}{n \varepsilon^2} + \left(\left(\frac{1}{n} \sum^n_{i=1} h_i\right) \left(\frac{\sigma^2}{n \varepsilon^2} + \frac{\sigma^2 \sqrt{L_h} \Delta}{n \varepsilon^{7/2}}\right)\right)\right) \\
        &=\cO\left(\left(\kappa + h_n\right) \left(\frac{L_g \Delta}{\varepsilon^2} + \frac{\sqrt{L_h} \Delta}{\varepsilon^{3/2}}\right) + \left(\frac{1}{n} \sum^n_{i=1} h_i\right) \left(\frac{\sigma^2}{n \varepsilon^2} + \frac{\sigma^2 \sqrt{L_h} \Delta}{n \varepsilon^{7/2}}\right)\right),
    \end{align*}
    where we substitute our choice of $M$ and $M_{\text{init}},$ and use $T = \mathcal{O}\left(\frac{L_g \Delta}{\varepsilon^2} + \frac{\sqrt{L_h} \Delta}{\varepsilon^{3/2}}\right).$ It is left to substitute $h_i=\dot{h}_i\times H$ and recall that $H = \tilde{\cO}\left(\frac{1}{1 - \gamma}\right)$
\end{proof}

\subsection{Time complexity of \algname{Malenia NIGT} under Assumption~\ref{ass:time_univ}}
\THEOREMHETERUNIV*

\begin{proof}
    Similarly to the proof of Theorem~\ref{thm:main_1}, with our choice of $M$ and $M_{\text{init}},$
    \begin{align*}
    T = \mathcal{O}\left(\frac{L_g \Delta}{\varepsilon^2} + \frac{\sqrt{L_h} \Delta}{\varepsilon^{3/2}}\right).
    \end{align*}
    At the beginning, the algorithm collects $M_{\text{init}}$ stochastic gradients. According to Algorithm~\ref{alg:malenia}, we wait for the moment when 
    \begin{align*}
        \left(\frac{1}{n} \sum_{i=1}^n \frac{1}{M_i}\right)^{-1} \geq \frac{M_{\text{init}}}{n}.
    \end{align*}
    Since $M_i = N_i(0, t),$ the initial phase finishes after at most
    \begin{align*}
        t_{0} = \min\left\{t \, : \, \left(\frac{1}{n} \sum_{i=1}^n \frac{1}{N_i(0, t)}\right)^{-1} \geq \frac{M_{\text{init}}}{n}\right\}
    \end{align*}
    seconds. Similarly, the $k$\textsuperscript{th} iteration finishes after at most
    \begin{align*}
        t_{k} = \min\left\{t \, : \, \left(\frac{1}{n} \sum_{i=1}^n \frac{1}{N_i(t_{k-1}, t)}\right)^{-1} \geq \frac{M}{n}\right\}
    \end{align*}
    seconds. Thus, $T$ iterations finish after at most $t_{T}$ seconds.
\end{proof}

\section{Global Convergence}

\begin{lemma}[Relaxed weak gradient domination (\cite{Ding2022})]\label{lemma_smoothness}
    Let Assumption~\ref{Assumption_1} and Assumptions 2.1 and 4.6. from \citep{Ding2022} hold. Then, 
    \begin{align*}
        \varepsilon'+\norm{\nabla J(\theta)}\geq\sqrt{2\mu}(J^*-J(\theta)),
    \end{align*}
    for all $\theta \in \mathbb{R}^d,$ where $\varepsilon' \eqdef \frac{\mu_F\sqrt{\varepsilon_{bias}}}{M_g(1-\gamma)}$ and $\mu \eqdef \frac{\mu_F^2}{2M_g^2}.$ Parameter $\varepsilon_{bias}$ is an approximation error and $\mu_F$ is the smallest eigenvalue of the Fisher information matrix induced by the policy $\pi_{\theta}$ and and the initial state distribution \citep{Ding2022}.
\end{lemma}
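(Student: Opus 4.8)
The plan is to establish this weak gradient-domination (a relaxed Polyak--\L{}ojasiewicz) inequality through the compatible-function-approximation framework, following \citet{Ding2022}. Throughout, write $A^{\pi_\theta}$ for the advantage function of $\pi_\theta$, let $\nu_\theta$ denote the (normalized) discounted state--action visitation distribution induced by $\pi_\theta$, and let $\nu^*$ denote the analogous distribution under an optimal policy $\pi^*$. The starting point is the performance difference lemma, which expresses the suboptimality gap as an off-policy advantage expectation,
\begin{align*}
(1 - \gamma)\left(J^* - J(\theta)\right) = \mathbb{E}_{(s,a) \sim \nu^*}\!\left[A^{\pi_\theta}(s,a)\right].
\end{align*}

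Next I would introduce the natural-gradient (compatible) direction $w_\theta^* \eqdef (1 - \gamma)\, F(\theta)^{\dagger} \nabla J(\theta)$, where $F(\theta) = \mathbb{E}_{(s,a)\sim\nu_\theta}[\nabla\log\pi_\theta(a|s)\,\nabla\log\pi_\theta(a|s)^{\top}]$ is the Fisher information matrix. By the policy-gradient identity $(1-\gamma)\nabla J(\theta) = \mathbb{E}_{\nu_\theta}[A^{\pi_\theta}(s,a)\nabla\log\pi_\theta(a|s)]$, the vector $w_\theta^*$ is exactly the least-squares fit of the advantage in the score features under $\nu_\theta$ and satisfies the normal equation $F(\theta)\, w_\theta^* = (1-\gamma)\nabla J(\theta)$. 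I then split the right-hand side of the performance difference lemma as
\begin{align*}
\mathbb{E}_{\nu^*}\!\left[A^{\pi_\theta}\right] = \underbrace{\mathbb{E}_{\nu^*}\!\left[A^{\pi_\theta} - \inp{w_\theta^*}{\nabla\log\pi_\theta}\right]}_{\text{(transfer error)}} + \mathbb{E}_{\nu^*}\!\left[\inp{w_\theta^*}{\nabla\log\pi_\theta}\right].
\end{align*}

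The two terms are then handled separately. For the transfer-error term, Jensen's inequality together with Assumption~4.6 of \citet{Ding2022}—which bounds the squared residual of the compatible fit after moving from $\nu_\theta$ to $\nu^*$—gives a bound by $\sqrt{\varepsilon_{bias}}$. For the linear term, Cauchy--Schwarz and Assumption~\ref{Assumption_1} ($\norm{\nabla\log\pi_\theta(a|s)} \leq M_g$) give $\mathbb{E}_{\nu^*}[\inp{w_\theta^*}{\nabla\log\pi_\theta}] \leq M_g\,\norm{w_\theta^*}$. Finally, the normal equation and Assumption~2.1 of \citet{Ding2022} ($F(\theta) \succeq \mu_F I$, hence $\norm{F(\theta)^{\dagger}} \leq 1/\mu_F$) yield $\norm{w_\theta^*} \leq \frac{1-\gamma}{\mu_F}\norm{\nabla J(\theta)}$. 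Combining these estimates and dividing by $(1-\gamma)$ gives $J^* - J(\theta) \leq \frac{\sqrt{\varepsilon_{bias}}}{1-\gamma} + \frac{M_g}{\mu_F}\norm{\nabla J(\theta)}$; multiplying through by $\mu_F/M_g = \sqrt{2\mu}$ and recalling $\varepsilon' = \frac{\mu_F\sqrt{\varepsilon_{bias}}}{M_g(1-\gamma)}$ produces the claimed inequality.

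I expect the main obstacle to be the transfer-error step. The fit $w_\theta^*$ is optimal only under the on-policy distribution $\nu_\theta$, whereas the performance difference lemma evaluates the residual under the off-policy distribution $\nu^*$; bridging this distribution shift is precisely what Assumption~4.6 is designed for, and verifying that this residual (rather than the purely on-policy approximation error, which the normal equation annihilates) is the quantity captured by $\varepsilon_{bias}$ is the only genuinely delicate point. The remaining ingredients—the performance difference lemma, the compatible normal equation, and the spectral bound via $\mu_F$—are standard and routine.
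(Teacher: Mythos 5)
Your proof is correct: the chain consisting of the performance difference lemma, the split into the transfer error (bounded by $\sqrt{\varepsilon_{bias}}$ via Jensen and Assumption~4.6), the Cauchy--Schwarz bound $\mathbb{E}_{\nu^*}\left[\inp{w_\theta^*}{\nabla\log\pi_\theta}\right] \leq M_g \norm{w_\theta^*}$, and the spectral bound $\norm{w_\theta^*} \leq \frac{1-\gamma}{\mu_F}\norm{\nabla J(\theta)}$ from Assumption~2.1, followed by multiplication by $\sqrt{2\mu} = \mu_F/M_g$, is exactly the standard compatible-function-approximation argument, and the constants work out to the stated $\varepsilon'$ and $\mu$. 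Note that the paper itself offers no proof of this lemma---it is imported verbatim from \citet{Ding2022}---so your reconstruction is precisely the argument behind that citation rather than an alternative to anything in the paper.
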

Typically, the parameter $\varepsilon_{bias}$ is small \citep{Ding2022}.

\THEOREMGLOBAL*
\begin{proof}
In the proof of Theorem~\ref{Theorem_1}, we show that \eqref{eq:vdroEReZ} and \eqref{eq:ZNVjWp} are satisfied, and we can use Lemma~\ref{Lemma 3}. Using \eqref{eq:main_1} from Lemma \ref{Lemma 1} and Lemma~\ref{Lemma 3}:
\begin{align*}
    \Exp{J^* -J(\theta_{t+1})} 
    &\leq \Exp{J^*- J(\theta_t)} - \alpha \Exp{\norm{\nabla J(\theta_t)}} + 2 \alpha \Exp{\norm{d_t - \nabla J(\theta_t)}} + \frac{L_g \alpha^2}{2} \\
    &\leq \Exp{J^*- J(\theta_t)} - \alpha \Exp{\norm{\nabla J(\theta_t)}} \\
    &\quad + 2 \alpha \left((1-\eta)^t \frac{\sigma}{\sqrt{M_{\text{init}}}}+ \sqrt{\eta} \frac{\sigma}{\sqrt{M}} + L_h \frac{2 \alpha^2}{\eta^2} + \frac{4 D_g \gamma^H}{\eta} + 2 D_h \gamma^H \frac{\alpha}{\eta}\right) + \frac{L_g \alpha^2}{2}.
\end{align*}
Due to Lemma~\ref{lemma_smoothness},
\begin{align*}
    \Exp{J^* -J(\theta_{t+1})} 
    &\leq \left(1 - \alpha \sqrt{2 \mu}\right)\Exp{J^*- J(\theta_t)} + \alpha \varepsilon' \\
    &\quad + 2 \alpha \left((1-\eta)^t \frac{\sigma}{\sqrt{M_{\text{init}}}}+ \sqrt{\eta} \frac{\sigma}{\sqrt{M}} + L_h \frac{2 \alpha^2}{\eta^2} + \frac{4 D_g \gamma^H}{\eta} + 2 D_h \gamma^H \frac{\alpha}{\eta}\right) + \frac{L_g \alpha^2}{2}.
\end{align*}
Unrolling the recursion,
\begin{align*}
    &\Exp{J^* -J(\theta_{t+1})} 
    \leq \left(1 - \alpha \sqrt{2 \mu}\right)^{t + 1} \Exp{J^*- J(\theta_0)} \\
    &\quad + 2 \alpha \sum_{i = 0}^{t} \left(1 - \alpha \sqrt{2 \mu}\right)^{i} \left((1-\eta)^{t - i} \frac{\sigma}{\sqrt{M_{\text{init}}}}+ \sqrt{\eta} \frac{\sigma}{\sqrt{M}} + L_h \frac{2 \alpha^2}{\eta^2} + \frac{4 D_g \gamma^H}{\eta} + 2 D_h \gamma^H \frac{\alpha}{\eta} + \varepsilon' + \frac{L_g \alpha}{2}\right) \\
    & \leq \left(1 - \alpha \sqrt{2 \mu}\right)^{t + 1} \Exp{J^*- J(\theta_0)} \\
    &\quad + 2 \alpha \sum_{i = 0}^{t} \left((1-\eta)^{t - i} \frac{\sigma}{\sqrt{M_{\text{init}}}}\right) \\
    &\quad + 2 \alpha \sum_{i = 0}^{t} \left(1 - \alpha \sqrt{2 \mu}\right)^{i} \left(\sqrt{\eta} \frac{\sigma}{\sqrt{M}} + L_h \frac{2 \alpha^2}{\eta^2} + \frac{4 D_g \gamma^H}{\eta} + 2 D_h \gamma^H \frac{\alpha}{\eta} + \varepsilon' + \frac{L_g \alpha}{2} \right).
\end{align*}
Using $\sum_{i = 0}^{t} \left(1 - \alpha \sqrt{2 \mu}\right)^{i} \leq \frac{1}{\alpha \sqrt{2 \mu}}$ for all $\alpha \leq \frac{1}{\sqrt{2 \mu}}$ and $\sum_{i = 0}^{t} (1-\eta)^{t - i} \leq \frac{1}{\eta}$ for all $\eta \leq 1,$
\begin{align*}
    &\Exp{J^* -J(\theta_{t+1})} \leq \left(1 - \alpha \sqrt{2 \mu}\right)^{t + 1} \Exp{J^*- J(\theta_0)} +  \frac{2 \alpha \sigma}{\eta \sqrt{M_{\text{init}}}} \\
    &\quad + \frac{\sqrt{2}}{\sqrt{\mu}} \left(\sqrt{\eta} \frac{\sigma}{\sqrt{M}} + L_h \frac{2 \alpha^2}{\eta^2} + \frac{4 D_g \gamma^H}{\eta} + 2 D_h \gamma^H \frac{\alpha}{\eta} + \varepsilon' + \frac{L_g \alpha}{2} \right).
\end{align*}
It is left to apply our choice of the parameters to get 
\begin{align*}
    \Exp{J^* -J(\theta_{T})} 
    &\leq \left(1 - \alpha \sqrt{2 \mu}\right)^{T} \Delta + \frac{3 \varepsilon}{4} + \frac{\sqrt{2} \varepsilon'}{\sqrt{\mu}} \\
    &\leq \varepsilon + \frac{\sqrt{2} \varepsilon'}{\sqrt{\mu}}.
\end{align*}
after $T$ iterations.
\end{proof}

\subsection{Time complexity of \algname{Rennala NIGT}}
\label{time_complexity_NIGT_global}

\THEOREMRENNALGLOBAL*

\begin{proof}
With our choice of $M$ and $M_{\text{init}},$ the number of global iterations is 
\begin{align}
    \label{eq:PXMRRNsJOYsrqPtGYCHg}
    T&=\mathcal{O}\left(\left(\frac{L_g}{\mu \varepsilon} + \frac{\sqrt{L_h}}{\mu^{3/4} \sqrt\varepsilon}\right) \log \left(\frac{\Delta}{\varepsilon}\right)\right).
\end{align}
Notice that the time requires to collect $M$ stochastic gradients in Algorithm~\ref{alg:rennala} is
\begin{align*}
    \cO\left(\kappa+\min_{m\in [n]}\left[\left(\sum^m_{i=1}\frac{1}{{h}_i}\right)^{-1}(M + m)\right]\right).
\end{align*}
due to Lemma~\ref{lemma:rennala}. Thus, after $T$ iterations, the total communication time is
\begin{align*}
    &\tilde{\cO}\left(T \times \left(\kappa+\min_{m\in [n]}\left[\left(\sum^m_{i=1}\frac{1}{h_i}\right)^{-1}(M + m)\right]\right) + \kappa + \min_{m\in [n]}\left[\left(\sum^m_{i=1}\frac{1}{h_i}\right)^{-1}(M_{\text{init}} + m)\right]\right) \\
    &=\tilde{\cO}\left(T \kappa + T \left(\min_{m\in [n]}\left[\left(\frac{1}{m} \sum^m_{i=1}\frac{1}{h_i}\right)^{-1}\left(\frac{M}{m} + 1\right)\right]\right) + \min_{m\in [n]}\left[\left(\frac{1}{m}\sum^m_{i=1}\frac{1}{h_i}\right)^{-1}\left(\frac{M_{\text{init}}}{m} + 1\right)\right]\right).
\end{align*}
Using the choice of $M_{\text{init}}$ and $M,$ and the bound \eqref{eq:PXMRRNsJOYsrqPtGYCHg}, we get the time complexity
\begin{align*}
    &=\tilde{\cO}\left(\kappa \left(\frac{L_g}{\mu \varepsilon} + \frac{\sqrt{L_h}}{\mu^{3/4} \sqrt\varepsilon}\right) + \min_{m\in [n]}\left[\left(\frac{1}{m} \sum^m_{i=1}\frac{1}{h_i}\right)^{-1}\left(\frac{L_g}{\mu \varepsilon} + \frac{\sqrt{L_h}}{\mu^{3/4} \sqrt\varepsilon} + \frac{\sigma^2}{m \mu \varepsilon^2} + \frac{\sigma^2 \sqrt{L_h}}{m \mu^{7/4} \varepsilon^{5/2}}\right)\right]\right) \\
    &\quad + \tilde{\cO}\left(\min_{m\in [n]}\left[\left(\frac{1}{m}\sum^m_{i=1}\frac{1}{h_i}\right)^{-1}\left(\frac{\sigma^2}{m \mu \varepsilon^2} + 1\right)\right]\right) \\
    &=\tilde{\cO}\left(\kappa \left(\frac{L_g}{\mu \varepsilon} + \frac{\sqrt{L_h}}{\mu^{3/4} \sqrt\varepsilon}\right) + \min_{m\in [n]}\left[\left(\frac{1}{m} \sum^m_{i=1}\frac{1}{h_i}\right)^{-1}\left(\frac{L_g}{\mu \varepsilon} + \frac{\sqrt{L_h}}{\mu^{3/4} \sqrt\varepsilon} + \frac{\sigma^2}{m \mu \varepsilon^2} + \frac{\sigma^2 \sqrt{L_h}}{m \mu^{7/4} \varepsilon^{5/2}}\right)\right]\right).
\end{align*}
It is left to substitute $h_i=\dot{h}_i\times H$ and recall that $H = \tilde{\cO}\left(\frac{1}{1 - \gamma}\right)$.
\end{proof}

\subsection{Time complexity of \algname{Malenia NIGT}}
\label{time_complexity_malenia_NIGT_global}

\THEOREMMALENIAGLOBAL*

\begin{hproof}
With our choice of $M$ and $M_{\text{init}},$ the number of global iterations is 
\begin{align*}
    T&=\mathcal{O}\left(\left(\frac{L_g}{\mu \varepsilon} + \frac{\sqrt{L_h}}{\mu^{3/4} \sqrt\varepsilon}\right) \log \left(\frac{\Delta}{\varepsilon}\right)\right)
\end{align*}
Notice that the time requires to collect $M$ stochastic gradients in Algorithm~\ref{alg:malenia} is
\begin{align*}
    \cO\left(\kappa+h_n + \left(\frac{1}{n} \sum^n_{i=1} h_i\right) \frac{M}{n}\right).
\end{align*}
due to Lemma~\ref{lemma:malenia}. Thus, after $T$ iterations, the total communication time is 
\begin{equation*}
\begin{aligned}
    \cO&\left({T}\times\left[\kappa+{h}_n + \left(\frac{1}{n} \sum^n_{i=1} {h}_i\right)\frac{M}{n}\right] \right)+\cO\left(\kappa+{h}_n + \left(\frac{1}{n} \sum^n_{i=1} {h}_i\right) \frac{M_{\text{init}}}{n}\right)\\
    =\cO&\left({\kappa T}\right)+\cO\left({{h}_nT}\right) + \cO\left(\left(\frac{1}{n} \sum^n_{i=1} {h}_i\right) \frac{MT}{n}\right)+\cO\left(\kappa+{h}_n + \left(\frac{1}{n} \sum^n_{i=1} {h}_i\right) \frac{M_{\text{init}}}{n} \right).
\end{aligned}
\end{equation*}
Using the choice of $M_{\text{init}}$ and $M,$ and the bound on $T,$ the total communication time is 
\begin{align*}
    \cO&\left({\kappa T} +  {{h}_nT} + \left(\frac{1}{n} \sum^n_{i=1} {h}_i\right) \left(\frac{\sigma^2}{n \mu \varepsilon^2} + \frac{\sigma^2 \sqrt{L_h}}{n \mu^{7/4} \varepsilon^{5/2}}\right) + \left(\frac{1}{n} \sum^n_{i=1} {h}_i\right) \frac{\sigma^2}{n \mu \varepsilon^2} \right) \\
    \cO&\left(\kappa \left(\frac{L_g}{\mu \varepsilon} + \frac{\sqrt{L_h}}{\mu^{3/4} \sqrt\varepsilon}\right) + {h}_n \left(\frac{L_g}{\mu \varepsilon} + \frac{\sqrt{L_h}}{\mu^{3/4} \sqrt\varepsilon}\right) + \left(\frac{1}{n} \sum^n_{i=1} {h}_i\right) \left(\frac{\sigma^2}{n \mu \varepsilon^2} + \frac{\sigma^2 \sqrt{L_h}}{n \mu^{7/4} \varepsilon^{5/2}}\right) \right).
\end{align*}
It is left to substitute $h_i=\dot{h}_i\times H$ and recall that $H = \tilde{\cO}\left(\frac{1}{1 - \gamma}\right)$.
\end{hproof}

\section{Proof of the Lower Bound in the Homogeneous Setup}
\label{sec:proof_sketch}
In this section, we prove a lower bound for a family of methods that only access unbiased stochastic gradients of an $(L_g,L_h)$--twice smooth function $F$ with $\sigma$--bounded stochastic variance.
We should clarify that our lower bound applies only to methods and proofs that use stochastic gradients \eqref{eq:ATnItCmpHBlEQ} as a black-box oracle. In other words, our lower bound applies to all methods and proofs for which it is sufficient to take Proposition~\ref{Proposition_1} as an assumption with $L_g,L_h,$ and $\sigma^2$ being constants. For our proof strategy, as well as the previous state-of-the-art strategies \citep{fatkhullin2023stochastic,lan2024asynchronous}, this is the case. Extending the lower bound to methods that fully utilize the structure of $J$ in \eqref{eq:main} is an important and challenging problem. Nevertheless, to the best of our knowledge, this is the first attempt to provide a lower bound for our asynchronous and distributed setting. 
\begin{theorem}\label{thm:lower-p}
For all $\Delta>0$, $L_g,L_h, \varepsilon > 0$, and $\sigma > 0$ such that $\varepsilon \le \cO(\sigma),$ 
there exists 
an twice smooth function $F$ with $L_g$--Lipschitz gradients, $L_h$-Lipschitz Hessians, and $F(0) - F^* \leq \Delta,$ and an oracle that returns unbiased stochastic gradients with $\sigma$--bounded gradient variance such that any first-order zero-respecting algorithm, where the agents communicate with the server or other agents to update an iterate, requires at least
\begin{align*}
    \tilde{\Omega}\left(\kappa \times \frac{L_1^{3/7} L_2^{2/7} \Delta }{\varepsilon^{12/7}} + H \min\limits_{m\in [n]}\left[\left(\frac{1}{m}\sum\limits^m_{i=1}\frac{1}{\dot{h}_i}\right)^{-1}\left(\frac{\sigma^2}{m \varepsilon^2} + 1\right)\right] \times \min\left\{\frac{L_g \Delta}{\varepsilon^2}, \frac{\sqrt{L_h} \Delta}{\varepsilon^{3/2}}\right\}\right).
\end{align*}
seconds under Assumption~\ref{ass:time} to output an $\varepsilon$-stationary point with high probability.
\end{theorem}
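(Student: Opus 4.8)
The plan is to prove the two summands separately — a communication lower bound $\tilde\Omega(\kappa\, L_g^{3/7}L_h^{2/7}\Delta\,\varepsilon^{-12/7})$ and a computation lower bound — and then combine them, since $\tilde\Omega(a+b)=\tilde\Omega(\max\{a,b\})$ (note that the $L_1,L_2$ in the statement are the Lipschitz constants $L_g,L_h$). Both halves rest on the same distributed, zero-respecting adversary model: the shared server iterate is updated only at a communication round, and between rounds the $n$ agents produce stochastic gradients of a hard function $F$ at points lying in the subspace already \emph{discovered} by the algorithm.

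For the communication term I would reduce to the deterministic first-order lower bound of \citet{carmon2021lower} for the $(L_g,L_h)$--twice-smooth class. Take their robust zero-chain hard instance $F$, scaled so that $F(0)-F^*\le\Delta$ and so that any zero-respecting first-order method needs $\Omega(L_g^{3/7}L_h^{2/7}\Delta\,\varepsilon^{-12/7})$ sequential queries before $\norm{\nabla F}\le\varepsilon$; here the chain has $\Theta(\varepsilon^{-12/7})$ active links and the frontier gradient stays above $\varepsilon$ until the last link is activated. The combinatorial heart of this step is the observation that, because all agents in a given round query points supported on the already-discovered coordinates and $F$ is a (robust) zero-chain, a single communication round can activate at most one new coordinate — no matter how many agents compute in parallel or how large the batches are — since moving the shared iterate to a point exciting the new coordinate requires a communication. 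Hence the number of communication rounds is at least the chain length, and as each round forces at least one round-trip / \algname{AllReduce} costing $\kappa$, the communication time is $\tilde\Omega(\kappa\, L_g^{3/7}L_h^{2/7}\Delta\,\varepsilon^{-12/7})$.

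For the computation term I would use a probabilistic zero-chain together with the time-model adversary in the style of \citet{tyurin2023optimal,arjevani2020second}. First, construct a twice-smooth chain $F$ whose length $T^*$ is maximized subject to the two smoothness budgets and $F(0)-F^*\le\Delta$; optimizing the scaling shows the binding smoothness constraint yields $T^*=\Theta(\min\{L_g\Delta\,\varepsilon^{-2},\sqrt{L_h}\Delta\,\varepsilon^{-3/2}\})$, the deterministic (noiseless) progress count. Next, calibrate the stochastic oracle so that at the frontier point the component revealing the next coordinate is masked by $\sigma$-variance noise and is informative only with probability $\Theta(\varepsilon^2/\sigma^2)$ per sample; a standard concentration / Le Cam argument then shows a zero-respecting algorithm must draw $\Omega(\sigma^2/\varepsilon^2)$ samples at the frontier to advance one link with constant probability. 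Finally, I invoke the equilibrium-time lower bound: under Assumption~\ref{ass:time} (agent $i$ spends $h_i=\dot{h}_i H$ per sample), collecting a batch of $B=\Omega(\sigma^2/\varepsilon^2)$ samples, under any schedule across agents, takes at least $\Omega(\min_{m\in[n]}[(\sum_{i=1}^m 1/h_i)^{-1}(B+m)]) = \Omega(H\min_{m\in[n]}[(\tfrac{1}{m}\sum_{i=1}^m 1/\dot{h}_i)^{-1}(\tfrac{\sigma^2}{m\varepsilon^2}+1)])$ seconds. Multiplying the per-link batch-collection time by the $T^*$ links gives the second summand, and combining with the communication term yields the claim with high probability.

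The main obstacle I anticipate is engineering a single hard instance and oracle that simultaneously (i) provably lies in the $(L_g,L_h)$--twice-smooth class with $F(0)-F^*\le\Delta$, (ii) realizes the sequential ``one link per round'' chain behavior under the distributed zero-respecting model, and (iii) carries exactly enough oracle noise that each link costs $\Omega(\sigma^2/\varepsilon^2)$ samples; and then coupling the probabilistic progress analysis with the scheduling adversary of the time model so that the expected (and high-probability) wall-clock time decomposes cleanly into ``(number of links) $\times$ (time to collect one frontier batch).'' Keeping the concentration bounds uniform across all admissible schedules, and converting the in-expectation progress statement into a high-probability guarantee, is the delicate part; by contrast, the smoothness-constrained scaling optimization producing the $\min\{\varepsilon^{-2},\varepsilon^{-3/2}\}$ progress count and the reduction to \citet{carmon2021lower} for the $\varepsilon^{-12/7}$ round count are comparatively routine given the existing constructions.
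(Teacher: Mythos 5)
Your proposal is correct and follows essentially the same route as the paper's proof: the computational term is obtained from the time-model lower bound of \citet{tyurin2023optimal} with the chain length shortened to $\Theta\left(\min\left\{\nicefrac{L_g\Delta}{\varepsilon^2},\nicefrac{\sqrt{L_h}\Delta}{\varepsilon^{3/2}}\right\}\right)$ as in \citet{arjevani2020second} to respect Hessian smoothness, and the communication term is obtained by reducing to the deterministic $\Omega\left(\nicefrac{L_g^{3/7}L_h^{2/7}\Delta}{\varepsilon^{12/7}}\right)$ bound of \citet{carmon2021lower} with each sequential progress step forcing a $\kappa$-second communication. Your ``one new coordinate per communication round'' zero-chain argument is a slightly more careful justification of the communication term than the paper's (which simply invokes the assumption that agents communicate to update the iterate), but it is the same reduction.
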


One particular direction is to extend the result to methods with variance-reduction techniques \citep{huang2020momentum,Ding2022,xu2019sample,xu2020improved,fan2021fault}, which rely on importance sampling (IS). Since the distribution of trajectories is non-stationary, such methods require an additional strong assumption that the IS weights are bounded. Our lower bound applies to all methods that do not require this additional assumption, and extending the lower bound to settings with these extra assumptions is an important direction for future work. Moreover, it would be interesting to extend the lower bound to Hessian-aided PG methods \citep{fatkhullin2023stochastic,ganesh2024global}, which will also require taking into account an extra assumption in the design of lower bounds that the variance of Hessians is bounded.

\subsection{Proof of Theorem~\ref{thm:lower-p}}

\begin{proof}
    We are slightly concise in descriptions since the proof almost repeats the ideas from \citep{arjevani2022lower,carmon2021lower,arjevani2020second,tyurin2023optimal,tyurin2024optimalgraph,tyurin2024shadowheart}. In particular, \citet{tyurin2023optimal} provided the proof of the lower bound
    \begin{align*}
    \tilde{\Omega}\left(\min\limits_{m\in [n]}\left[\left(\frac{1}{m}\sum\limits^m_{i=1}\frac{1}{h_i}\right)^{-1}\left(\frac{\sigma^2}{m \varepsilon^2} + 1\right)\right] \times T\right)
    \end{align*}
    for $L_g$--smooth functions $F$ without second-order smoothness, where $T = \Theta\left(\frac{L_g\Delta}{\varepsilon^2}\right).$ However, following \citep{arjevani2020second} and using the same scaled ``worst-case'' function as in \citep{carmon2020lower,arjevani2020second,arjevani2022lower}, we have to take a different dimension
    \begin{align*}
        T = \Theta\left(\frac{\Delta}{\varepsilon} \min\left\{\frac{L_g}{\varepsilon}, \frac{\sqrt{L_h}}{\sqrt{\varepsilon}}\right\}\right) = \Theta\left(\min\left\{\frac{L_g \Delta}{\varepsilon^2}, \frac{\sqrt{L_h} \Delta}{\varepsilon^{3/2}}\right\}\right)
    \end{align*} 
    instead of $\frac{L_g\Delta}{\varepsilon^2}$ to ensure that the function has $L_h$--smooth Hessians (we take $T$ as in (84) from \citep{arjevani2020second}). Thus, the lower bound is 
    \begin{align*}
        \tilde{\Omega}\left(\min\limits_{m\in [n]}\left[\left(\frac{1}{m}\sum\limits^m_{i=1}\frac{1}{h_i}\right)^{-1}\left(\frac{\sigma^2}{m \varepsilon^2} + 1\right)\right] \times \min\left\{\frac{L_g \Delta}{\varepsilon^2}, \frac{\sqrt{L_h} \Delta}{\varepsilon^{3/2}}\right\}\right).
    \end{align*}
    It is left to substitute $h_i=\dot{h}_i\times H.$ We can get the communication term using the deterministic construction from \citep{carmon2021lower}, which says that required number call of the first-order oracle is
    \begin{align*}
        \Omega\left(\frac{L_1^{3/7} L_2^{2/7} \Delta}{\varepsilon^{12/7}}\right).
    \end{align*}
    Thus, the lower bound for communication is $\Omega\left(\kappa \times \frac{L_1^{3/7} L_2^{2/7} \Delta}{\varepsilon^{12/7}}\right)$ seconds under the assumption that the agents communicate with the server or other agents after every gradient computation.
\end{proof}

\section{Experiments}
\label{sec:experiments}
In our experiments, we compare the performance of \algname{Rennala NIGT} with \algname{AFedPG} by \citet{lan2024asynchronous} and with the synchronized version of \algname{NIGT} (\algname{Synchronized NIGT}), where all agents compute one stochastic gradient, aggregate them in a synchronized fashion, and perform the standard \algname{NIGT} step \citep{fatkhullin2023stochastic}. We focus on the MuJoCo tasks \citep{todorov2012mujoco}. We consider the standard setup, where the actions are sample from a Gaussian policy. Given a state $s$, the policy outputs mean $\mu_\theta(s)$ and 
standard deviation $\sigma_\theta(s) > 0$, and samples
\[
u_t \sim \pi_\theta(\cdot \mid s) 
= \mathcal{N}\big(\mu_\theta(s), \text{diag}(\sigma_\theta^2(s))\big).
\]
Then, the actions are defined as $a_t = \alpha \tanh(u_t),$ where $\alpha$ is an appropriate scaling factor (for the most MuJoCo tasks, $\alpha = 1$). We take the neural network architecture $s$ $\;\to\;$ Linear($d_s,64$) $\to$ Tanh $\to$ Linear($64,64$) $\to$ Tanh $\to$ \{Linear($64,d_a$) $\to \mu_\theta,$ Linear($64,d_a$) $\to$ Softplus $\to$ $\sigma_\theta$\}, where $d_s$ and $d_a$ are the dimensions of the state and action spaces, respectively.

We consider the centralized setting with different computation and communication scenarios, with $h_i$ denoting the computation time of agent $i$ and $\kappa_i$ denoting the communication time for sending one vector from agent $i$ to the server. For instance, if $h_i = 1$ and $\kappa_i = 0$, then agent $i$ computes one gradient in $1$ second and sends it to the server without delay. Unlike Section~\ref{sec:ass_time}, in the experimental part we consider a more general setting where agents have different communication times. However, Assumption~\ref{ass:time} still holds with $\kappa = \max\limits_{i \in [n]} \kappa_i.$

We run every experiment with $5$ seeds and report $(20\%,80\%)$ confidence intervals. All methods start from the same point, for a fixed seed, and have two parameters: the momentum $\eta \in \{0.001, 0.01, 0.1\}$ and the learning rate $\alpha \in \{2^{-10}, 2^{-9}, \ldots, 2^{-1}\}$. We tune both on the Humanoid-v4 task with equal computation speeds and zero communication delays and observe that $\eta = 0.1$ is the best choice for all algorithms. However, $\alpha$ is tuned differently for different algorithms. Thus, in the following experiments, we tune $\alpha \in \{2^{-10}, 2^{-9}, \ldots, 2^{-1}\}$ for every plot and algorithm. Our algorithm \algname{Rennala NIGT} has the additional parameters $M$ and $M_{\text{init}}.$ We take $M_{\text{init}} = M$ which is tuned as $M \in \{20,30,50\}.$

The code was written in Python 3 using PyTorch \citep{paszke2019pytorch}. The distributed environment was emulated on machines with Intel(R) Xeon(R) Gold 6278C CPU @ 2.60GHz and 52 CPUs.

\subsection{Experiments with different environments}
\label{sec:diff_env}
We start with the experiment on Humanoid-v4 from the main part of the paper (see Figure~\ref{fig:ml} or Figure~\ref{fig:ml_2}). We consider horizon $512$ and $n = 10$ agents with equal computation speeds and zero communication delays: $h_i = 1$ and $\kappa_i = 0$ for all $i \in [n],$ and observe that the performance of all methods is almost the same, which is expected. Then, we increase the heterogeneity of times by taking $h_i = \sqrt{i}$ and $\kappa_i = \sqrt{i}$ for all $i \in [n],$ and $h_i = \sqrt{i}$ and $\kappa_i = \sqrt{i} \times d^{1/4}$ for all $i \in [n],$ where $d$ is the number of parameters. We observe that \algname{Rennala NIGT} is the only robust method, and converges faster than other methods.
\begin{figure}[h]
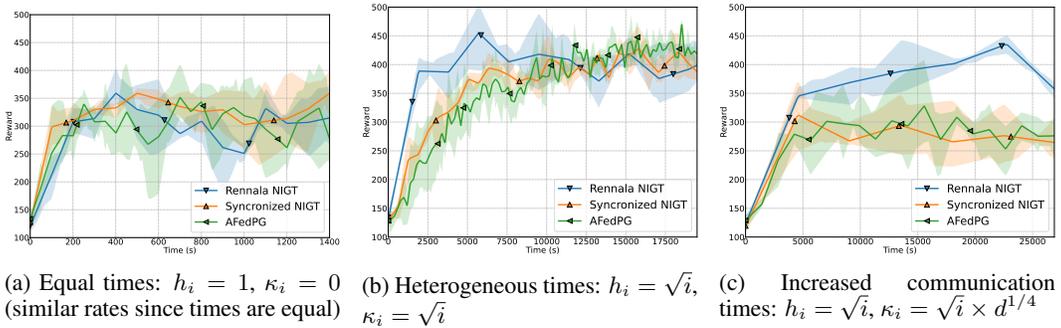

\centering
\begin{subfigure}[t]{0.32\columnwidth}
    \centering
    \includegraphics[width=\columnwidth]{./results/rl_only_equal_computation_humanoid_eta_0_1_step_10_2_seeds_5_iter_3000.pdf}
    \caption{Equal times: $h_i = 1$, $\kappa_i = 0$ (similar rates since times are equal)}
    \label{fig:e_1_2}
\end{subfigure}
\hfill
\begin{subfigure}[t]{0.32\columnwidth}
    \centering
    \includegraphics[width=\columnwidth]{./results/rl_sqrt_humanoid_eta_0_1_step_9_2_seeds_5_iter_10000.pdf}
    \caption{Heterogeneous times: $h_i = \sqrt{i}$, $\kappa_i = \sqrt{i}$}
    \label{fig:e_2_2}
\end{subfigure}
\hfill
\begin{subfigure}[t]{0.32\columnwidth}
    \centering
    \includegraphics[width=\columnwidth]{./results/rl_sqrt_grad_sqrt_send_div_div_3_over_4_dim_humanoid_eta_0_1_short_seeds_5_iter_2000.pdf}
    \caption{Increased communication times: $h_i = \sqrt{i}$, $\kappa_i = \sqrt{i} \times d^{1/4}$}
    \label{fig:e_3_2}
\end{subfigure}
\caption{Experiments on Humanoid-v4 with increasing heterogeneity of times (from left to right).}
\label{fig:ml_2}
\end{figure}

We also consider other environments in Figure~\ref{fig:ml_3}: Reacher-v4 with horizon $1024$, Walker2d-v4 with horizon $1024$, and Hopper-v4 with horizon $1024$. We observe that our algorithm converges faster on Humanoid-v4 and Reacher-v4. However, for Walker2d-v4 and Hopper-v4, the gap between the algorithms is less pronounced.
\begin{figure}[h]
\centering
\begin{subfigure}[t]{0.32\columnwidth}
    \centering
    \includegraphics[width=\columnwidth]{./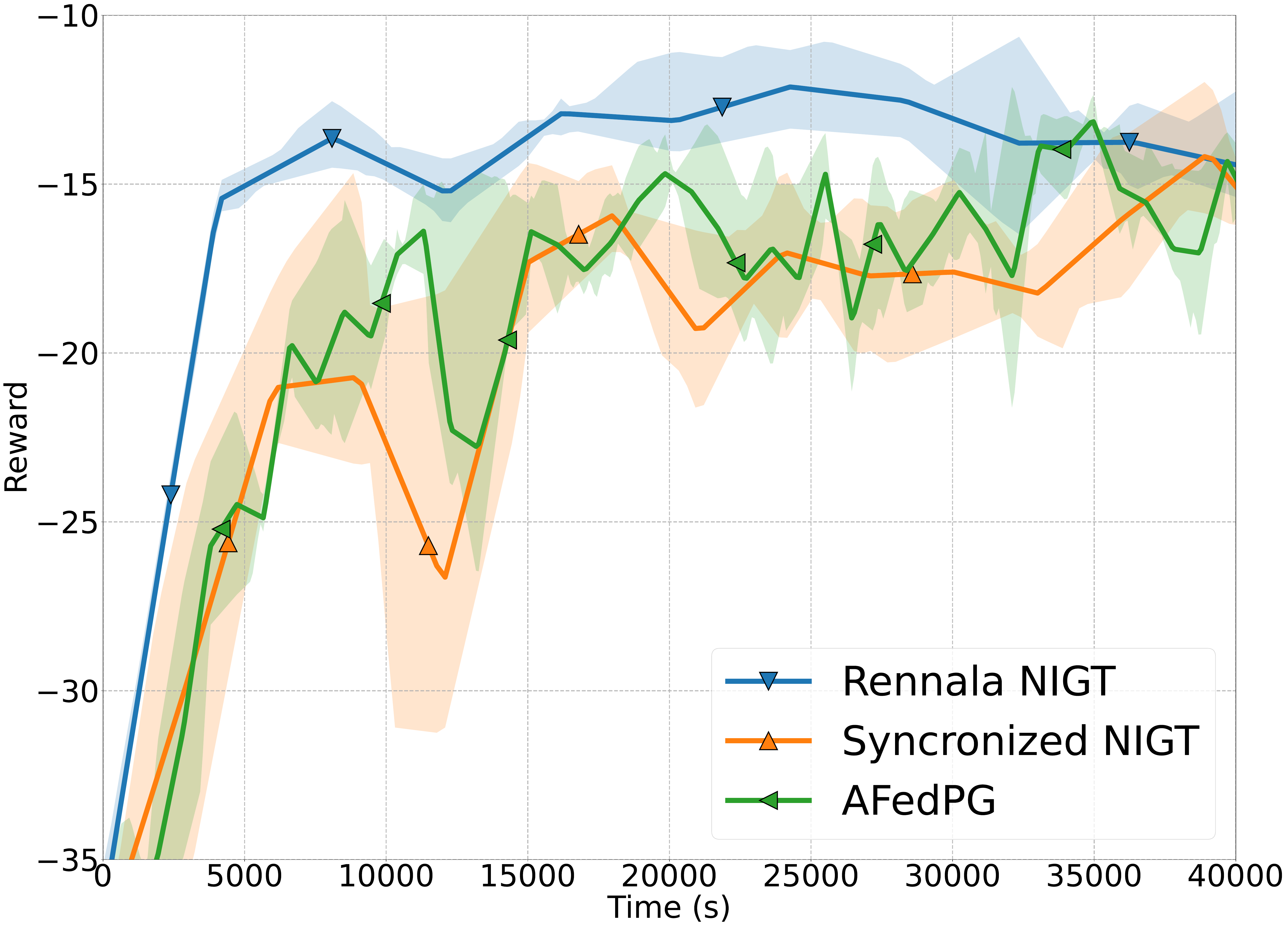}
    \caption{Environment: Reacher-v4}
\end{subfigure}
\hfill
\begin{subfigure}[t]{0.32\columnwidth}
    \centering
    \includegraphics[width=\columnwidth]{./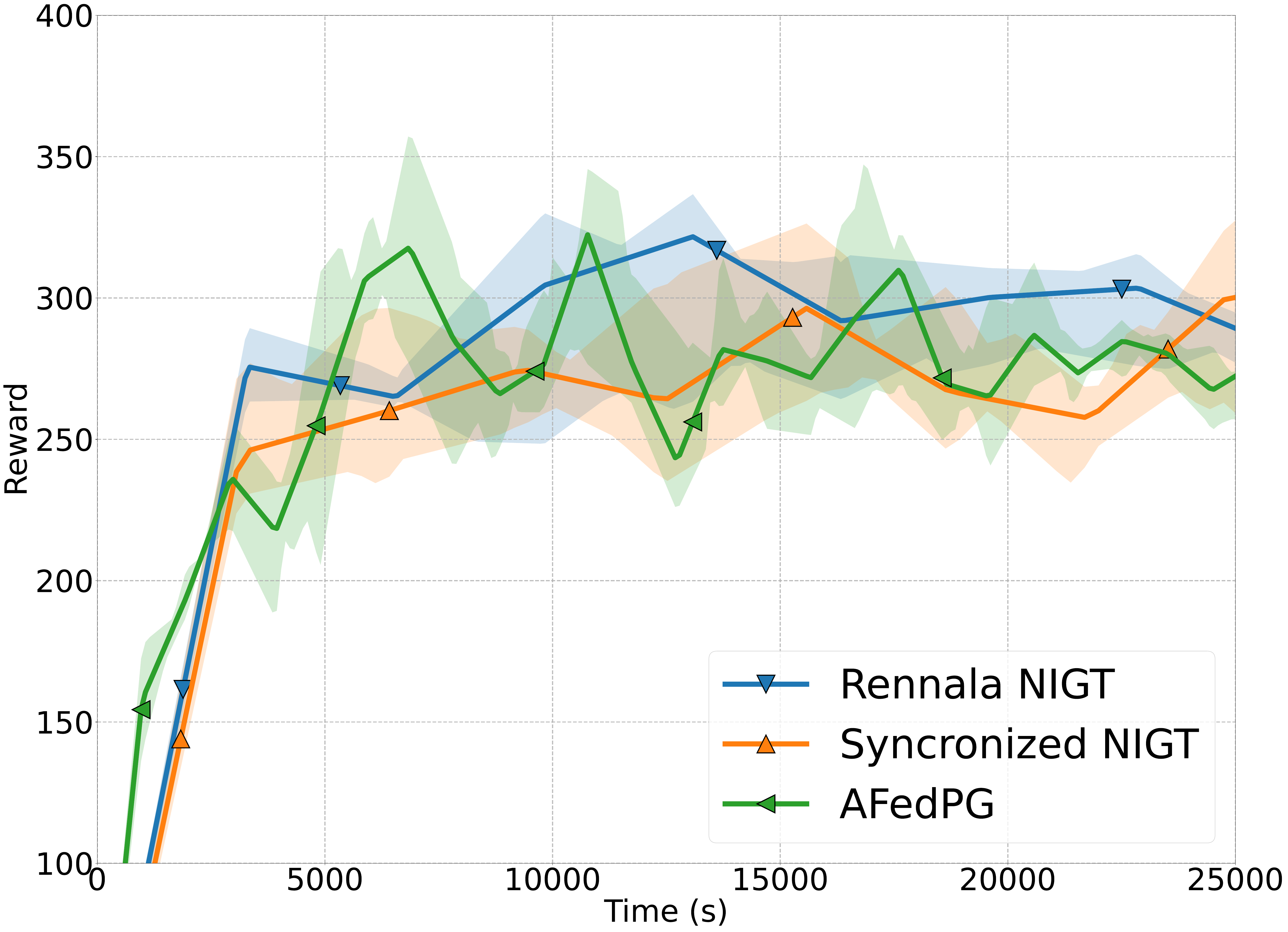}
    \caption{Environment: Walker2d-v4}
\end{subfigure}
\hfill
\begin{subfigure}[t]{0.32\columnwidth}
    \centering
    \includegraphics[width=\columnwidth]{./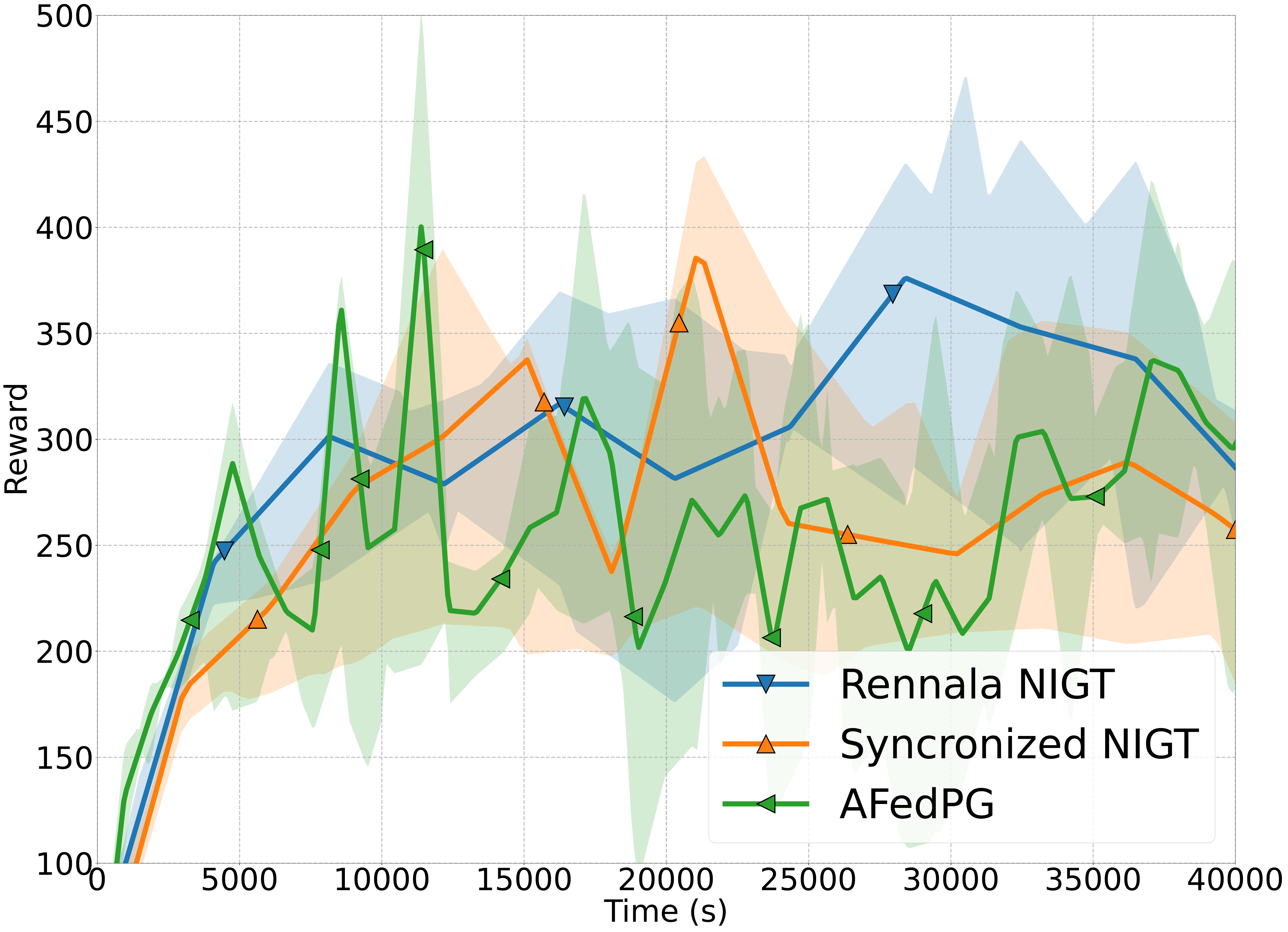}
    \caption{Environment: Hopper-v4}
\end{subfigure}
\caption{Experiments on MuJoCo tasks with $h_i = \sqrt{i}$, $\kappa_i = \sqrt{i} \times d^{1/4}$.}
\label{fig:ml_3}
\end{figure}

\subsection{Increasing number of agents and more scenarios}
We now verify whether \algname{Rennala NIGT} performs better as the number of agents increases to $n = 100.$ Moreover, we examine different computation scenarios to validate the robustness of \algname{Rennala NIGT}. Once again, in the equal-times case shown in Figure~\ref{fig:ml_5}, all algorithms scale with the number of agents and exhibit similar performance, which is expected.

Next, we consider four heterogeneous computation and communication scenarios:
i) In Figure~\ref{fig:mll_1}, communication is free while computation times are heterogeneous. We observe that \algname{Rennala NIGT} converges faster;
ii) In Figure~\ref{fig:mll_2}, communication times are equal to computation times. Here, \algname{Rennala NIGT} still achieves the best convergence rate;
iii) In Figure~\ref{fig:mll_3}, we increase the communication time and observe that the performance gap also increases: \algname{Rennala NIGT} is significantly faster, which aligns with our theoretical results (see Table~\ref{table:complexities});
iv) Finally, in Figure~\ref{fig:mll_4}, we examine the case where computation times decrease and find that \algname{Rennala NIGT} remains the fastest method.
\begin{figure}[h]
\centering
\begin{subfigure}[t]{0.48\columnwidth}
    \centering
    \includegraphics[width=\columnwidth]{./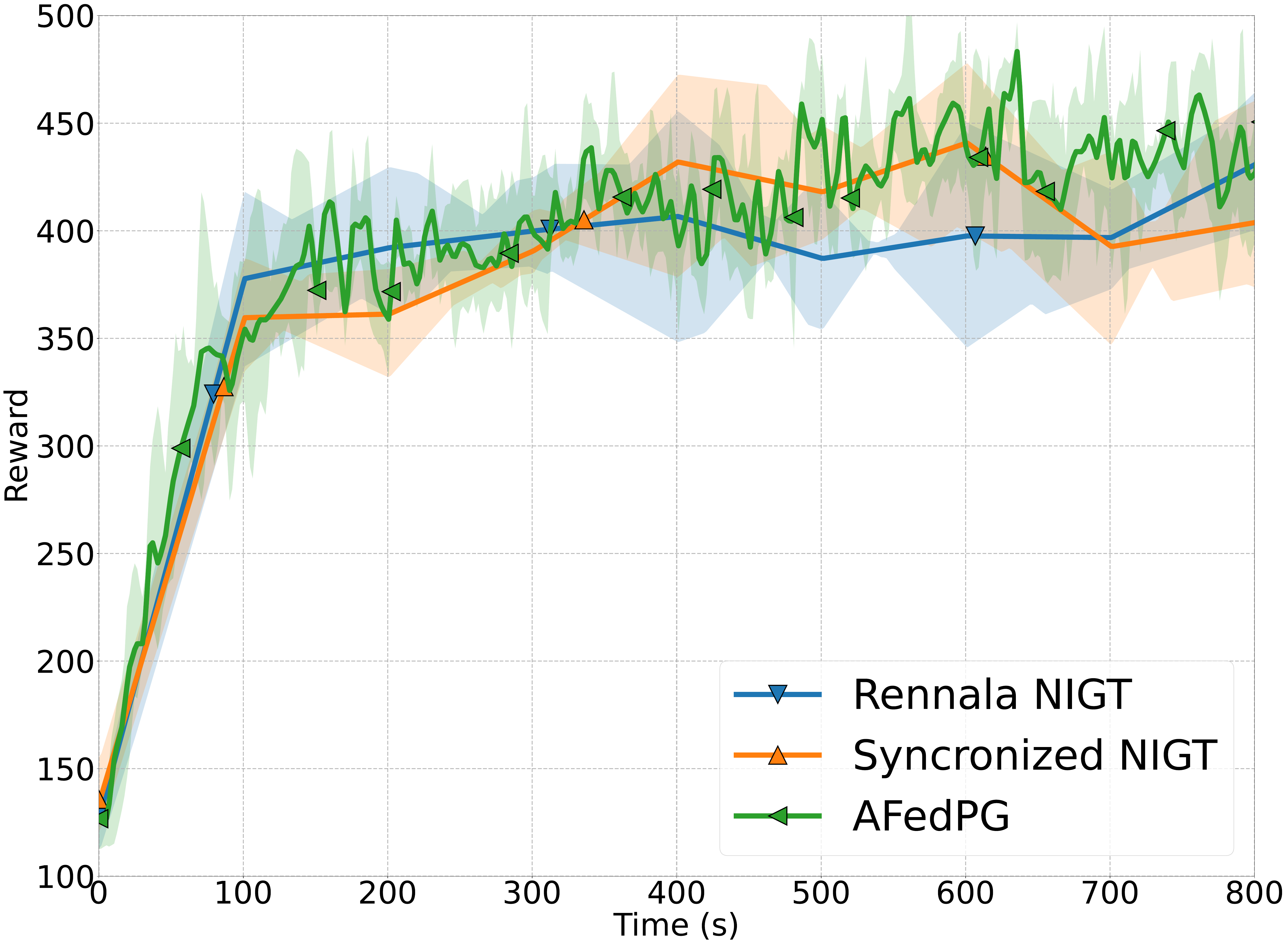}
    \caption{Environment: Humanoid-v4}
    \label{fig:ml_5:a}
\end{subfigure}
\hfill
\begin{subfigure}[t]{0.48\columnwidth}
    \centering
    \includegraphics[width=\columnwidth]{./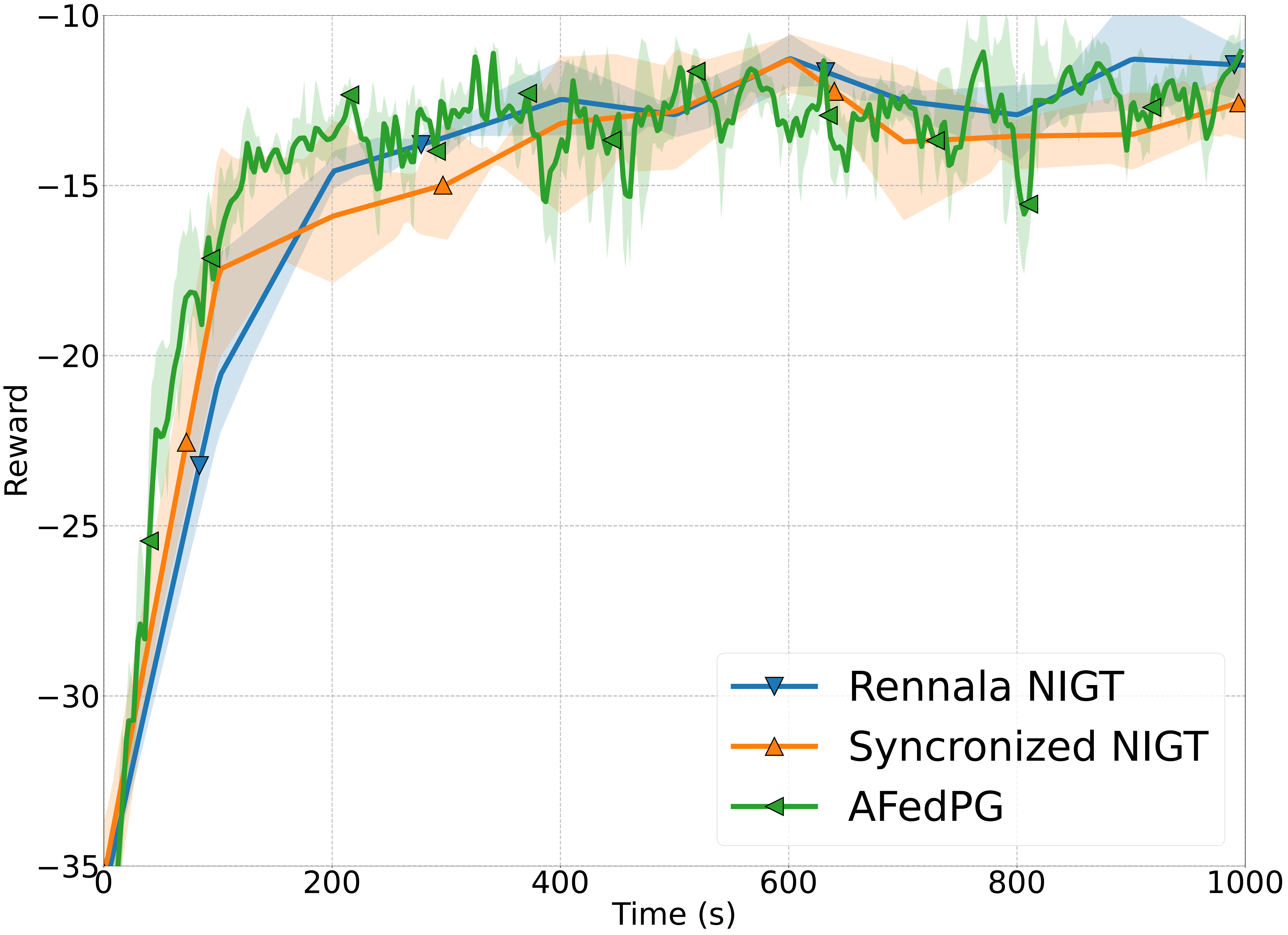}
    \caption{Environment: Reacher-v4}
\end{subfigure}
\caption{Experiments on MuJoCo tasks with $h_i = 1$, $\kappa_i = 0$ and $n = 100$ (similar rates since times are equal).}
\label{fig:ml_5}
\end{figure}
\begin{figure}[h]
\centering
\begin{subfigure}[t]{0.48\columnwidth}
    \centering
    \includegraphics[width=\columnwidth]{./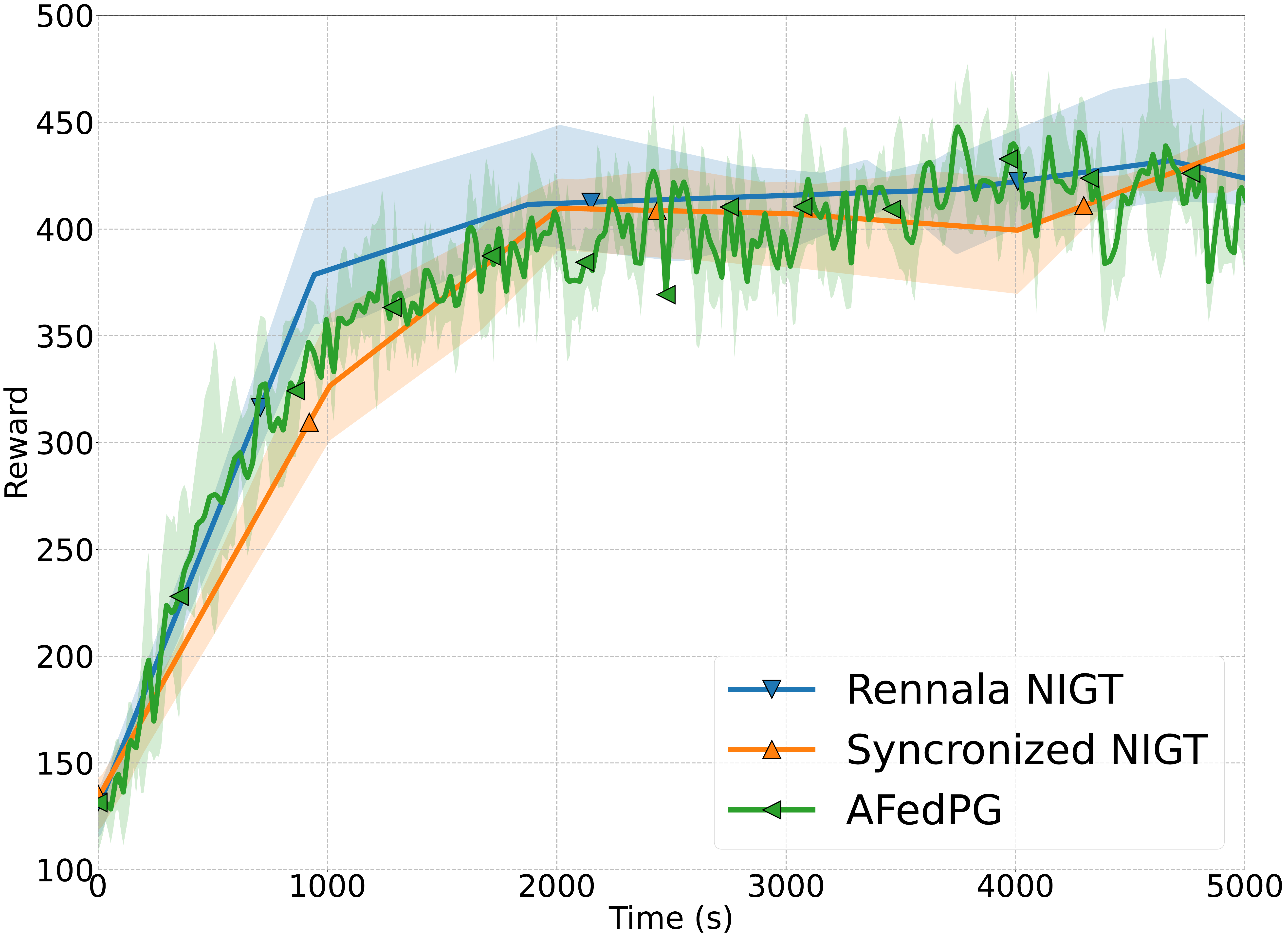}
    \caption{Environment: Humanoid-v4}
\end{subfigure}
\hfill
\begin{subfigure}[t]{0.48\columnwidth}
    \centering
    \includegraphics[width=\columnwidth]{./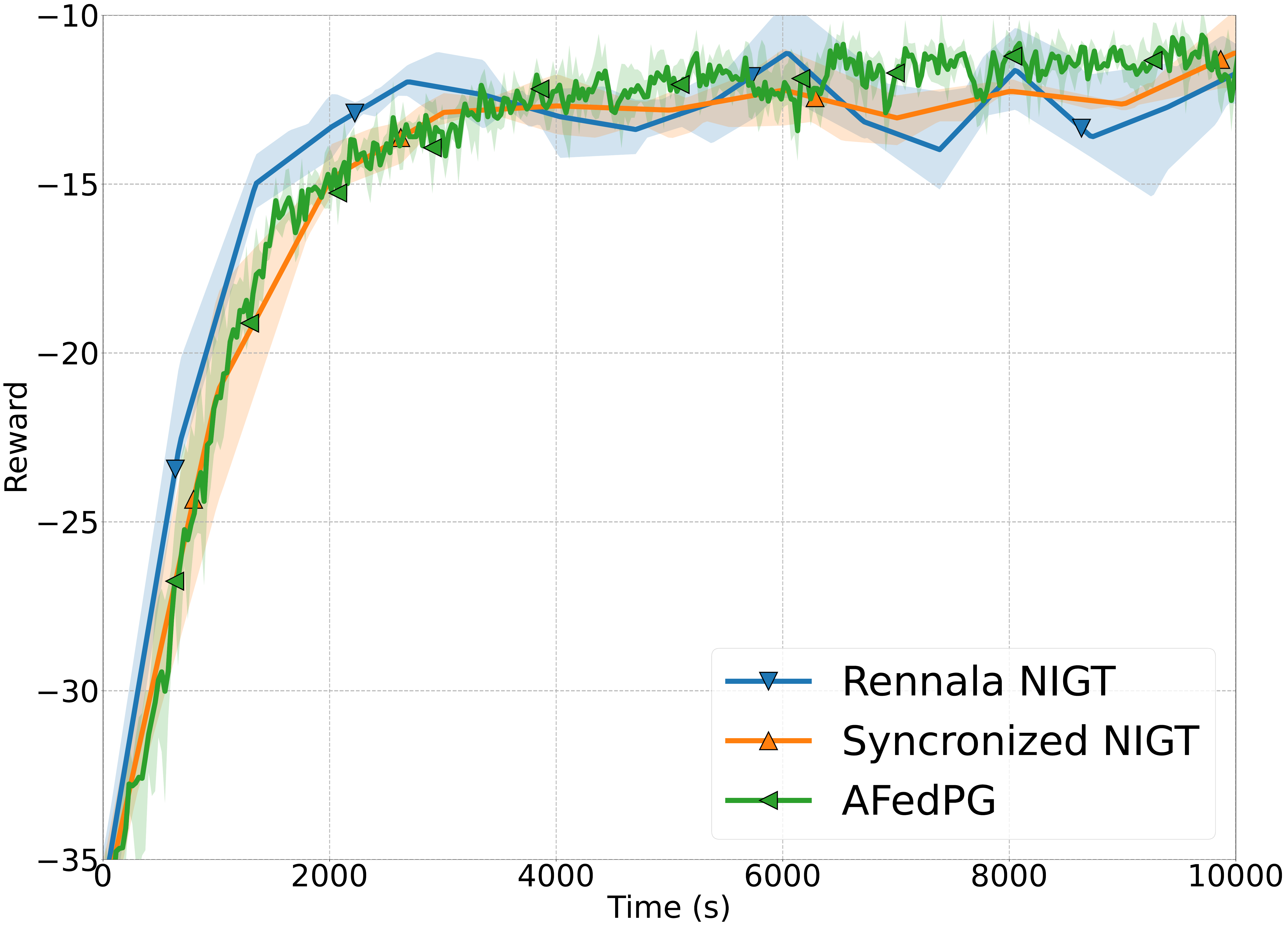}
    \caption{Environment: Reacher-v4}
\end{subfigure}
\caption{Experiments on MuJoCo tasks with $h_i = \sqrt{i}$, $\kappa_i = 0$ and $n = 100$.}
\label{fig:mll_1}
\end{figure}
\begin{figure}[h]
\centering
\begin{subfigure}[t]{0.48\columnwidth}
    \centering
    \includegraphics[width=\columnwidth]{./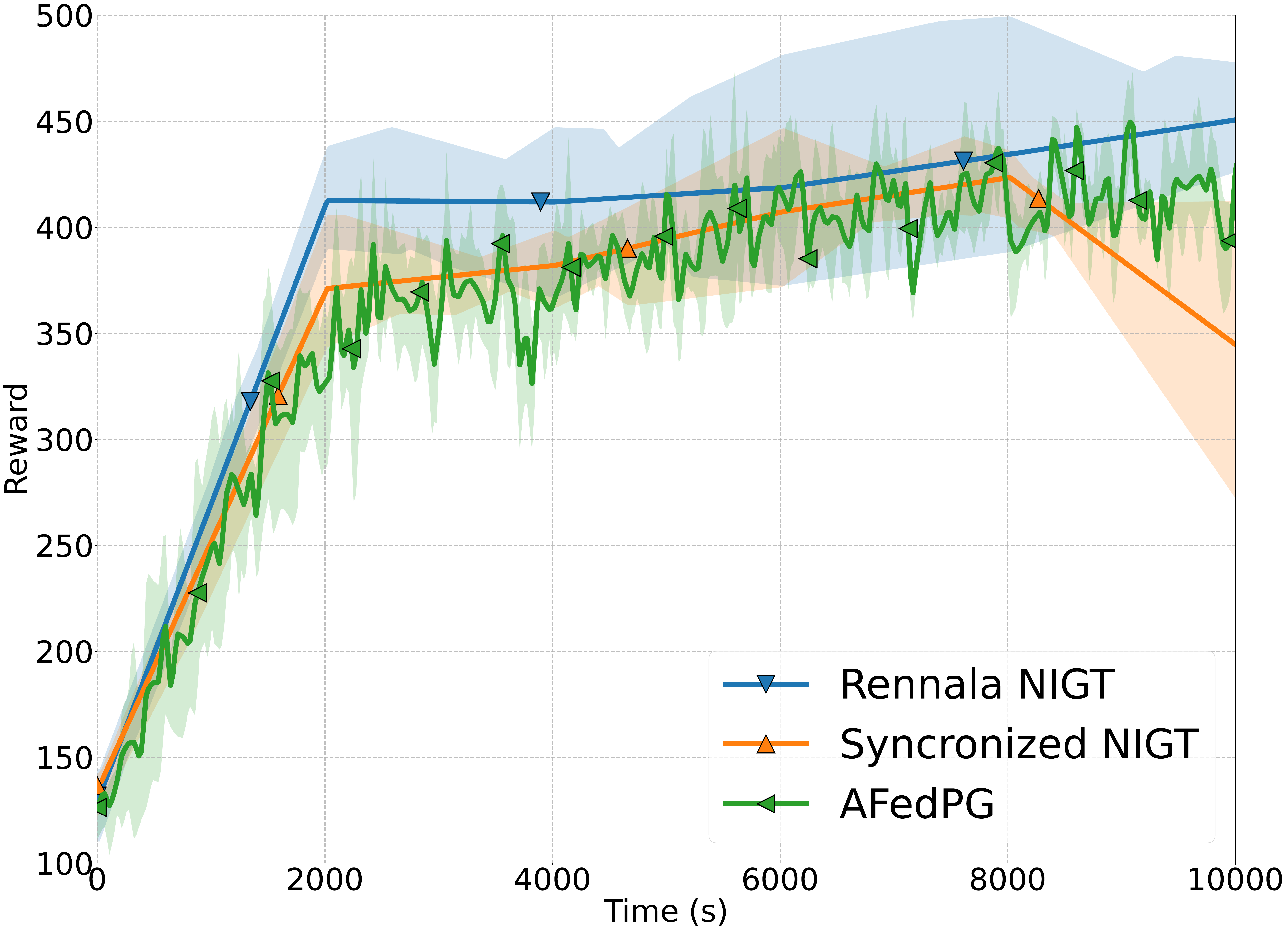}
    \caption{Environment: Humanoid-v4}
\end{subfigure}
\hfill
\begin{subfigure}[t]{0.48\columnwidth}
    \centering
    \includegraphics[width=\columnwidth]{./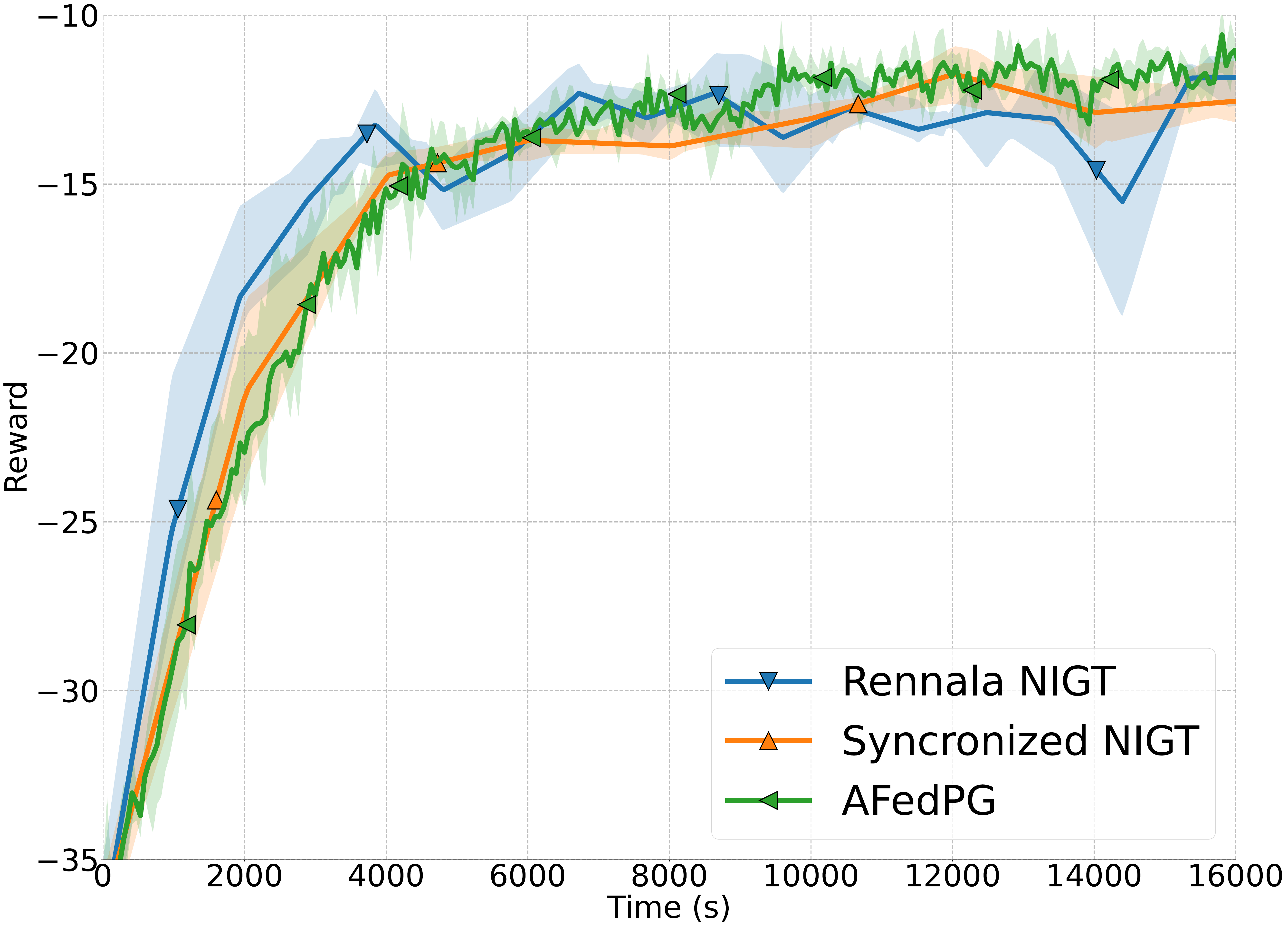}
    \caption{Environment: Reacher-v4}
\end{subfigure}
\caption{Experiments on MuJoCo tasks with $h_i = \sqrt{i}$, $\kappa_i = \sqrt{i}$ and $n = 100$.}
\label{fig:mll_2}
\end{figure}
\begin{figure}[h]
\centering
\begin{subfigure}[t]{0.48\columnwidth}
    \centering
    \includegraphics[width=\columnwidth]{./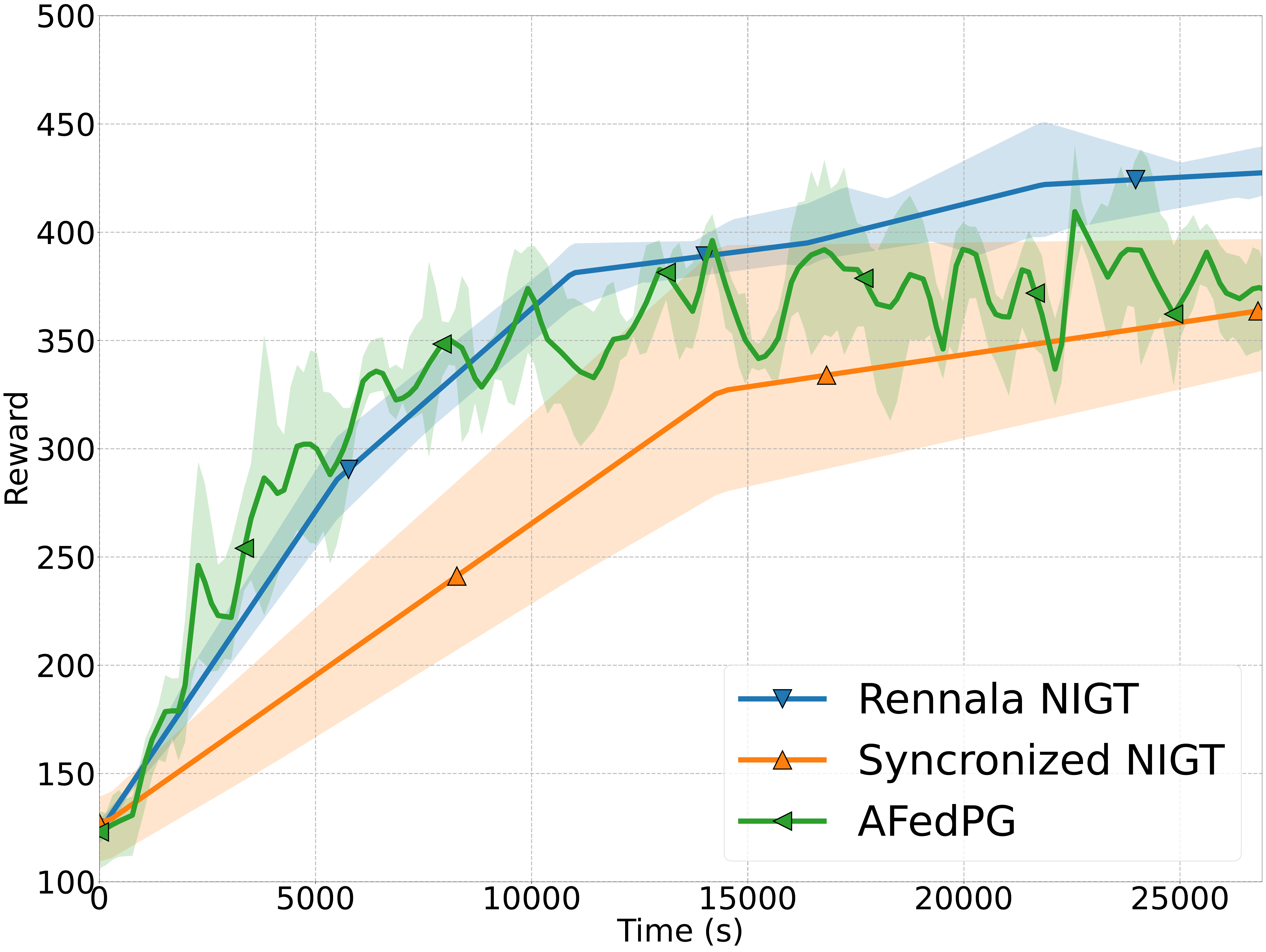}
    \caption{Environment: Humanoid-v4}
\end{subfigure}
\hfill
\begin{subfigure}[t]{0.48\columnwidth}
    \centering
    \includegraphics[width=\columnwidth]{./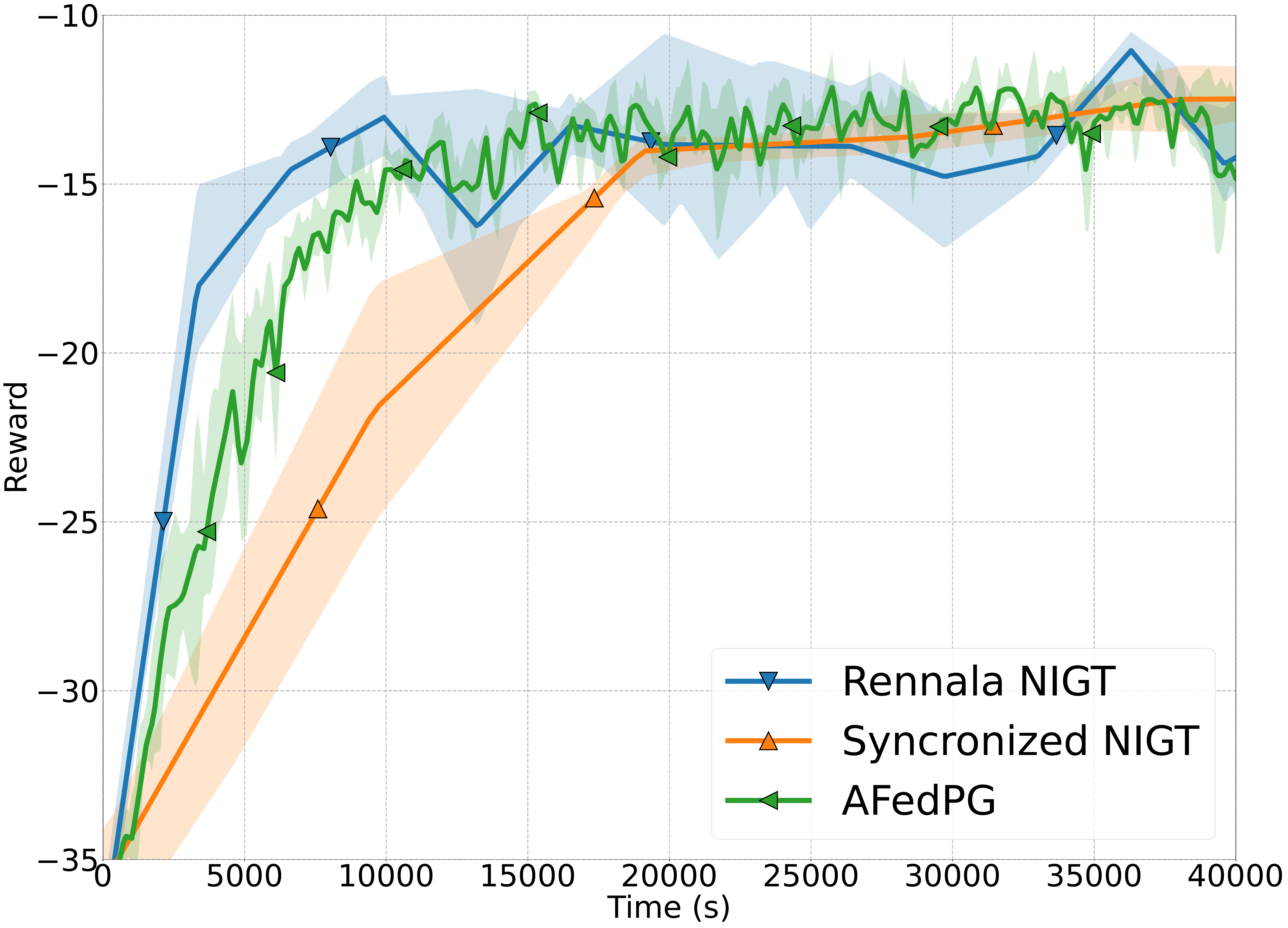}
    \caption{Environment: Reacher-v4}
\end{subfigure}
\caption{Experiments on MuJoCo tasks with $h_i = \sqrt{i}$, $\kappa_i = \sqrt{i} \times d^{1/4}$ and $n = 100$.}
\label{fig:mll_3}
\end{figure}
\begin{figure}[h]
\centering
\begin{subfigure}[t]{0.48\columnwidth}
    \centering
    \includegraphics[width=\columnwidth]{./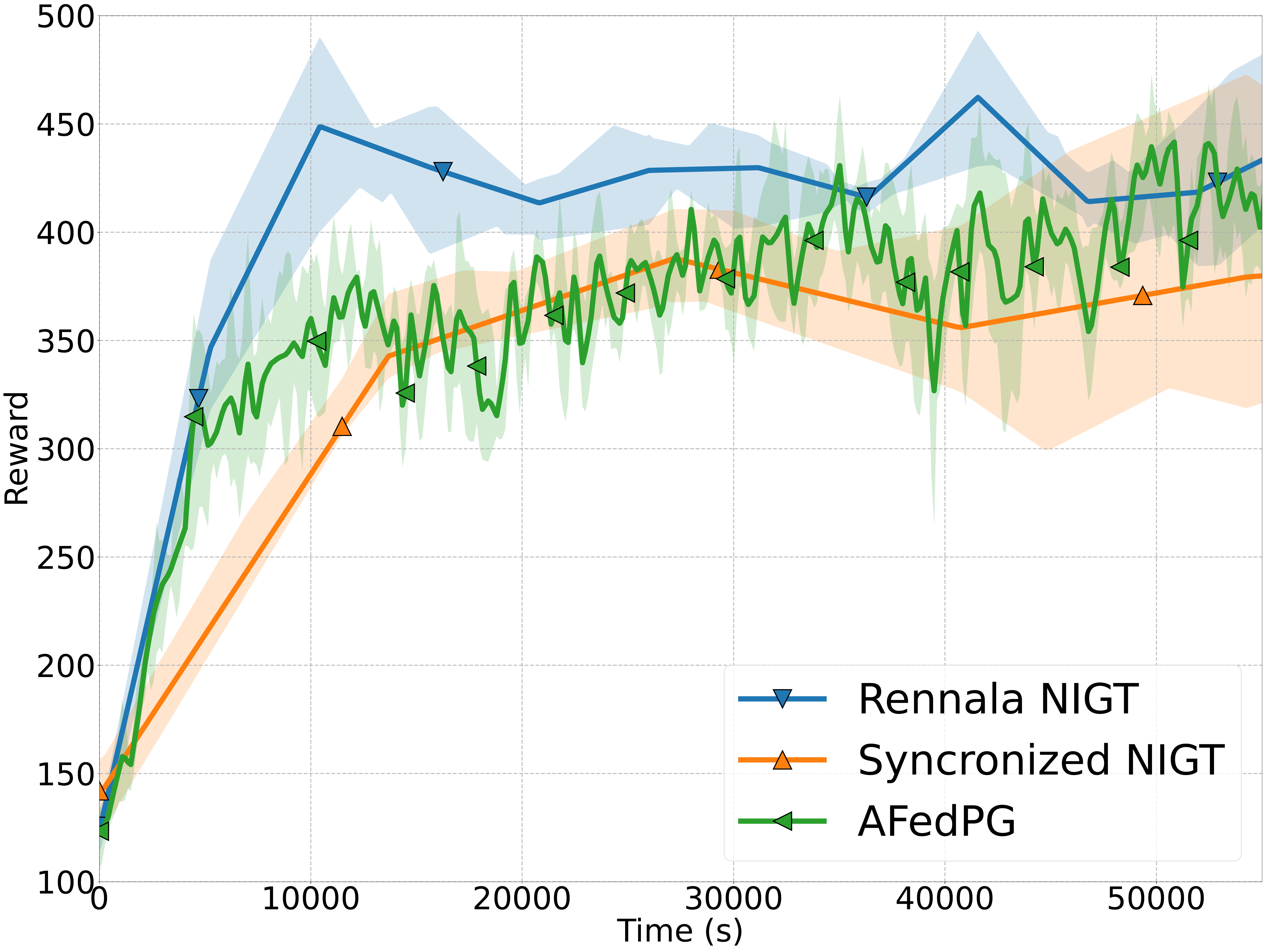}
    \caption{Environment: Humanoid-v4}
\end{subfigure}
\hfill
\begin{subfigure}[t]{0.48\columnwidth}
    \centering
    \includegraphics[width=\columnwidth]{./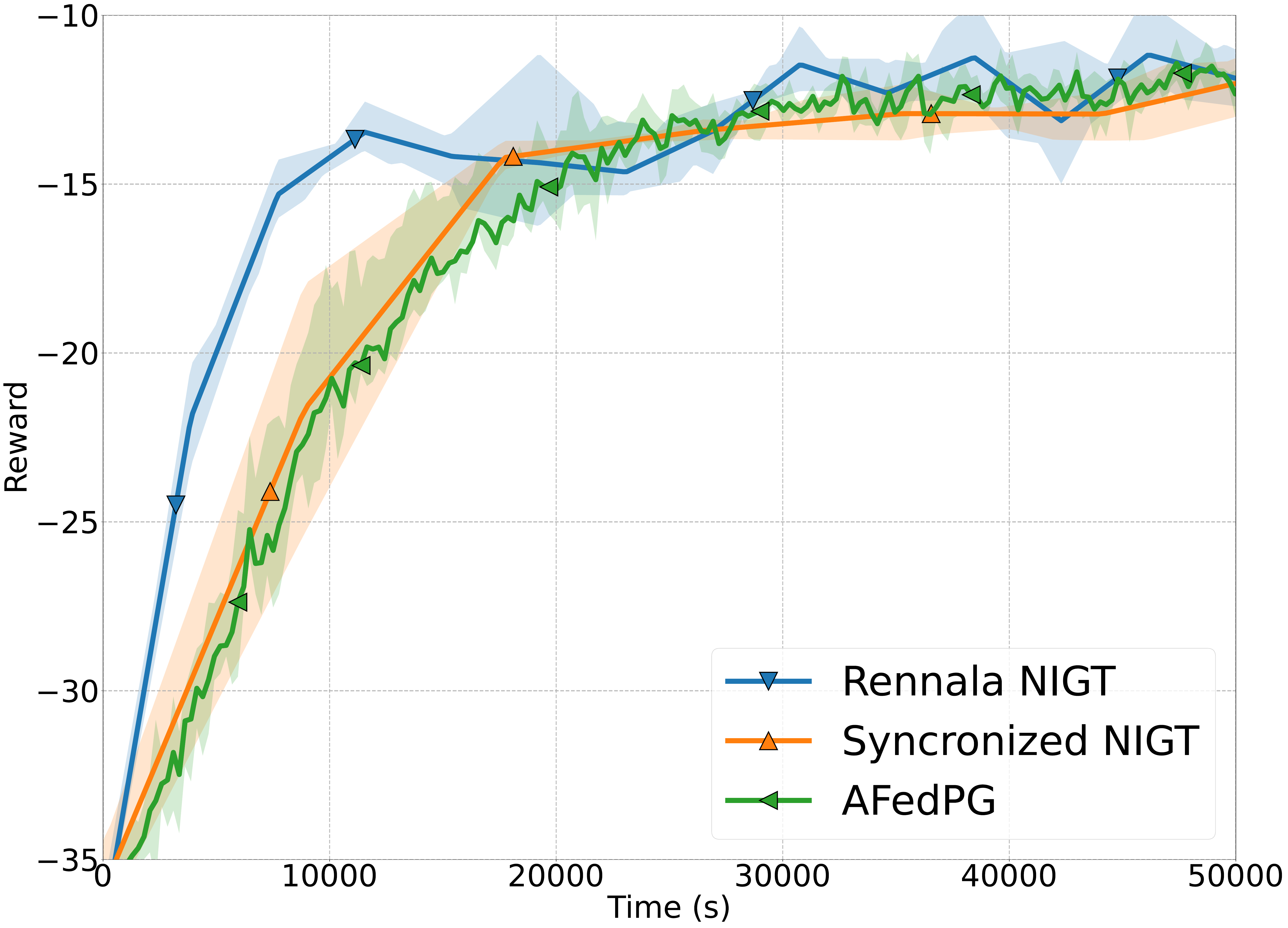}
    \caption{Environment: Reacher-v4}
\end{subfigure}
\caption{Experiments on MuJoCo tasks with $h_i = i^{1 / 4}$, $\kappa_i = \sqrt{i} \times d^{1/4}$ and $n = 100$.}
\label{fig:mll_4}
\end{figure}
\clearpage
\subsection{Heterogeneous setting}
\begin{figure}[h]
\centering
\includegraphics[width=0.5\columnwidth]{./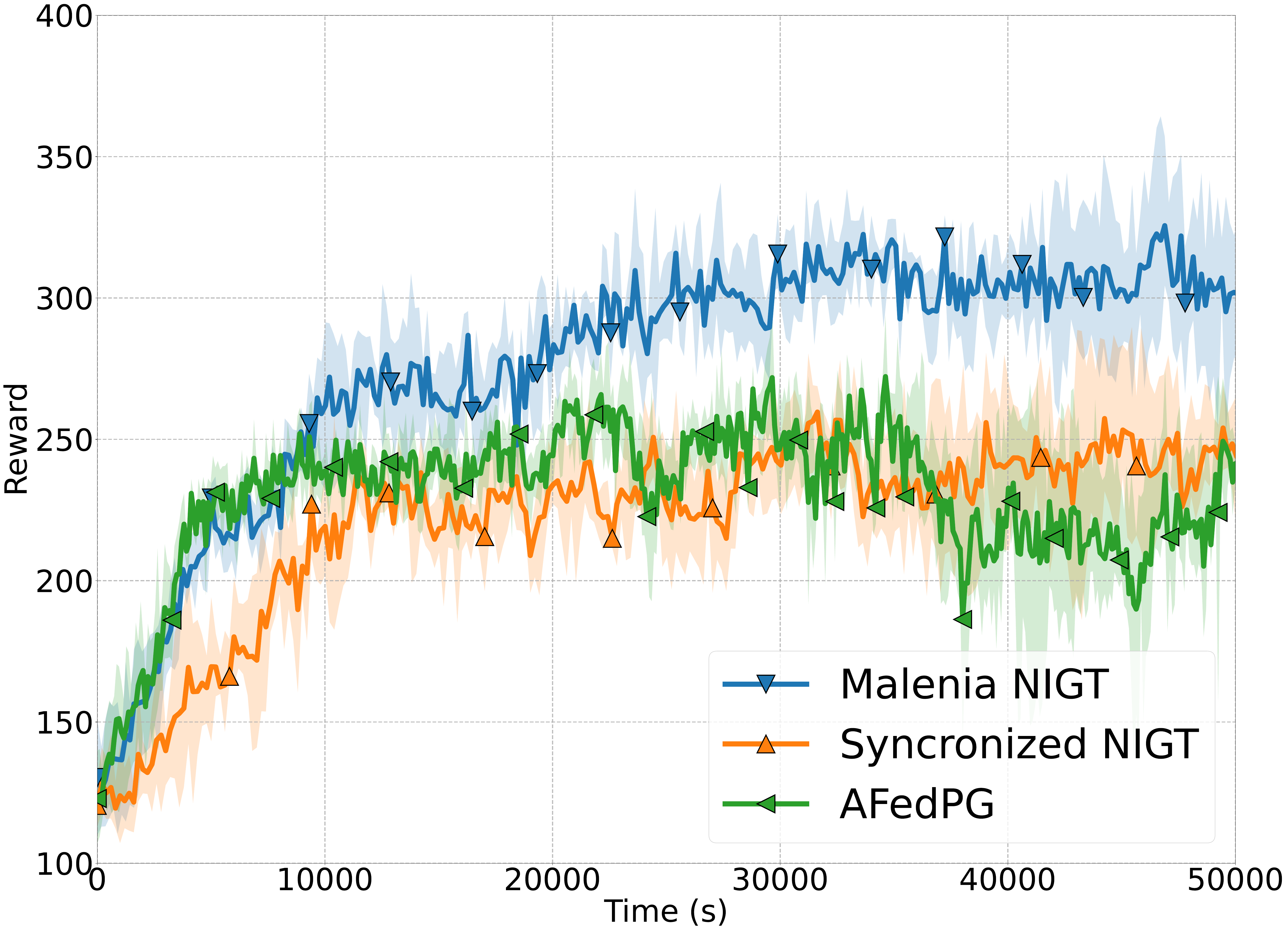}
\caption{Experiments on Humanoid-v4 in the heterogeneous setting.}
\label{fig:heter}
\end{figure}
In this section, we compare \algname{Malenia NIGT}, a method designed for heterogeneous setups. This section repeats the previous experimental setup with one important difference: the agents sample trajectories from different environments. We take two agents, $n = 2.$ The first agent has access to the standard Humanoid-v4 environment from MuJoCo. However, the second agent receives \emph{inverted} states. More formally, when the first worker performs action $a_t,$ the Humanoid-v4 environment returns state $s_{t+1},$ and instead of immediately receiving it, we concatenate the value $0$ to $s_{t+1},$ and finally redirect $(s_{t+1}, 0)$ to the first agent. In the case of the second worker, we redirect $(-s_{t+1}, 1),$ where we multiply the state by $-1$ to invert it. Thus, the second worker gets inverted states. We concatenate the values $0$ and $1$ to indicate the type of environment to the agents and to the model. This way, we can compare algorithms and analyze their capability to handle heterogeneous environments. We present the results for $n = 2$ workers with $h_0 = 1,$ $h_1 = 10,$ and no communication overhead, emulating the scenario where one worker is much faster.

In Figure~\ref{fig:heter}, we can see a large gap between \algname{Malenia NIGT} and \algname{AFedPG}: \algname{Malenia NIGT} achieves a much higher reward, as expected, since it provably supports both heterogeneous computations and heterogeneous environments (see Section~\ref{sec:main_heter} and Table~\ref{tbl:heter}).

\end{document}